\documentclass{article}
\usepackage{titletoc}
\usepackage[page,header]{appendix}
\usepackage{tcolorbox}
\usepackage{geometry}
\geometry{a4paper, scale=0.8}
\usepackage{microtype}
\usepackage{graphicx}
\usepackage{subfigure}
\usepackage{booktabs} 
\usepackage{algorithm}
\usepackage{algorithmic}
\usepackage{hyperref}




\usepackage{amsmath}
\usepackage{amssymb}
\usepackage{mathtools}
\usepackage{amsthm}


\theoremstyle{plain}
\newtheorem{theorem}{Theorem}[section]

\newtheorem{lemma}[theorem]{Lemma}
\newtheorem{corollary}[theorem]{Corollary}
\theoremstyle{definition}
\newtheorem{definition}[theorem]{Definition}

\theoremstyle{remark}
\newtheorem{remark}[theorem]{Remark}
\usepackage{comment}
\newtheorem{observation}[theorem]{Observation}
\newtheorem{fact}[theorem]{Fact}

\usepackage[textsize=tiny]{todonotes}
\usepackage{pdfcomment}

\usepackage{moreenum}
\usepackage{mathtools}
\usepackage{url}
\usepackage{bm}
\usepackage{enumitem}
\usepackage{graphicx}
\usepackage{listings}
\usepackage{color}
\usepackage{subcaption}
\usepackage{booktabs} 
\usepackage[mathcal]{eucal}
\usepackage{tikz}
\usepackage{dsfont}
\usepackage{bbm}
\usepackage{cancel}
\usepackage{physics}
\usepackage{authblk}
\usepackage{mathrsfs}
\usepackage[justification=centering]{caption} 
\usepackage{graphicx}
\usepackage{float}
\usepackage[export]{adjustbox} 
\usepackage{caption}
\usepackage{subcaption}
\usepackage{float}
\usepackage[export]{adjustbox} 
\usepackage{caption}
\usepackage{subcaption}



\DeclareMathAlphabet{\mathcal}{OMS}{cmsy}{m}{n}
\SetMathAlphabet{\mathcal}{bold}{OMS}{cmsy}{b}{n}

\newcommand{\T}{\mathbb{T}}

\renewcommand{\O}{\mathbb{O}}
\newcommand{\Z}{\mathbb{Z}}
\newcommand{\R}{\mathbb{R}}
\newcommand{\lt}{\ensuremath <}
\newcommand{\gt}{\ensuremath >}

\renewcommand{\r}{\boldsymbol{\rho}}


\renewcommand{\|}{\Bigg|}

\newcommand{\x}{\mathbf{x}}

\newcommand{\e}{\mathbf{e}}

\newcommand{\sS}{\mathscr{S}}
\newcommand{\sA}{\mathscr{A}} 
\newcommand{\sO}{\mathscr{O}} 

\newcommand{\sF}[1]{\mathscr{F}_{#1}}
\newcommand{\sT}[1]{\mathcal{T}_{#1}} 
\renewcommand{\S}{\mathbf{S}}
\newcommand{\A}{\mathbf{A}}
\renewcommand{\O}{\mathbf{O}} 
\newcommand{\F}{{\boldsymbol{F}}}
\newcommand{\f}{{f}}
\newcommand{\btau}{\boldsymbol{\tau}}

\newcommand{\bT}{\mathbb{T}}
\newcommand{\bO}{\mathbb{O}}

\DeclareMathOperator*{\argmax}{argmax}

\newcommand{\Max}[1]{\underset{ #1 }{\max \ }}

\newcommand{\Argmax}[1]{\underset{ #1 }{\argmax\ }}

\newcommand{\sgn}{\text{sgn}}
\newcommand{\Pm}[1]{\mathbb{P}_{\mathcal{P}}\left\{ #1 \right\}}
\renewcommand{\Pr}[1]{\mathbb{P}_{\mathcal{P}}^\pi\left\{ #1 \right\}} 
\newcommand{\diag}[1]{\text{diag}\left(#1\right)}

\newcommand{\muu}[1]{\vec{\mu}_{#1}}

\newcommand{\sign}[1]{\sgn{#1}}

\definecolor{ForestGreen}{RGB}{34,139,34}

\newcommand{\algcomshort}[1]{\textcolor{blue}{{\hfill $\triangleright$ { #1}}}}
\newcommand{\algcom}[1]{\textcolor{blue}{ #1}}


\usepackage[marginal]{footmisc}

\title{
Provably Efficient 
Partially Observable 
Risk-Sensitive
Reinforcement Learning
with Hindsight Observation
}

\author{
    Tonghe Zhang$^{1, *}$,\quad
    Yu Chen$^{2, *}$,\quad
    \text{Longbo Huang}$^{2, \dagger}$
    \\
    $^{1}$ Department of Electronic Engineering, Tsinghua University\\
    $^{2}$ Institute for Interdisciplinary Information Sciences, Tsinghua University\\
    \texttt{\{zhang-th21, chenyu23\}@mails.tsinghua.edu.cn},\\
    \texttt{longbohuang@tsinghua.edu.cn}
}

\date{}

\begin{document}

\maketitle

\renewcommand{\thefootnote}{\fnsymbol{footnote}}
\footnotetext[1]{These authors contributed equally.}
\footnotetext[2]{Corresponding author.}
\renewcommand*{\thefootnote}{\arabic{footnote}}

\begin{abstract}
This work pioneers regret analysis of risk-sensitive reinforcement learning in partially observable environments with hindsight observation, addressing a gap in theoretical exploration. We introduce a novel formulation that integrates hindsight observations into a Partially Observable Markov Decision Process (POMDP) framework, where the goal is to optimize accumulated reward under the entropic risk measure. We develop the first provably efficient RL algorithm tailored for this setting. We also prove by rigorous analysis that our algorithm achieves polynomial regret $\tilde{O}\left(\frac{e^{|{\gamma}|H}-1}{|{\gamma}|H}H^2\sqrt{KHS^2OA}\right)$, which outperforms or matches existing upper bounds when the model degenerates to risk-neutral or fully observable settings. We adopt the method of change-of-measure and develop a novel analytical tool of beta vectors to streamline mathematical derivations. These techniques are of particular interest to the theoretical study of reinforcement learning.
\end{abstract}

\section{Introduction}\label{section_intro_main}
Reinforcement learning (RL) is a sequential decision-making problem in which an agent learns to maximize accumulated rewards through interactions with an unknown environment \cite{sutton2018reinforcement}.
In many practical scenarios such as derivative hedging \cite{cao2020DRL_derivative_hedging} and actuarial science \cite{richman2021ai_actuarial}, decision-makers have to consider the associated risks, leading to the study of risk-sensitive RL \cite{Fei20}.

It is also a common practice to make costly decisions based on unreliable or incomplete information, such as in autonomous driving \cite{2019riskpomdp_autonomous_drive},
stock market prediction \cite{2022DRL_stock}, and cybersecurity \cite{vassilev2022risk}.
The Partially Observable Markov Decision Process (POMDP)\cite{MonahanSurvey1982} is widely employed as the mathematical framework for these problems. 

Empirical studies have been conducted on risk-sensitive POMDPs to address the planning or learning problem in various application scenarios \cite{2021autodrive-POMDP-risk,choi2021risk,qiu2021rmix, shen2023riskq}. However, these studies often lack a performance guarantee. On the other hand, theoretical studies primarily focus on demonstrating the existence of an optimal policy \cite{di1999risk} or addressing the planning problem with full knowledge of the transition probabilities \cite{Baras&Elliott94,baauerle2017partially}, but have yet to develop the sample complexity analysis. Consequently, an open theoretical question remains from prior research:

\textbf{Can we devise a sample-efficient and theoretically grounded risk-sensitive RL algorithm {in partially observable environments?}}

Obtaining a conclusive answer to this question is challenging due to several technical obstacles. Firstly, the complex structure of the POMDP becomes even more intricate when incorporating a non-linear risk measure, raising doubts about whether mathematical analysis alone can effectively simplify the problem. Secondly, partial observations pose challenges in learning the model with limited sample complexity and designing the exploration bonus with incomplete information.

In this work, we devise a novel algorithm that addresses these problems in a POMDP model equipped with hindsight observations. Due to the fact that reinforcement learning in general POMDPs is intractable \cite{Tsitsiklis1987,Jin2020}, we involve hindsight observations\cite{Lee2023hindsight}
in the learning protocol(an introduction is deferred to Section \ref{section_formulate_main}), so that efficient learning becomes possible. We also excavate the dynamic programming structure implicit in the risk-sensitive POMDP model and subsequently derive a fresh set of Bellman equations that matched our problem setting. Moreover, we uncover a simple representation of the value functions with explicit analytical forms, leading to the creation of a new exploration bonus that exploits the partial information gleaned from the environment while considering the agent's risk sensitivity. Our algorithm efficiently estimates the accumulated risks in all the possible hidden states and selects the optimal actions.

In addition, we provide theoretical guarantee for the algorithm design: the upper bound of the regret presented in Eq.~\eqref{regret_full_expression} is not only polynomial in all the parameters but also explains how the risk measure, partial observations and empirical estimator affects the learning efficiency of each component of the POMDP model. 
Moreover, when the model degenerates to risk-neutral or fully observable settings, our result improves or matches existing upper bounds and nearly reaches the lower bounds in these scenarios.

The contributions of this work are summarized as follows:\label{section_contribution_main}
\begin{itemize}
\item \emph{Formulation.}
We propose a novel theoretical formulation for risk-sensitive reinforcement learning in a partially observable environment with hindsight observations. We adopt the entropic risk measure in the framework, which accommodates general underlying POMDP models with non-stationary decision process. We also generalize certain results to arbitrary utility risk measures.
\item
\emph{Algorithm.}
We develop a new algorithm that incorporates a belief propagation process before the value iteration begins. Our design allows the agent to preemptively estimate the accumulated risks within the hidden states, before she optimizes the value functions through a greatly simplified Bellman equation. We also introduce a new bonus function that exploits partial information to encourage risk-sensitive exploration.
\item \emph{Regret.}
We provide the first regret analysis of the problem. By disregarding lower-order terms, our algorithm successfully attains the regret
$
{O}
\left(
\frac{
e^{\abs{\gamma}H}-1}{\abs{\gamma }H}
\cdot 
H^{2}\sqrt{KS^2OA }
\cdot \sqrt{H\ln {KHSOA}/{\delta}}
\right),
$
which demonstrates the risk awareness of the agent and the history-dependency. 
When the model degenerates to risk-neutral or fully observable settings, our regret improves or matches existing upper bounds and nearly reaches the lower bound of risk-sensitive RL.

\item
\emph{Techniques.}
We introduce a novel {analytical tool} called the beta vector, which plays a pivotal role in designing our bonus function, resulting in simplified value iteration and regret analysis.
We also adopt the change-of-measure technique, which decouples the state and observations to streamline analytical derivations. 
\end{itemize}

\section{Related Work}\label{section_relate_main}
Due to space limits, we only discuss the most relevant works below. A thorough overview is provided in Appendix~\ref{section_relate}.

\textbf{Risk-sensitive RL.} 
Our analysis draws inspiration from studies about risk-sensitive RL. For instance, \cite{Fei21improve} introduces a new bonus to improve the regret bound of learning an MDP using entropic risk. Additionally, \cite{Lipschitz23} uses the concept of Lipschitz continuity to linearize various risk measures. These works contribute to our technical toolkit.

\textbf{POMDP.} It is well-known that planning or learning a general POMDP is intractable \cite{Tsitsiklis1987,Jin2020}. Consequently, a body of research in partially observable RL restricts their attention to sub-classes of POMDP with structural assumptions\cite{Liu2022,Sun2022PSR}. These studies assume the emission process reveals enough information for the agent to decode the hidden states, such as \cite{Liu2022,Yang2022POMDPfa,Golowich2022-b}. However, their regret bounds become vacuous when the assumptions are not satisfied. To make our formulation more pragmatic, we do not follow this direction. Our algorithm accommodates POMDPs with general underlying models.

\paragraph{Risk-Sensitive POMDP.}
Our theoretical framework builds on prior research such as \cite{Baras&Elliott94,baauerle2017partially}.
However, our study significantly diverges from the predecessors
because we do not presuppose the transition and emission matrices are time-invariant and fully known. Our agent learns a non-stationary model through online interactions, focusing on the exploration-exploitation trade-off typical of reinforcement learning.

\paragraph{Notations} In this study, we denote the set $\{1, \ldots, n\}$ by $[n]$ for any positive integer $n$, and we use
$\Delta(\mathcal{X})$ to represent the probability simplex over the finite space $\mathcal{X}$.
We use the notations $\gtrsim$ and $\tilde{O}$ to hide constants and logarithmic terms in the expressions, respectively.
$\gamma^-$ represents $\min\{\gamma, 0\}$ and $\gamma^+$ stands for $\max\{\gamma, 0\}$.
The symbol $\iota$ is shorthand for $\ln \left(\frac{KHSOA}{\delta}\right)$, and $N \vee 1$ denotes $\max\{N, 1\}$. 
Random variables are presented in bold, while their realizations are in roman.
For random vectors $\boldsymbol{v_A}$ and $\boldsymbol{v_B}$, $\boldsymbol{v_A} \setminus \boldsymbol{v_B}$ signifies the subvector of $\boldsymbol{v_A}$ where the components from $\boldsymbol{v_B}$ are excluded.

\section{Problem Formulation}\label{section_formulate_main}

\paragraph{The POMDP Model}\label{section_pomdp_formulate}
We consider a tabular, episodic and finite horizon POMDP with non-stationary transition matrices\cite{MonahanSurvey1982,Liu2022}. The underlying model of the POMDP can be specified by a tuple $\mathcal{P}=(\mathscr{S,O,A}; \mu_1, \mathbb{T}, \mathbb{O};K, H, r)$, where $\mathscr{S}$, $\mathscr{O}$ and $\mathscr{A}$ are the spaces of the hidden states, observations, and actions with cardinality $S, O$, and $A$ respectively. 
The agent plays with the model in $K$ episodes and each episode contains $H$ steps. $\mu_1\in \Delta(\mathscr{S})$ is the {prior distribution of the hidden states} which can be represented as an $S$-dimensional vector $\vec{\mu}_1$.
$\mathbb{T}=\{\mathbb{T}_{h,a} \in \R^{S\times S} \mid {(h,a) \in[H] \times \mathscr{A}}\}$ and $\mathbb{O}=\{\mathbb{O}_h \in \R^{O\times S }\mid h \in[H] \}$ are the transition and emission matrices respectively. 
If the environment is in state $s$ at step $h$, then $\T_{h,a}(\cdot|s)$ represents the distribution of the next hidden state when the agent takes action $a$, while  $\mathbb{O}_h(\cdot|s) $ is the distribution of the observations generated by the current hidden state. $r=\{\ r_h(\cdot,\cdot):\  \mathscr{S} \times \mathscr{A} \to [0,1] \mid   h\in [H]\ \}$ is the collection of reward functions that measure the performance of actions in each hidden state.

In each episode, the initial state $\boldsymbol{S}_1$ is sampled from  $\mu_1(\cdot)$. For all steps $h \in [H]$, the agent decides an action $\boldsymbol{A_h}$ based on previous observations and receives a reward $r_h(s_h,a_h)$. The environment then transits to a new hidden state $\boldsymbol{S}_{h+1}\sim \mathbb{T}_{h,a_h}(\cdot|s_h)$ and emits an observation $\boldsymbol{O}_{h+1} \sim \mathbb{O}_{h+1}(\cdot|s_{h+1})$, after which the a new action will be taken. In a POMDP, the agent never detects the hidden states, so she makes decisions according to the observable history
$\boldsymbol{F}_h:=(\boldsymbol{A_1,O_2,\cdots, O_{h-1},A_{h-1},O_h})\in \sF{h}$, from which her policy $\pi_h(\cdot)$ maps to the action space.
\footnote{We use the notation $\boldsymbol{F}$ because the histories constitute a filter process.}
The POMDP evolves until the last state $\boldsymbol{S}_{H+1}$, before a new episode begins.

\paragraph{Reinforcement Learning with Hindsight Observation}\label{condition_hind_sight_main}
In RL, the agent plans under empirical models $\{ \mathcal{\widehat{P}}^k \}_{k=1}^K$ learned from data samples of the history. The corresponding empirical distributions will be denoted as $\widehat{\mu}_1^k, \mathbb{\widehat{T}}^k$, and $\mathbb{\widehat{O}}^k$. 
In this work, we incorporate \emph{hindsight observation} \cite{Lee2023hindsight} in the interaction protocol of the POMDP: in test time, the agent is allowed to review the hidden states that occurred in the last $H$ steps at the end of each episode.
The concept of hindsight observability in RL is proposed by \cite{Lee2023hindsight} and echoed by \cite{sinclair2023hindsight,shi2023theoretical,guo2023sample}.
According to \cite{Lee2023hindsight}, hindsight observations are common in real-world applications of the POMDP, such as sim-to-real robotics\cite{pinto2017asymmetric,chen2020learning}, data center scheduling\cite{sinclair2023hindsight} and online imitation learning\cite{ross2011reduction}.\footnote{{Please refer to Section 3.2 and 5 in \cite{Lee2023hindsight} for more examples.}}
This setting also makes efficient learning possible.

\paragraph{Reinforcement Learning using Entropic Risk Measure}
In risk-sensitive RL, the agent seeks an \emph{optimal policy} $\pi^\star$ that maximizes the following \emph{optimization objective}, which is the entropic risk measure of accumulated reward:
\begin{equation}\label{optim_obj_main}
   \begin{aligned}
   J(\pi;\mathcal{P}, \gamma) :=
   \frac{1}{\gamma} \ln 
   \mathbb{E}_{\mathcal{P}}^{\pi}
   \left[e^{\gamma \sum_{t=1}^H r_t(\boldsymbol{S_t,A_t})}\right]
   \end{aligned}
\end{equation}
where $\mathcal{P}$ is the POMDP model and $\gamma \ne 0$ is 
the given parameter of risk-sensitivity.
\begin{remark}
    The agent is risk-seeking when $\gamma \gt 0$ while risk-averse when $\gamma \lt 0$ \cite{baauerle_Rieder14}. Other studies such as \cite{Fei20, Lipschitz23} also 
    adopt the entropic risk in the objective function.
\end{remark}

\emph{Learning Objective} 
In this study, we aim to devise an algorithm whose output policies $\{\widehat{\pi}^k\}_{k=1}^K$ minimizes the difference with $\pi^\star$, which will be measured by the \emph{``regret''} defined below:
$$\operatorname{Regret}(K;\mathcal{P},\gamma)
   :=
   \sum_{k=1}^K
   J(\pi^\star;\mathcal{P},\gamma)
   -
   J(\widehat{\pi}^k;\mathcal{P},\gamma)
$$
We are also concerned with the \emph{sample complexity} of the algorithm, which is the smallest episode number $K$ that ensures $\frac{1}{K}\sum_{k=1}^K J(\widehat{\pi}^k,\mathcal{P}, \gamma) \geq J(\pi^\star,\mathcal{P}, \gamma) -\epsilon$ with probability at least $1-\delta$.

\section{Value Function and the Bellman Equations}\label{section_Bellman_eqs_main}
In the much more complex setting of POMDP, we need to design special value functions to simplify the intensive computation caused by the history dependency in the policies.
However, as is shown in Remark~\ref{remark_alpha-beta-comparison} in the appendix, a naive adaptation of the value functions in the POMDP literature \cite{MonahanSurvey1982} fails to capture the nonlinear structure of a risk-POMDP. 
In this work, we introduce a new definition for value functions based on the studies of \cite{baauerle2017partially,Baras&Elliott94}, which not only simplifies the analysis but also helps us derive a new set of Bellman equations tailored to our problem. We will use these concepts in our algorithm presented in Section~\ref{section_algo_design_main}.
\paragraph{Change of Measure}\label{change_of_measure_main}
To simplify the mathematical analysis, we adopt the technique called ``change of measure'' and investigate the risk-POMDP problem in a simpler model $\mathcal{P}^\prime$ (obtained via transformation), in which the observations $\boldsymbol{O}_t$ and hidden states $\boldsymbol{S_t}$ are independent.

The technique of change-of-measure originates from stochastic calculus \cite{oksendal2013StochasticCalculus} and is vastly adopted in derivative pricing \cite{baxter1996finCalculus} and filtering theory \cite{jazwinski2007StochasticFilteringTheory}. In the study of POMDP, this method is used to decouple the transition and emission processes\cite{Baras97,structural97}, which will also facilitate the statistical complexity analysis using hindsight observations.

In this work, we refer to $\mathcal{P}^\prime$ as the ``reference model'', whose rigorous definition is presented in Appendix~\ref{section_change_of_measure}.
The following analysis comes from the studies of \cite{Baras&Elliott94,Cavazos2005}.
The relationship between $\mathcal{P}^\prime$ and the original POMDP can be 
described by their Radon-Nikodym derivative:
\footnote{Please refer to Appendix~\ref{rn_derivative} for a formal definition.}
\begin{equation*}
\begin{aligned}
\boldsymbol{D_h}
:=\frac{d\mathbb{P}_{\mathcal{P}}^\pi}{d\mathbb{P}_{\mathcal{P}^\prime}^\pi}
=\prod_{t=2}^h \frac{\bO_t(\O_t|\S_t)}{\bO^\prime(\O_t)}
\end{aligned}
\end{equation*}
In the reference model, the observations are irrelevant to the hidden states. As a result, they are separate from the underlying process and independent of the history. 
To further simplify the model, we can specify $\mathbb{O}^\prime(\cdot)$ as the uniform distribution, so that $\boldsymbol{O}_h\overset{i.i.d.}{\sim} \text{Unif}\sO$ in the model $\mathcal{P}^\prime$.\footnote{Other distributions are also suitable for $\mathbb{O}^\prime$. Please refer to Appendix~\ref{rn_derivative}.}
Consequently, we can significantly simplify the posterior distribution of $\mathbf{O}_t$ in model $\mathcal{P}^\prime$, which not only reduces space consumption dramatically but also reduces a series of analytical computations. After planning in $\mathcal{P}^\prime$, we will use $\boldsymbol{D_h}$ as a bridge to convert the results back to the environment $\mathcal{P}$, according to the following rule derived from the Lebesgue-Radon-Nikodyn theorem
\cite{rudin1987real_analysis}: 
$$
\mathbb{E}_{\mathcal{P}}^\pi = \mathbb{E}_{\mathcal{P}^\prime}^\pi \boldsymbol{D_h}
$$
We define our value functions in the model $\mathcal{P}^\prime$.
\begin{definition}(Value functions)\label{def_value_short}
\begin{equation}\label{eq_value_short}
\begin{aligned}
\mathsf{V}_{h}^\pi(\boldsymbol{F}_h):=&
\frac{1}{\gamma}\ln\mathbb{E}_{\mathcal{P}^\prime}^\pi\left[
e^{\gamma \sum_{t=1}^H r_t(\boldsymbol{S}_t, \boldsymbol{A}_t)}\bigg| \boldsymbol{F}_h\right]
\\
\mathsf{Q}_h^\pi(\boldsymbol{F}_h,\boldsymbol{A}_h):=&
\frac{1}{\gamma}\ln\mathbb{E}_{\mathcal{P}^\prime}^\pi\left[e^{\gamma \sum_{t=1}^H r_t(\boldsymbol{S}_t, \boldsymbol{A}_t) }\bigg| \boldsymbol{F}_h, \boldsymbol{A}_h\right]
\end{aligned}
\end{equation}
\end{definition}
In a POMDP, the policy passes the history-dependency down to the value functions, whose variables contain $\boldsymbol{F_h}$ but not $\boldsymbol{S_h}$, since the latter is not even observable. We can also derive the Bellman equation for our problem:
\begin{definition}(Bellman equations)\label{Bell_eqs_short}
    \begin{equation}
        \begin{aligned}\label{bellman_Eq_short}
        &
        \left\{
            \begin{aligned}
                \mathsf{V}_{H+1}^\pi(f_{H+1})=&
                \frac{1}{\gamma}\ln\mathbb{E}_{\mathcal{P}^\prime}^\pi\left[
                e^{\gamma \sum_{t=1}^H r_t(\boldsymbol{S}_t, \boldsymbol{A}_t)}\bigg| f_{H+1}\right]
                \\
                \mathsf{Q}_h^\pi(f_h,a_h)
                =&
                \frac{1}{\gamma}\ln
                \mathbb{E}
                _{\O_{h+1}\sim \text{Unif}\sO 
                }
                \left[
                e^{\gamma \mathsf{V}_{h+1}^\pi
                (f_h,a_h,\mathbf{O}_{h+1})}
                \right]
                \\
                \mathsf{V}_h^{\pi}(f_h)
                =&
                \mathbb{E}_{\mathbf{A_h}\sim \pi_h(\cdot|f_h)}
                {
                \mathsf{Q}_{h}^\pi(f_h,\mathbf{A}_h)}
            \end{aligned}
        \right.\end{aligned}
    \end{equation}
\end{definition}
The Bellman equations in Eq.~\eqref{bellman_Eq_short} are novel in the literature. Specifically, the value function $\mathsf{V}_{H+1}^{\pi}$ is not zero, which necessitates a belief propagation in the algorithm to initiate the value iteration. Moreover, the computation of Q function is greatly simplified since it is defined in the reference model. For the derivation of Eq.~\eqref{bellman_Eq_short} and a generalization to arbitrary utility functions, please refer to Appendix\ref{section_Value functions, Q-functions and Bellman equations}.
\section{Algorithm Design}\label{section_algo_design_main}
\begin{algorithm}[tb]
   \caption{\text{Beta Vector Value Iteration (BVVI)}}
    \label{alg:BVVI_short}
    \begin{algorithmic}[1]
        \STATE {\bfseries Input}
        K, H, risk level $\gamma \ne 0$, confidence $\delta\in(0,1)$
        \STATE {\bfseries Initialize}
        $\widehat{\mu}^1_1(\cdot), {\widehat{\bT}_{h,a}^1}(\cdot|s)  \gets  \text{Unif}({\sS})$
        \STATE {\bfseries Initialize}
        $\hphantom{\widehat{\mu}^1_1(\cdot), {\widehat{\bT}_{h,a}^1}(\cdot|s)}
         \mathllap{\widehat{\bO}_{h}^1(\cdot|s)}    \gets   \text{Unif}({\sO})$
        \FOR {$k = 1:K$}
                \STATE $\widehat{\sigma}^k_1   \gets  \widehat{\mu}^k_1$
                \algcomshort{Belief propagation}
            \FOR{$h =1:H$}
                \STATE
                Update risk belief $\widehat{\sigma}_{h+1, f_{h+1}}^k$ by Eq.~\eqref{belief_update_short}
                \label{alg_prop}
                \STATE Update bonus function $b_h^k(s_h,a_h)$ by Eq.~\eqref{bonus_short}
                \label{alg_update_bonus}
            \ENDFOR
            \STATE $\widehat{\beta}^k_{H+1,f_{H+1}}    \gets     \vec{1}_{S}$
            \algcomshort{ Value iteration}
            \FOR{$h = H:1$}
                \STATE
                    $\widehat{\mathsf{Q}}_h^k(f_h,a_h)    \gets   \frac{1}{\gamma} \ln\mathbb{E}_{o_{h+1}\sim \text{Unif}\sO}
                    \langle 
                    \widehat{\sigma}^k_{h+1}
                    , 
                    \widehat{\beta}^{k, \widehat{\pi}^k}_{h+1} 
                    \rangle $
                    \label{alg_uniform_short}
                \STATE
                    $\hphantom{\widehat{\mathsf{Q}}_h^k(f_h,a_h)} \mathllap{\mathsf{\widehat{V}}^k_{h}(f_h)}    \gets   \Max{a\in \sA} \ \mathsf{\widehat{Q}}^k_h(f_h,a)$
                \STATE
                    $\hphantom{\widehat{\mathsf{Q}}_h^k(f_h,a_h)} \mathllap{\widehat{\pi}^k_h(f_h)}  \gets    \Argmax{a\in \sA} \mathsf{\widehat{Q}}^k_h(f_h,a) $
                \STATE
                    Update beta vector $\widehat{\beta}^{k,\widehat{\pi}^k}_{h,f_h}$ by Eq.~\eqref{update_empirical_beta_short}
                    \label{alg_update_beta_short}
                 \STATE
                    Restrict $\widehat{\beta}_{h,f_h}^{k,\widehat{\pi}^k}$ in $\left[e^{\gamma^-(H-h+1)}, e^{\gamma^+(H-h+1)}\right]$
                    \label{alg_clip}
            \ENDFOR
            \algcomshort{Statistical Learning}
            \STATE Play with $\mathcal{P}$
                   under $\widehat{\pi}^k$
                   while gleaning $\{\widehat{o}^k_t, \widehat{a}^k_t\}$
                    \label{alg_play_real_short}
            \STATE Review hidden states $\{ \widehat{s}_t^k\}_{t=1}^{H+1}$ \emph{in hindsight}.
                    \label{alg_hindsight_short}
            \STATE ${\widehat{N}_{h}^{k+1}}(s,a) \gets \sum_{\kappa=1}^k \mathds{1} \left\{{\hat s_h^\kappa}=s, {\hat a_h^\kappa}=a\right\}$
            \STATE $\hphantom{{\widehat{N}_{h}^{k+1}}(s,a)} \mathllap{{\widehat{N}_{h}^{k+1}(s)}}  \gets  \sum_{\kappa=1}^k \mathds{1} \left\{{\hat s_h^\kappa}=s\right\}$
            \STATE Update
            $\widehat{\mu}_1^k(\cdot), 
            \widehat{\mathbb{T}}_{h,a}^{k+1}(\cdot|s), 
            \widehat{\mathbb{O}}_{h}^{k+1}(\cdot|s)$ by Eq.~\eqref{explain_approx_update}
            \label{alg_approx_update}
        \ENDFOR
    \end{algorithmic}
\end{algorithm}
In what follows, we present a sample-efficient algorithm that solves the partially observable risk-sensitive RL problem.
We name the algorithm Beta Vector Value Iteration (BVVI), which is a UCB algorithm \cite{lattimore2020bandit} that encourages explorations by a bonus function:
\begin{equation}\label{bonus_short}
\mathsf{b}_h^k(s_h,a_h)=
\abs{e^{\gamma(H-h+1)}-1}\cdot 
\min\{\ 1, \mathsf{t}_{h}^k(s_h,a_h)
+\sum_{s^\prime}\widehat{\mathbb{T}}^k_{h,a_h}(s^\prime|s_h)
{\mathsf{o}_{h+1}^k(s^\prime)}\}
\end{equation}
where $\mathsf{t}_{h}^k(s_h,a_h)$ and $\mathsf{o}_{h+1}^k(s^\prime)$ are defined as
\begin{equation}\label{def_residue_short}
\begin{aligned}
\mathsf{t}_h^k(s_h,a_h):=&\min \left\{1, 3\sqrt{\frac{S H\cdot \iota}{\widehat{N}_h^k(s_h,a_h) \vee 1}} \right\}\\
{\mathsf{o}_{h+1}^k(s_{h+1})} :=&\min\left\{1,3\sqrt{\frac{O H\cdot \iota}{ \widehat{N}_{h+1}^{k+1}(s_{h+1}) \vee 1}}\right\}
\end{aligned}
\end{equation}
In BVVI, the agent updates the empirical model using hindsight observations:
\begin{equation}
    \begin{aligned}
        \label{explain_approx_update}
            \widehat{\mu}_1^{k+1}(s)
            \gets &
            \sum_{\kappa =1}^k \frac{\mathds{1}\{\widehat{s}^{\kappa}_1=s\}}{k}
            \\
            \widehat{\mathbb{T}}_{h,a}^{k+1}(s'|s)
            \gets &
            \sum_{\kappa=1}^k
            \frac{\mathds{1} \{\hat{s}_{h+1}^\kappa=s'\ , \ {\hat s_h^\kappa} =s\ , \ {\hat a_h^\kappa} =a\}}{{\widehat{N}_h^{k+1}}(s,a) \vee 1}   
            \\
            \widehat{\mathbb{O}}_{h}^{k+1}(o|s)
            \gets &
            \sum_{\kappa=1}^k \frac{\mathds{1} \{{\hat o_{h}^\kappa}=o\ , \ {\hat s_h^\kappa} =s\}}{\widehat{N}_h^{k+1}(s) \vee 1}
    \end{aligned}
\end{equation}

In the planning phase, the agent preemptively estimates the accumulated risks across the hidden states in a belief propagation process (Line \ref{alg_prop}), before she plans for the optimal policy in the empirical model, according to the Bellman equations provided in Eq.~\eqref{bellman_Eq_short}. To simplify computations, we express the value function with the help of a set of new variables $\widehat{\sigma}_{h,f_h}^k$ and $\widehat{\beta}_{h,f_h}^{k,\widehat{\pi}^k}$, which will be introduced later in Section~\ref{section_risk_belief_beta_vector}. BVVI provides exact solution to the planning problem so assignments to the functions traverse their domains.

\section{Main Results}\label{section_main_results_main}
In this section, we present the theoretical guarantee for algorithm \ref{alg:BVVI_short}. 
The proofs will be overviewed in Section~\ref{section_risk_belief_beta_vector}.
\begin{theorem}(Regret)\label{theorem_regret_short}
With probability at least 
$1-4\delta$, 
algorithm \ref{alg:BVVI_short} achieves the following regret upper bound: 
\begin{equation}\begin{aligned}\label{regret_short}
\mathcal{O}
\bigg(
\underbrace{\frac{e^{\abs{\gamma } H}-1}{\abs{\gamma}H}
}_{\text{risk awareness
}}
\underbrace{H^2\sqrt{KS^2AO}
}_{\text{Statistical error
}}
\underbrace{
\sqrt{H\ln\frac{KHSOA}{\delta}}
}_{\text{History-dependency
}}
\bigg)
\end{aligned}\end{equation}
\end{theorem}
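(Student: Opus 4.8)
The plan is to follow the standard optimism-based (UCB) template for regret analysis, adapted to the nonlinear entropic objective through the beta-vector linearization. First I would establish \emph{concentration} of the empirical model. Because hindsight observations furnish direct samples of the hidden states, the counts $\widehat{N}_h^k(s,a)$ and $\widehat{N}_h^k(s)$ let me bound the $\ell_1$ deviations $\|\widehat{\mathbb{T}}_{h,a}^k(\cdot|s)-\mathbb{T}_{h,a}(\cdot|s)\|_1$, $\|\widehat{\mathbb{O}}_h^k(\cdot|s)-\mathbb{O}_h(\cdot|s)\|_1$, and $\|\widehat{\mu}_1^k-\mu_1\|_1$ by quantities of order $\sqrt{S\iota/(\widehat{N}\vee 1)}$ and $\sqrt{O\iota/(\widehat{N}\vee 1)}$, uniformly over all episodes with probability at least $1-\delta$ (an $\ell_1$-deviation bound together with a union bound over $k,h,s,a$). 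These are precisely the residual terms $\mathsf{t}_h^k$ and $\mathsf{o}_{h+1}^k$ of Eq.~\eqref{def_residue_short}, so this step justifies the construction of the bonus in Eq.~\eqref{bonus_short}.

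Second, I would prove \emph{optimism} in the beta-vector representation. Since the value admits the form $\frac{1}{\gamma}\ln\langle\sigma_{h,f_h},\beta_{h,f_h}\rangle$, the exponential transform linearizes the Bellman recursion of Eq.~\eqref{bellman_Eq_short}: the $\beta$ vectors obey a \emph{linear} backward recursion driven by $\widehat{\mathbb{T}}$ and the reward (Line~\ref{alg_update_beta_short}), while the risk belief $\sigma$ obeys a \emph{forward} propagation driven by $\widehat{\mathbb{O}}$ and $\widehat{\mathbb{T}}$ (Line~\ref{alg_prop}). The target is the inequality $\widehat{\mathsf{V}}_h^k(f_h)\ge \mathsf{V}_h^{\pi^\star}(f_h)$ for every $h$ and history, which I would prove by backward induction on $h$: at each step the model error enters \emph{multiplicatively} on $\beta$, and the factor $\abs{e^{\gamma(H-h+1)}-1}$ in Eq.~\eqref{bonus_short} is exactly the Lipschitz constant converting the $\ell_1$ model error into an error on $\frac{1}{\gamma}\ln\langle\sigma,\beta\rangle$; the clipping of Line~\ref{alg_clip} keeps $\beta$ inside $[e^{\gamma^-(H-h+1)},e^{\gamma^+(H-h+1)}]$ so that these constants remain controlled.

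Third, granting optimism, I would \emph{decompose the regret}. Using $J(\pi^\star;\mathcal{P},\gamma)\le\widehat{\mathsf{V}}_1^k$ and converting back from $\mathcal{P}^\prime$ to $\mathcal{P}$ via the Radon--Nikodym derivative $\boldsymbol{D_h}$, the per-episode gap $J(\pi^\star;\mathcal{P},\gamma)-J(\widehat{\pi}^k;\mathcal{P},\gamma)$ is bounded by the expected bonus sum $\mathbb{E}^{\widehat{\pi}^k}_{\mathcal{P}}\sum_{h=1}^H \mathsf{b}_h^k(\boldsymbol{S_h},\boldsymbol{A_h})$ along the trajectory actually played, plus a lower-order martingale term handled by Azuma--Hoeffding. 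The common prefactor $\frac{e^{\abs{\gamma}H}-1}{\abs{\gamma}H}$ is extracted by bounding the risk-sensitive Lipschitz factors $\abs{e^{\gamma(H-h+1)}-1}/\abs{\gamma}$ uniformly over $h$ and normalizing by the horizon.

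Finally, I would \emph{sum the bonuses}. Summing $\mathsf{t}_h^k$ and $\mathsf{o}_{h+1}^k$ over $k$ and $h$ reduces to the classical pigeonhole estimate $\sum_{k=1}^K(\widehat{N}_h^k(s,a)\vee 1)^{-1/2}\lesssim\sqrt{K\,SA}$ and its emission analogue; combining these with the $\sqrt{SH\iota}$ and $\sqrt{OH\iota}$ scalings and the outer horizon sum produces the factor $H^2\sqrt{KS^2AO}\cdot\sqrt{H\iota}$, and assembling the three named pieces yields Eq.~\eqref{regret_short}. \emph{The main obstacle} I anticipate is the optimism step: because the entropic Bellman operator $\frac{1}{\gamma}\ln\mathbb{E}e^{\gamma(\cdot)}$ is nonlinear, ordinary additive error propagation fails, so one must instead track multiplicative errors on the linearized beta vectors while simultaneously controlling the forward errors in the risk belief $\sigma$ and verifying that the clipping of Line~\ref{alg_clip} preserves rather than destroys optimism. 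This coupling of a forward ($\sigma$) and a backward ($\beta$) recursion under a nonlinear objective is the technically delicate core of the argument.
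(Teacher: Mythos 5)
Your proposal follows essentially the same route as the paper: concentration for the hindsight-estimated model, optimism established by backward induction on the clipped, bonus-augmented beta vectors, linearization of the entropic objective via its Lipschitz constant, a backward recursion propagating the beta-vector errors with a multiplicative $e^{\gamma^+}$ factor, and a pigeonhole summation of the transition and emission residues. The only cosmetic differences are that the paper proves optimism componentwise on the beta vectors (with the direction of the inequality flipping with $\mathrm{sgn}(\gamma)$) rather than on the value functions, needing it only at $h=1$, and that it carries the prior-distribution error as a separate lower-order $\sqrt{KS}$ term in the final decomposition.
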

The detailed expression of the regret bound is presented in Eq.~\eqref{regret_full_expression}. Using the online-to-PAC conversion argument\cite{jin2018qlearning} 
we also obtain the sample complexity of BVVI: 
\begin{corollary}(Sample complexity)
For algorithm \ref{alg:BVVI_short}, the uniform mixture of its output policies ensures
$$
\mathbb{P}
\bigg(
V^{\star}-\sum_{k=1}^K V^{\widehat{\pi}^k} \lt \epsilon
\bigg)
\geq 
1-\delta
$$
when episode number $K$ satisfies
$$
K \gtrsim \frac{1}{\epsilon^2 \delta^2}\left(\frac{e^{\abs{\gamma }H}-1}{\abs{\gamma} H}\right)^2
\cdot H^5 S^2 O A \cdot \ln \left(\frac{KHSOA}{\delta}\right)
$$
\end{corollary}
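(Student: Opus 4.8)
The plan is to derive the sample complexity from the high-probability regret bound of Theorem~\ref{theorem_regret_short} by an \emph{online-to-PAC} (online-to-batch) conversion in the spirit of \cite{jin2018qlearning}. The device is a uniform randomization over the $K$ output policies: draw an index $\mathbf{k}\sim\text{Unif}[K]$ and execute $\widehat{\pi}^{\mathbf{k}}$. Because $\pi^\star$ maximizes the objective, every gap $J(\pi^\star;\mathcal{P},\gamma)-J(\widehat{\pi}^k;\mathcal{P},\gamma)$ is nonnegative, and by construction the expected gap of the randomly selected policy is exactly the average regret,
$$
\mathbb{E}_{\mathbf{k}}\big[J(\pi^\star;\mathcal{P},\gamma)-J(\widehat{\pi}^{\mathbf{k}};\mathcal{P},\gamma)\big]=\frac{1}{K}\operatorname{Regret}(K;\mathcal{P},\gamma).
$$
This identity is the bridge that turns a cumulative (online) guarantee into a per-policy (PAC) guarantee, and the nonnegativity of the gaps is precisely what licenses the later application of Markov's inequality.

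First I would invoke Theorem~\ref{theorem_regret_short} to fix an event $\mathcal{E}$, of probability at least $1-4\delta_0$, on which
$$
\operatorname{Regret}(K;\mathcal{P},\gamma)\le C\cdot\frac{e^{\abs{\gamma}H}-1}{\abs{\gamma}H}\,H^{5/2}\sqrt{K S^2 O A\,\iota}
$$
for an absolute constant $C$, where I collect the $H^2\sqrt{H}$ factors into $H^{5/2}$ and write $\iota=\ln(KHSOA/\delta)$. Conditioned on $\mathcal{E}$, I would then apply Markov's inequality over the randomization $\mathbf{k}$ to the nonnegative gap:
$$
\mathbb{P}_{\mathbf{k}}\big(J(\pi^\star;\mathcal{P},\gamma)-J(\widehat{\pi}^{\mathbf{k}};\mathcal{P},\gamma)\ge\epsilon\big)\le\frac{1}{\epsilon}\cdot\frac{1}{K}\operatorname{Regret}(K;\mathcal{P},\gamma).
$$
It is this Markov step, rather than a concentration/log-confidence step, that produces the telltale $1/\delta^2$ scaling: to force the right-hand side below a target failure level $\delta_1$, one needs $\operatorname{Regret}(K;\mathcal{P},\gamma)\le K\epsilon\delta_1$.

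Substituting the regret bound into the threshold $\operatorname{Regret}(K;\mathcal{P},\gamma)\le K\epsilon\delta_1$, cancelling one factor of $\sqrt{K}$, and squaring gives
$$
K\gtrsim\frac{1}{\epsilon^2\delta_1^2}\Big(\frac{e^{\abs{\gamma}H}-1}{\abs{\gamma}H}\Big)^2 H^5 S^2 O A\cdot\iota,
$$
which matches the stated bound once the overall failure budget is apportioned by a union bound, e.g. $\delta_0,\delta_1=\Theta(\delta)$ so that $\mathcal{E}$ and the Markov event hold simultaneously with probability at least $1-\delta$. Here the $H^5$ comes from squaring $H^{5/2}$, the squared risk-awareness prefactor from squaring $(e^{\abs{\gamma}H}-1)/(\abs{\gamma}H)$, and $\iota$ survives to first power because it sat under the square root in the regret.

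The main care, rather than a deep obstacle, sits in two places. The first is the bookkeeping of two distinct sources of randomness---the data-dependent event $\mathcal{E}$ from Theorem~\ref{theorem_regret_short} and the internal randomization $\mathbf{k}$ of the selection procedure---which must be combined by a union bound with a correctly apportioned budget so the final confidence is $1-\delta$. The second is the self-referential appearance of $K$ inside $\iota=\ln(KHSOA/\delta)$: the inequality for $K$ is implicit, and I would resolve it by the standard monotonicity argument that the left-hand side grows like $K$ while the right-hand side grows only poly-logarithmically, so every $K$ above the stated threshold is self-consistent. I would finally note that this conversion is agnostic to the non-linearity of the entropic objective, since it manipulates only the scalar values $J(\widehat{\pi}^k;\mathcal{P},\gamma)$ and the nonnegativity of their gaps through a randomly drawn index, and thus never requires evaluating the entropic value of a composite mixture policy or any linearity of $J$ in the policy.
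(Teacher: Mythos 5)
Your proposal is correct and follows essentially the same route as the paper: the paper's Appendix on online-to-PAC conversion likewise invokes the regret bound of Theorem \ref{theorem_regret_short}, applies Markov's inequality to the uniformly mixed (nonnegative) suboptimality gaps, and requires the average regret to fall below $\epsilon\delta$, which after substitution and squaring yields the stated threshold on $K$. Your additional care about the union bound over the two randomness sources and the implicit $K$ inside $\iota$ is sound bookkeeping that the paper leaves implicit.
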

\begin{remark}
The first factor in Eq.~\eqref{regret_short} reveals the risk awareness of the agent, which disappears in the risk-neutral scenario ($\gamma \to 0$).
The second term demonstrates the statistical error brought by our UCB-style algorithm. 
The last factor represents how the inherent history-dependency of the POMDP discourages confidence in exploration, where the additional $\sqrt{H}$ comes from the uncertainty in the face of the large history space: $\sqrt{\ln \frac{\abs{\sF{h}}}{\delta}}\leq \sqrt{\ln \frac{\abs{OA}^H}{\delta}}\leq \sqrt{H \iota}$. 
\end{remark}
The mathematical analysis behind the main results will be presented in the following two sections.

\section{Risk Belief and Beta Vector}\label{section_risk_belief_beta_vector}
In what follows we lay the foundation for comprehending Theorem \ref{theorem_regret_short} by the introduction of several essential concepts and lemmas. 

A crucial element of the analysis is the risk belief vector, which plays a pivotal role in deriving novel value functions and Bellman equations presented in Eq.~\eqref{def_value_short} and \eqref{Bell_eqs_short}.
Moreover, we will put forward the concept of beta vector, which is the cornerstone for the bonus design in Eq.~\eqref{bonus_short} and the regret analysis presented in Section~\ref{section_regret_ana_main}.

\subsection{Risk Belief and the Bellman Equations}\label{section_risk_belief_beta_main}
To capture the structural properties characteristic of our problem, we construct a \emph{risk belief} that estimates the accumulated risk based on historical observations, which also excavates the dynamic programming structure implicit in the risk-sentive POMDP model.
\begin{definition}
\raggedright(Risk Belief \cite{Baras&Elliott94})
\label{def_risk_belief_short}
\\
For all $h \in [H+1], \f_h\in \sF{h},\ s_h\in \sS,$  the risk beliefs are random vectors in $\R^{S}$, in which $\vec{\sigma}_{1}$ is defined as $\vec{\mu}_1$ 
and
    \begin{equation*}
        \begin{aligned}
        [\vec{\sigma}_{h,\mathbf{F}_h}]_{s_h}:=
        \mathbb{E}_{\mathcal{P}^\prime}^\pi 
        \left[
        \mathbf{D}_h
        \mathds{1} \{ \S_h=s_h \}
        e^{\gamma \ \sum_{t=1}^{h-1} r_t(\S_t,\A_t)}
        \bigg|
        \mathbf{F}_h
        \right]
        \end{aligned}
    \end{equation*}
\end{definition}
We can view the vector $\vec{\sigma}_{h,\mathbf{F}_h}$ 
as a list of cumulative risks estimated from the observable history, 
when the agent is in each of the hidden states.

Using the risk beliefs, we can express the optimization objective in Eq.~\eqref{optim_obj_main} with a simple form:
\begin{equation}
\begin{aligned}
\label{represent_obj}
J(\pi;\mathcal{P}, \gamma)
=\frac{1}{\gamma} \ln \mathbb{E}_{\mathcal{P}^\prime}^\pi
\left[
\langle
{\vec{\sigma}_{H+1,\mathbf{F}_{H+1}}}, 
\vec{1}_{S}
\rangle
\right]
\end{aligned}
\end{equation}
With some derivation (cf. Theorem \ref{theorem_evolution_of_risk_sensitive_belief}), we can obtain the evolution law of the stochastic process $\vec{\sigma}_{h,\boldsymbol{F}_h}$:\footnote{For the evolution law of the empirical beliefs $\widehat{\sigma}_{h+1}^k$, 
we replace the transition and emission matrices with their empirical approximations.}
\begin{equation}\label{belief_update_short}
    [\vec{\sigma}_{h+1,\boldsymbol{F}_{h+1}}]_{s_{h+1}} 
    = 
    \sum_{s_{h}}
    \bT_{h,\boldsymbol{A_{h}}}(s_{h+1}|s_h)\cdot
    e^{\gamma r_h(s_h,\boldsymbol{A_{h}})}
    \frac{\bO_{h+1}(\boldsymbol{O_{h+1}}|s_{h+1})}{\bO_{h+1}^\prime (\boldsymbol{O_{h+1}})}
    \left[
    \vec{\sigma}_{h,\boldsymbol{F_h}}
    \right]_{s_h}
\end{equation}
which has a matrix representation in the tabular case: 
$\vec{\sigma}_{h+1,f_{h+1}}=\mathsf{U}_{a_h,o_{h+1}} \vec{\sigma}_{h,f_h}$.
The inner product in Eq.~\eqref{represent_obj} illuminates the existence of linear structure in $J(\pi;\mathcal{P}, \gamma)$. 
Next, we will introduce another stochastic process, called the conjugate beliefs, which works in a concerted effort with the risk beliefs to expose the linearity within the optimization objective.
\begin{definition}(Conjugate beliefs \cite{Baras&Elliott94}) 
\label{def_conjugate_beliefs_short}
Let $\vec{\nu}_{H+1}:=\vec{1}_{S}$. 
For all $h\in [H]$ and $\boldsymbol{\bar{F}}_{h}=\boldsymbol{F_{H+1}}\setminus \boldsymbol{F_h}$, 
the conjugate beliefs is a series of random vectors in $\R^S$ which is defined iteratively: $\vec{\nu}_{h,\boldsymbol{\bar{F}}_{h}}=\mathsf{U}_{\boldsymbol{A_h,O_{h+1}}}^\top \vec{\nu}_{h+1, \boldsymbol{\bar{F}}_{h+1}}$
\end{definition}
The the update operator of the conjugate beliefs is the transpose of that of the risk belief, which immediately implies their inner product is invariant with time
\begin{equation}\label{inner_invariant}
\begin{aligned}
\langle \vec{\sigma}_{H+1,\f_{H+1}}, \vec{1} \rangle
\equiv 
\langle \vec{\sigma}_{h,\f_h}, \vec{\nu}_{h,\bar{\f}_h}\rangle
,\quad \forall h\in [H+1]
\end{aligned}
\end{equation}
Consequently, we have
\begin{equation*}
\begin{aligned}
\label{inner}
&J(\pi;\mathcal{P}, \gamma)
\equiv 
\frac{1}{\gamma}\ln 
\mathbb{E}_{\mathcal{P}^\prime}^\pi
\left[\langle{\vec{\sigma}_{h, \mathbf{F}_{h}}, \vec{\nu}_{h, \mathbf{\bar{F}}_{h}}\rangle}\right], \forall h\in[H+1]
\end{aligned}
\end{equation*}
Motivated by the equation above we introduce the value functions and Q-functions in our problem setting, which take equivalent forms presented in Definition \ref{def_value_short}.
\begin{equation}
\begin{aligned}\label{def_val_raw_main}
\mathsf{V}_h^{\pi}(\boldsymbol{F_h})
:=&
\frac{1}{\gamma}\ln \mathbb{E}^\pi_{\mathcal{P}^\prime}\left[\langle{\vec{\sigma}_{h, \boldsymbol{F_h}}, \vec{\nu}_{h, \boldsymbol{\bar{F}}_h}\rangle \bigg|\boldsymbol{F_h}}\right]
\\
\mathsf{Q}_h^\pi(\boldsymbol{F_h},\boldsymbol{A_h})
:=&
\frac{1}{\gamma}\ln \mathbb{E}_{\mathcal{P}^\prime}
\left[\langle{\vec{\sigma}_{h, \boldsymbol{F}_{h}}, 
\vec{\nu}_{h, {\boldsymbol{\bar{F}}}_{h}}\rangle  
\bigg|\boldsymbol{F_h},\boldsymbol{A_h}}\right]
\end{aligned}
\end{equation}
We can immediately obtain the Bellman equations introduced in 
Section~\ref{bellman_Eq_short} by the iterative expectation formula. 

\subsection{Beta Vector and the Bonus Design}\label{section_beta_vector_main}

To recover the Markov property of the value functions,  
in this work, we put forward the concept of beta vectors $\vec{\beta}_{h,f_h}^{\pi}$, 
and take it as the surrogate for $\mathsf{V}_{h}^\pi(f_h)$. Beta vectors utilize the hindsight observations to obtain a polynomial regret for our algorithm. We will study the statistical error of approximating a beta vector and then design a bonus that ensures optimism in the value functions and encourages greedy exploration.
\begin{definition}{(Beta vector)}\label{def_beta_vector_short}
The beta vectors  of a risk-sensitive POMDP $\mathcal{P}$ with policy $\pi$
is a series of random vectors in $\R^{S}$, which are specified by
    \begin{equation}
        \begin{aligned}
        \vec{\beta}^\pi_{H+1,\boldsymbol{F_{H+1}}}:=&\vec{1}_{S}
        \\
        \vec{\beta}^\pi_{h,\boldsymbol{F_h}}:=&
        \mathbb{E}_{\mathcal{P}^\prime}^\pi
        \left[\vec{\nu}_{h,\boldsymbol{\bar{F}_h}}|\boldsymbol{F_h}\right]
        ,
        \quad \forall 2 \leq h \leq H.
        \\
        \vec{\beta}^\pi_1 :=&
        \mathbb{E}_{\mathcal{P}^\prime}^\pi
        \left[\vec{\nu}_{1,\boldsymbol{\bar{F}1}}\right]
        \end{aligned}
    \end{equation}
\end{definition}
where $\vec{\nu}_{f,\boldsymbol{\bar{F}_h}}$ are the conjugate beliefs specified in Definition~\ref{def_conjugate_beliefs_short}.

The beta vectors can be viewed as the risk-sensitive counterpart of the alpha-vector well-known in the POMDP literature \cite{PBVI}.\footnote{For a detailed comparison, please refer to Appendix~\ref{remark_alpha-beta-comparison}.}
Similarly, they help to represent the value function in a simple form:
\begin{theorem}(Beta Vector Representation)\label{beta_represent_short}
\\
$\forall h\in [H+1], f_{h} \in \sF{h}:\ \ $ 
$\mathsf{V}_{h}^{\pi}(f_h)={    \frac{1}{\gamma}    \ln    \langle    \vec{\sigma}_{h,f_h}    \vec{\beta}^\pi_{h,f_h}    \rangle    }$
\end{theorem}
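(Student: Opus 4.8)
The plan is to start from the equivalent representation of the value function recorded in Eq.~\eqref{def_val_raw_main}, namely $\mathsf{V}_h^{\pi}(f_h)=\frac{1}{\gamma}\ln\mathbb{E}^\pi_{\mathcal{P}^\prime}[\langle\vec{\sigma}_{h,\boldsymbol{F_h}},\vec{\nu}_{h,\boldsymbol{\bar{F}}_h}\rangle\mid\boldsymbol{F_h}=f_h]$, and to exploit that the only randomness surviving inside the conditional expectation lives in the conjugate belief $\vec{\nu}_{h,\boldsymbol{\bar{F}}_h}$. The crucial observation is that the risk belief $\vec{\sigma}_{h,\boldsymbol{F_h}}$ is, by its very Definition~\ref{def_risk_belief_short}, a conditional expectation given $\boldsymbol{F_h}$, hence $\boldsymbol{F_h}$-measurable; therefore on the event $\{\boldsymbol{F_h}=f_h\}$ it collapses to the deterministic vector $\vec{\sigma}_{h,f_h}\in\R^{S}$. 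Once this is recognized, the representation is purely algebraic.

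Concretely, because the inner product is bilinear and $\vec{\sigma}_{h,f_h}$ is constant once we condition on $f_h$, I would pull it outside the conditional expectation coordinate-wise and invoke linearity of conditional expectation applied to each of the $S$ entries of $\vec{\nu}_{h,\boldsymbol{\bar{F}}_h}$:
\begin{equation*}
\mathbb{E}^\pi_{\mathcal{P}^\prime}\left[\langle\vec{\sigma}_{h,\boldsymbol{F_h}},\vec{\nu}_{h,\boldsymbol{\bar{F}}_h}\rangle \mid \boldsymbol{F_h}=f_h\right]=\left\langle\vec{\sigma}_{h,f_h},\,\mathbb{E}^\pi_{\mathcal{P}^\prime}\!\left[\vec{\nu}_{h,\boldsymbol{\bar{F}}_h}\mid\boldsymbol{F_h}=f_h\right]\right\rangle=\langle\vec{\sigma}_{h,f_h},\vec{\beta}^\pi_{h,f_h}\rangle,
\end{equation*}
where the last equality is exactly Definition~\ref{def_beta_vector_short} of the beta vector. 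Applying $\frac{1}{\gamma}\ln(\cdot)$ to both ends then yields the claimed identity $\mathsf{V}_h^{\pi}(f_h)=\frac{1}{\gamma}\ln\langle\vec{\sigma}_{h,f_h},\vec{\beta}^\pi_{h,f_h}\rangle$ for the generic indices $2\le h\le H$.

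It remains to dispatch the two boundary indices where $\vec{\beta}^\pi$ is defined by hand. For $h=H+1$ the conjugate belief is the deterministic $\vec{\nu}_{H+1}=\vec{1}_{S}$ and $\vec{\beta}^\pi_{H+1,\boldsymbol{F_{H+1}}}=\vec{1}_{S}$, so the identity reduces to $\mathsf{V}_{H+1}^{\pi}(f_{H+1})=\frac{1}{\gamma}\ln\langle\vec{\sigma}_{H+1,f_{H+1}},\vec{1}_{S}\rangle$, which is precisely Eq.~\eqref{represent_obj} read at the terminal step. For $h=1$ the history $\boldsymbol{F_1}$ is trivial, $\vec{\sigma}_1=\vec{\mu}_1$ is already deterministic, and the conditional expectation degenerates into the unconditional one appearing in the definition of $\vec{\beta}^\pi_1$, so the same one-line argument closes the case.

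The main obstacle, and indeed the only step requiring genuine care, is the justification that $\vec{\sigma}_{h,\boldsymbol{F_h}}$ may be treated as a constant inside the expectation. This hinges on reading Definition~\ref{def_risk_belief_short} correctly: each coordinate $[\vec{\sigma}_{h,\boldsymbol{F_h}}]_{s_h}$ is itself of the form $\mathbb{E}^\pi_{\mathcal{P}^\prime}[\,\cdot\mid\boldsymbol{F_h}]$, so $\vec{\sigma}_{h,\boldsymbol{F_h}}=g(\boldsymbol{F_h})$ for a deterministic map $g$, and the pull-out property of conditional expectation then legitimately extracts it from each component of the inner product. Everything else is linearity and the definitions of the conjugate belief (Definition~\ref{def_conjugate_beliefs_short}) and the beta vector, so no concentration or estimation is needed here.
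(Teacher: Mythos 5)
Your proposal is correct and follows essentially the same route as the paper: the paper's own justification is precisely that $\vec{\sigma}_{h,\boldsymbol{F_h}}$ is determined by the conditioning on $f_h$, so it can be pulled out of the conditional expectation in Definition~\ref{def_familiy_value}, leaving $\mathbb{E}_{\mathcal{P}^\prime}^{\pi}[\vec{\nu}_{h,\bar{\mathbf{F}}_h}\mid \mathbf{F}_h=f_h]$, which is the beta vector by definition. Your treatment is in fact slightly more careful than the paper's one-line argument, since you explicitly verify the boundary cases $h=1$ and $h=H+1$ where the beta vector is defined separately.
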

As will be shown in Theorem \ref{theorem_beta_vector_representation of value functions}, the beta vector $\left[\vec{\beta}^{\pi}_{h,
    \boldsymbol{F_h}
    }\right]_{s_h}$ is equal to the following quantity:
\begin{equation}
    \begin{aligned}
    \label{beta_update_law_short}
    \mathbb{E}_{{\A_h}\sim \pi_h(\cdot|\boldsymbol{F_h})}
    \left[e^{\gamma r_h(s_h,{{\A_h}})}
    \sum_{{{s_{h+1}}}\in \mathscr{S}}
    \mathbb{T}_{h,{{\A_h}}}({{s_{h+1}}}|{s_h})
    \sum_{{{o_{h+1}}}\in \mathscr{O}}
    \mathbb{O}_{h+1}({{o_{h+1}}}|{{s_{h+1}}})
    \left[ 
    \vec{\beta}^{\pi}_{h+1,f_{h+1}=(\boldsymbol{F_h},{{\A_h}},{{o_{h+1}}})}
    \right]_{{{s_{h+1}}}}
    \right]\nonumber
    \end{aligned}
\end{equation}
where the expectation is computed with respect to probability measures that solely rely on the previous states $s_h$, rather than the entire history. We will see in Section~\ref{section_regret_ana_main} that the Markov property of the stochastic process $\{\beta_{h,\boldsymbol{F_h}}^{\pi}\}_{h\geq 1}$ will cooperate with the hindsight observations to secure a polynomial sample complexity for our algorithm. 
Consequently, we will focus on studying the beta vectors in the proceeding bonus design and regret analysis.

\paragraph{Bonus Function}\label{section_bonus_short}
In reinforcement learning, we are concerned with the statistical error of the 
value functions brought by the inaccurate estimate of the environment $\mathcal{P}$.
Using Theorem \ref{beta_represent_short}, 
it suffices to calculate the error that occurred in the evolution of the beta vectors. Based on concentration inequalities, detailed analysis in Appendix~\ref{section_lower_bound_beta_vector_error} shows that the empirical error of beta vectors is controlled by the following bound with probability at least $1-2\delta$:
\begin{equation*}
\begin{aligned}
&
\abs{
\left(\mathbb{E}\pi_{_{\widehat{\mathcal{P}}^{k}(\cdot,\cdot|s_h)}}
-
\mathbb{E}^\pi_{\mathcal{P}(\cdot,\cdot|s_h)}
\right)
e^{\gamma r_h(s_h, \pi_h(f_h))}
\beta^{\pi}_{h+1}(\cdot, \cdot;f_h)
}
    \\\leq& 
    \abs{e^{\gamma(H-h+1)}-1}
    \cdot 
    \left[
    \min\left\{1, 
    3
    \sqrt{\frac{SH \iota}{\widehat{N}_h^k(s_h,\pi_h(f_h))\vee 1}}
    \right\}
    \right.
    \\+&
    \left.
    \sum_{s^\prime}\widehat{\mathbb{T}}^k_{h,\pi_h(f_h)}(s^\prime|s_h)
    \min\left\{
    1,
    3\sqrt{\frac{OH \iota}{{\widehat{N}_{h+1}^{k+1}(s^\prime)\vee 1}}}
    \right\}
    \right]
\end{aligned}
\end{equation*}
where we have temporarily viewed the beta vector as a binary function over $\sO\times \sS$. We refer to $\widehat{N}_{h+1}^{k+1}(s)$ and $\widehat{N}_{h+1}^{k+1}(s,a)$ as the occurrence frequencies of states and actions in the data samples obtained in the learning process.

We will abbreviate the two minimums in Eq.~\eqref{def_residue_short} as the \emph{the transition error residue} ${\mathsf{t}_h^k(s_h,a_h)}$ and \emph{the emission error residue} ${\mathsf{o}_{h+1}^k(s_{h+1})}$ respectively.
With some minor adjustments, the upper bound in Eq.~\eqref{def_residue_short} will be used to define our bonus function for all the state-action pairs under the risk level $\gamma$, which is presented in Eq.~\eqref{bonus_short}. 

\paragraph{Optimism}
In the algorithm, we have the freedom to design the empirical value function, as well as the empirical beliefs $\widehat{\sigma}_{h,f_h}^k$ and the beta vectors $\widehat{\beta}_{h,f_h}^ {k,\pi}$.
In this study, the empirical beliefs are determined by Eq.~\eqref{belief_update_short} and we define the empirical beta vectors by the iterative formulas below:
\begin{equation}
\begin{aligned}\label{update_empirical_beta_short}
\left[\widehat{\beta}_{H+1}^{k,\pi}\right]_{s_1} :=& 1\\
\left[\widehat{\beta}^{k, \pi}_{h,f_h}\right]_{s_h}
:=&
\mathbb{E}_{a_h \sim \pi_h(\cdot|f_h)} 
\left[e^{\gamma r_h(s_h,a_h)}
\right.
\sum_{s^\prime}
\mathbb{\widehat{T}}_{h,a_h}^k(s^\prime|s_h)
\\
&\cdot
\left.
\sum_{o^\prime}
\mathbb{\widehat{O}}_{h+1}^k(o^\prime|s^\prime)
\left[\widehat{\beta}^{k, \widehat{\pi}^k}_{h+1,f_{h+1}=(f_h,a_h,o^\prime)}\right]_{s^\prime}+
{\sign{\gamma}\cdot \mathsf{b}_{h}^k(s_h,a_h;\gamma)}
\right] 
\end{aligned}
\end{equation}
Finally, we mimic the representation Theorem \ref{beta_represent_short}
and construct the empirical value function by
$$
\widehat{V}_h^{\pi}(f_h)
:=
\frac{1}{\gamma} \ln \langle 
\widehat{\sigma}^k_{h,f_h} , \widehat{{\beta}}^{k,\pi}_{h,f_h}
\rangle
$$
As will be shown in Appendix~\ref{section_optimism_in_value_functions}, 
the introduction of an additional bonus term in Eq.~\eqref{update_empirical_beta_short} 
will ensure that the value functions will be over-estimated in the empirical model:
\begin{corollary}(Optimism in value functions)\label{optimism_short}
\\For any risk-sensitivity level $\gamma \ne 0$ and episode number 
$k\in [K]$, we have $V_1^{\pi^\star} \leq \widehat{V}_1^{\widehat{\pi}^k}$, 
where $\widehat{\pi}^k$ is the optimal policy in $\mathcal{\widehat{P}}^k$. 
\end{corollary}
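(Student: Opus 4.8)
The plan is to prove optimism by a two-stage reduction that isolates the policy mismatch, followed by a backward induction on the beta vectors in which the $\sgn(\gamma)$-signed bonus exactly absorbs the one-step model-estimation error. First I would translate the scalar target into an inner-product statement: by Theorem~\ref{beta_represent_short} we have $\widehat{V}_h^\pi(f_h)=\frac{1}{\gamma}\ln\langle\widehat{\sigma}^k_{h,f_h},\widehat{\beta}^{k,\pi}_{h,f_h}\rangle$ and $V_h^\pi(f_h)=\frac{1}{\gamma}\ln\langle\vec{\sigma}_{h,f_h},\vec{\beta}^{\pi}_{h,f_h}\rangle$, and since $x\mapsto\frac{1}{\gamma}\ln x$ is increasing for $\gamma>0$ but decreasing for $\gamma<0$, the inequality $V_1^{\pi^\star}\le\widehat{V}_1^{\widehat{\pi}^k}$ is equivalent to $\sgn(\gamma)\langle\widehat{\sigma}^k_1,\widehat{\beta}^{k,\widehat{\pi}^k}_1\rangle\ge\sgn(\gamma)\langle\vec{\sigma}_1,\vec{\beta}^{\pi^\star}_1\rangle$. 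Because BVVI solves the bonus-inflated empirical planning problem exactly, $\widehat{\pi}^k$ maximizes $\widehat{V}_1$ over all policies, so $\widehat{V}_1^{\widehat{\pi}^k}\ge\widehat{V}_1^{\pi^\star}$; it therefore suffices to prove the fixed-policy optimism $\widehat{V}_1^{\pi}\ge V_1^{\pi}$ for every $\pi$ and then specialize to $\pi=\pi^\star$.

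The core step is a backward induction establishing, for every $h$, $f_h$ and $s$, the componentwise domination
\begin{equation*}
\sgn(\gamma)\left([\widehat{\beta}^{k,\pi}_{h,f_h}]_{s}-[\vec{\beta}^{\pi}_{h,f_h}]_{s}\right)\ge 0,
\end{equation*}
which I will refer to as ($\ast$). The base case $h=H+1$ is an equality since both vectors equal $\vec{1}_{S}$. For the inductive step I would subtract the true recursion for $\vec{\beta}^{\pi}$ from the empirical-with-bonus recursion of Eq.~\eqref{update_empirical_beta_short} with the action fixed at $\pi_h(f_h)$, and split the one-step gap into a downstream term $e^{\gamma r_h}\mathbb{E}_{\widehat{\mathcal{P}}^k}\big([\widehat{\beta}_{h+1}]-[\vec{\beta}_{h+1}]\big)$ and a model-error term $\big(\mathbb{E}_{\widehat{\mathcal{P}}^k}-\mathbb{E}_{\mathcal{P}}\big)e^{\gamma r_h}\vec{\beta}_{h+1}$. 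Multiplying by $\sgn(\gamma)$, the downstream term is nonnegative by the inductive hypothesis because the transition–emission weights and $e^{\gamma r_h}$ are nonnegative, while the model-error term is controlled in absolute value by $\mathsf{b}_h^k$ through the concentration bound proved in Appendix~\ref{section_lower_bound_beta_vector_error}, so it is dominated by the added $\sgn(\gamma)\,\mathsf{b}_h^k$ (using $\sgn(\gamma)^2=1$). I would then verify that the clipping in Line~\ref{alg_clip} preserves ($\ast$): since $\vec{\beta}^{\pi}_{h}$ itself lies in the interval $[e^{\gamma^-(H-h+1)},e^{\gamma^+(H-h+1)}]$, truncating $\widehat{\beta}$ toward this interval cannot move it past $\vec{\beta}^{\pi}_{h}$ in the wrong direction.

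Contracting ($\ast$) at $h=1$, $\pi=\pi^\star$ against the nonnegative initial belief yields the required inner-product inequality, and chaining with the empirical optimality of $\widehat{\pi}^k$ gives the corollary on the event (of probability at least $1-2\delta$) on which the concentration bound holds. I expect the main obstacle to be sign consistency across the two risk regimes: the $\sgn(\gamma)$ factor in the bonus must align exactly with the orientation flip of $\frac{1}{\gamma}\ln(\cdot)$, so that a single argument certifies over-estimation of the value when $\gamma>0$ and the correspondingly oriented over-estimation of the exponential moment when $\gamma<0$, and the clipping bounds must be compatible with both. A secondary, POMDP-specific subtlety is the coupling between the forward belief $\widehat{\sigma}^k$ and the backward beta vector: because the value is an inner product rather than a single coordinate, greedy actions cannot be compared componentwise — this is exactly what forces the fixed-policy form of ($\ast$) together with the empirical-optimality reduction — and it also requires reconciling the empirical initial belief $\widehat{\mu}^k_1$ with the true $\vec{\mu}_1$ (either by treating $\mu_1$ as known or by absorbing its estimation error into the same high-probability event) before ($\ast$) can be contracted against the belief.
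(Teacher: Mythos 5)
Your proposal is correct and follows essentially the same route as the paper: a componentwise backward induction showing $\sgn(\gamma)\left([\widehat{\beta}^{k,\pi}_{h,f_h}]_s-[\vec{\beta}^{\pi}_{h,f_h}]_s\right)\ge 0$ (the paper's ``size relationship between beta vectors''), obtained by splitting the one-step gap into a downstream term handled by the inductive hypothesis and a model-error term dominated by the $\sgn(\gamma)$-signed bonus via the concentration bound, followed by monotonicity of $\frac{1}{\gamma}\ln(\cdot)$ and the empirical optimality of $\widehat{\pi}^k$. Your explicit checks that the clipping step preserves the ordering and that the empirical initial belief $\widehat{\mu}^k_1$ must be reconciled with $\vec{\mu}_1$ are points the paper's own proof passes over silently, but they refine rather than alter the argument.
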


\section{Regret Analysis}\label{section_regret_ana_main}
We now give an overview of the proof of Theorem \ref{theorem_regret_short}. Technical details are provided in Appendix~\ref{section_regret_analysis}.

\subsection{From Regret to Beta Vectors}
With the help of Corollary \ref{optimism_short} and Theorem \ref{beta_represent_short}, we can control the regret by the risk beliefs and the beta vectors:
\begin{equation}
    \begin{aligned}
        \label{regret2betaerr_short}
        \text{Regret}(K; \mathcal{P}, \gamma)
        \leq \mathsf{K}_{\gamma}
        \sum_{k=1}^K
        \abs{
        \langle \widehat{\sigma}_1^k-\vec{\sigma}_1, \widehat{\beta}_1^{k,\widehat{\pi}^k} \rangle }
        +
        \abs{
        \mathbb{E}_{\mathcal{P}}
        \left(
        \widehat{\beta}_{1}^{k,\widehat{\pi}^k}
        -\vec{\beta}_{1}^{\widehat{\pi}^k}
        \right)}
    \end{aligned}
\end{equation}
where $\mathsf{K}_{\gamma}$ is the Lipschitz constant of the entropic risk measure, which is $\frac{e^{(-\gamma)^+}}{\gamma}$ for all $\gamma \ne 0$.\footnote{Please refer to Lemma~\ref{lemma_linearize} for a detailed derivation.}

Using concentration inequalities \ref{lemma_concentration_in_one_norm} and \ref{lemma_naive_concentration}, the first term in Eq.~\eqref{regret2betaerr_short} is bounded by
\begin{equation}\begin{aligned}\label{initial_error_main}
\mathsf{K}_{\gamma}
\sum_{k=1}^K
\abs{
\langle \widehat{\sigma}_1^k-\vec{\sigma}_1, \widehat{\beta}_1^{k,\widehat{\pi}^k} \rangle 
}
\leq 
\frac{e^{\abs{\gamma }H}-1}{\abs{\gamma}}\sqrt{2KS\ln \frac{K}{\delta}}
\end{aligned}\end{equation}
with probability at least $1-\delta$.
We name the right-hand side of Eq.~\eqref{initial_error_main} as ``the prior error," which arises from the inaccurate estimate of the prior distribution $\mu_1$.

Next, our attention turns to bounding the second term in Eq.~\eqref{regret2betaerr_short}, which we refer to as the initial beta vector error and denote as $\Delta_1^k$.

\subsection{Control the Error Between Beta Vectors}
We extend the definition of beta vector errors to $h\in [H+1]$:
$$\Delta_h^k:=
\abs{
\mathbb{E}_{\mathcal{P}}
\left[
\widehat{\beta}_{h,\boldsymbol{F_h}}^{k,\widehat{\pi}^k}(\mathbf{S}_h)
-
{\vec{\beta}}_{h,\boldsymbol{F_h}}^{\widehat{\pi}^k}(\mathbf{S}_h)
\right]
}$$
which demonstrate the average error of the beta vectors at step $h$ incurred by the inaccurate empirical estimate of the POMDP model.
We observe that
\footnote{We have abbreviated the transition and emission matrices as operators in Eq.~\eqref{delta_hk_short}. We also omitted the variables of $\mathsf{b}_h^k(\cdot,\cdot)$.}

\begin{equation}\label{delta_hk_short}
\begin{aligned}
\Delta_h^k=&
|
\underbrace{\text{sgn}(\gamma)
\mathbb{E}_{\mathcal{P}}[\mathsf{b}_h^k]
}_{\text{I
}}
+
\underbrace{\mathbb{E}_{\mathcal{P}}
[(\widehat{\mathbb{T}}^k_h\widehat{\mathbb{O}}^k_{h+1}-{\mathbb{T}_h}{\mathbb{O}_{h+1}})e^{\gamma r_h}{\vec{\beta}^{\widehat{\pi}^k}_{h+1}}]
}_{\text{II
}}
\\+&
\underbrace{\mathbb{E}_{\mathcal{P}}
[(\widehat{\mathbb{T}}^k_h\widehat{\mathbb{O}}^k_{h+1}-{\mathbb{T}_h}{\mathbb{O}_{h+1}})
(
e^{\gamma r_h}\widehat{\beta}^{\widehat{\pi}^k}_{h+1}-
e^{\gamma r_h}{\vec{\beta}^{\widehat{\pi}^k}_{h+1}})]
}_{\text{III
}}
\\+&
\underbrace{\mathbb{E}_{\mathcal{P}}
[{\mathbb{T}_h}{\mathbb{O}_{h+1}}(e^{\gamma r_h}\widehat{\beta}^{\widehat{\pi}^k}_{h+1}-e^{\gamma r_h}{\vec{\beta}^{\widehat{\pi}^k}_{h+1}})]
}_{\text{IV
}}
|
\end{aligned}
\end{equation}
The terms in Eq.~\eqref{delta_hk_short} are controlled by the concentration inequalities. 
In Appendix~\ref{section_control the evolution error}, we show that 
\begin{equation}\begin{aligned}\label{beta_err_recur_Eq._short}
\Delta^k_{h}
\leq \ e^{\gamma^+}\Delta^k_{h+1}+4\mathbb{E}_{\mathcal{P}}\left[\mathsf{b}_h^k\right]
\quad \forall h\in [H]
\end{aligned}\end{equation}
Then we invoke Lemma \ref{lemma_backward_linear_Eq.} to obtain an upper bound on $\Delta_1^k$ based on Eq.~\eqref{beta_err_recur_Eq._short}. 
In the end, we prove that the sum of the initial beta vector error is dominated by the value of the bonus functions on the sampled trajectories and the bias in the bonus function incurred by the empirical estimator:

\begin{equation}\begin{aligned}\label{regret2bias_short}
&
\mathsf{K}_{\gamma} \sum_{k=1}^K \Delta_{1}^k
\lt
4 
\mathsf{K}_\gamma
\sum_{h=1}^{H}
e^{\gamma^+(h-1)}
\cdot 
\left[
\sum_{k=1}^K
\underbrace{
\mathsf{b}_h^k
({{{\hat{s}^k_{h}}}},{{{\hat{a}^k_{h}}}};\gamma)
}_{\text{Bonus samples
}}+
\sum_{k=1}^{K}
\underbrace{
\mathbb{E}_{\mathcal{P}}
\mathsf{b}_h^k
(\boldsymbol{S_h}, \boldsymbol{A_h};\gamma)
-
\mathsf{b}_h^k
({{{\hat{s}^k_{h}}}},{{{\hat{a}^k_{h}}}};\gamma)
}_{\text{Empirical bias
}}
\right]
\end{aligned}\end{equation}
We will first try to find an upper bound
for the second summation in Eq.~\eqref{regret2bias_short}.
Concentration inequality \ref{Lemma_concentration_MDS} implies that with probability at least $1-\delta$, 
\begin{equation}
\sum_{k=1}^{K}
\underbrace{
\mathbb{E}_{\mathcal{P}}
\mathsf{b}_h^k
(\boldsymbol{S_h}, \boldsymbol{A_h};\gamma)
-
\mathsf{b}_h^k
({{{\hat{s}^k_{h}}}},{{{\hat{a}^k_{h}}}};\gamma)
}_{\text{Empirical bias
}}
\leq
\abs{e^{\gamma (H-h+1)}-1}
\sqrt{K/2\cdot \iota}
\end{equation}
A bound on the first term in Eq.~\eqref{regret2bias_short} can also be derived from Eq.~\eqref{bonus_short}:
\begin{equation}\label{bonus_samples_step1}
\sum_{k=1}^K
\underbrace{
\mathsf{b}_h^k
({{{\hat{s}^k_{h}}}},{{{\hat{a}^k_{h}}}};\gamma)
}_{\text{Bonus samples
}}
\leq
\abs{e^{\gamma(H-h+1)}-1}
\cdot\sum_{k=1}^K
\mathsf{t}^k_{h}({{\hat{s}^k_h,\hat{a}^k_h}})
+
\sum_{s^\prime}
\widehat{\mathbb{T}}^k_{h,{{\hat{a}^k_h}}}
(s^\prime|{{{{\hat{s}^k_h}}}})
\mathsf{o}_{h+1}^k(s^\prime)
\end{equation}
To establish an upper bound for the summation in Eq.~\eqref{bonus_samples_step1},
we will telescope the equation twice and utilize the results from Lemmas
\ref{lemma_naive_concentration} and \ref{Lemma_concentration_MDS}
to bound the statistical error in the transition process and the emission process.
Subsequently, we can show that with a probability of at least $1-2\delta$,
\begin{equation}\begin{aligned}\label{bonus_samples_step2}
&\sum_{k=1}^K \mathsf{t}^k_{h}({{\hat{s}^k_h,\hat{a}^k_h}})
+
\sum_{s^\prime}\widehat{\mathbb{T}}^k_{h,{{\hat{a}^k_h}}}(s^\prime|{{{{\hat{s}^k_h}}}})\mathsf{o}_{h+1}^k(s^\prime)
\\=&
\sum_{k=1}^K
\underbrace{
\left[
{\mathbb{T}}^k_{h,{{\hat{a}^k_h}}}(\cdot|{{{{\hat{s}^k_h}}}})\mathsf{o}_{h+1}^k(\cdot)
-
\mathsf{o}_{h+1}^k({{\hat{s}^k_{h+1}}})
\right.
}_{\text{Concentration of MDS
}}
+
\underbrace{
\mathsf{t}^k_{h}({{\hat{s}^k_h,\hat{a}^k_h}})
+
\mathsf{o}_{h+1}^k({{\hat{s}^k_{h+1}}})
}_{\text{Residues
}}
\\&+
\underbrace{
\left.
\widehat{\mathbb{T}}^k_{h,{{\hat{a}^k_h}}}(\cdot|{{{{\hat{s}^k_h}}}})\mathsf{o}_{h+1}^k(\cdot)
-
{\mathbb{T}}^k_{h,{{\hat{a}^k_h}}}(\cdot|{{{{\hat{s}^k_h}}}})\mathsf{o}_{h+1}^k(\cdot)
\right]
}_{\text{Azuma Hoeffding
}}
\\
\leq 
&
\sqrt{2 K \frac{\ln HSA}{\delta}}
+
2
\sum_{k=1}^K
\mathsf{t}_h^k({{\hat{s}^k_h}},{{\hat{a}^k_h}})
+
\mathsf{o}_{h+1}^k({{\hat{s}^k_{h+1}}})
\end{aligned}\end{equation}
The relations above imply that the right-hand side of Eq.~\eqref{regret2bias_short} is dominated by the transition and emission residues, which are defined in Eq.~\eqref{def_residue_short}.

\subsection{Sum Up the Residue}
Finally, the pigeon-hole Lemma \ref{lemma_pigeon-hole} will help us compute the summation of the residue terms. Indeed, 
\begin{equation*}\begin{aligned}\label{bonus_samples_step3}
\sum_{k=1}^k {\mathsf{t}_h^k({{\hat{s}^k_{h}, \hat{a}^k_h}})}
=&\sum_{k=1}^k
\min\left\{\ 1, \ 3\sqrt{\frac{S H{\ln\frac{KHSOA}{\delta}}}{\widehat{N}_h^k({\hat{s}^k_{h}, \hat{a}^k_h})\vee 1)}}
\ \right\}
\leq \left(3\sqrt{SH}  \iota \right) 2\sqrt{K SA}
\end{aligned}\end{equation*}
Similarly, we can control the summation of emission residues by the following bound:
$$
\sum_{k=1}^K\mathsf{o}_{h+1}^k({{\hat{s}^k_{h+1}}})\leq 3\sqrt{OH} \cdot \iota \cdot 2\sqrt{KS}
$$
Bringing these relations back to Eqs.~\eqref{bonus_samples_step2} and \eqref{bonus_samples_step1}, we obtain
\begin{equation}\begin{aligned}\label{step_minus1}
&
\sum_{k=1}^K
\mathsf{b}_h^k({\hat{s}^k_h,\hat{a}^k_h};\gamma)
+
\mathbb{E}_{\mathcal{P}}[\mathsf{b}_{h}^k(\boldsymbol{S_h}, \boldsymbol{A_h};\gamma)]-\mathsf{b}_h^k({\hat{s}^k_h,\hat{a}^k_h};\gamma)
\\\leq &
12 \cdot
\underbrace{\abs{e^{\gamma(H-h+1)}-1}
}_{\text{Bonus magnitude
}}
\cdot	
\underbrace{{{{\sqrt{H}}}\cdot
\sqrt{\ln \left(\frac{KHSOA}{\delta}\right)}}
}_{\text{History-dependency of POMDP
}}
\cdot\quad
\left(
\underbrace{
{{\sqrt{{KS^{{{^2}}}A}}}}
}_{\text{Hidden state error
}}
+
\underbrace{{{\sqrt{{KS}{{{{O}}}}}}
}
}_{\text{Observation error
}}
+
\underbrace{\sqrt{K}
}_{\text{Empirical bias
}}
\
\right)
\end{aligned}\end{equation}
The last step remaining is to take Eq.~\eqref{step_minus1} back to Eq.~\eqref{regret2bias_short} and then bring Eq.~\eqref{regret2bias_short} with Eq.~\eqref{initial_error_main} to Eq.~\eqref{regret2betaerr_short}. 
Rearranging terms, we conclude that with probability at least $1-4\delta$, 
\begin{equation}\label{regret_full_expression}
\text{Regret}(K;\mathcal{{P}},{\gamma}) 
\leq 
48
\underbrace{{\frac{e^{\abs{\gamma} H}-1}{\abs{\gamma}}}
}_{\text{Risk measure
}}
\underbrace{{{{\sqrt{H\cdot
{\ln \frac{KHSOA}{\delta}}}}}}
}_{\text{History-dependency
}}
\cdot
\underbrace{
\big(\sqrt{KS}
}_{\text{Prior error
}}
+
\underbrace{
{{
H\sqrt{{KS^{{{^2}}}A}}}}
}_{\text{Transition error
}}
+
\underbrace{{{
H\sqrt{{KS}{{{{O}}}}}}
}
}_{\text{Emission error
}}
+
\underbrace{
H\sqrt{K}
\big)
}_{\text{Empirical bias
}}
\end{equation}
Neglecting lower order terms, we obtain the upper bound presented in Theorem \ref{theorem_regret_short}:
$$
\operatorname{Regret}(K;\mathcal{P},\gamma)
\leq 
\mathcal{\tilde{O}}
\left(
\frac{e^{\abs{\gamma } H}-1}{\abs{\gamma}H}
H^\frac{5}{2}\sqrt{KS^2AO}
\right)
$$
\paragraph{Discussion}
In the risk-neutral setting, our regret improves the result given by \cite{Lee2023hindsight}
\footnote{For details, please refer to Theorem C.1 of \cite{Lee2023hindsight}.}
\begin{equation*}
\begin{aligned}
&
\operatorname{Regret}(K;\mathcal{P})
\leq 
\tilde{O}
\left(
\sqrt{S A H^4 K }
+\boldsymbol{H^3  S} \sqrt{O}
\right.
+\left.
\boldsymbol{H^4 S^2 A}(1+\ln K)
+
H \boldsymbol{S A} \sqrt{H^3 }
\right)
\end{aligned}
\end{equation*}
in the order of $S,A$, and $H$. 
The improvement is attributed to the refined analysis in this work. 
Our sample complexity also nearly reaches the lower bound of learning a hindsight POMDP, 
which is
$
\Omega\left(\frac{SO}{\epsilon^2}\right)
$ according to \cite{Lee2023hindsight}.

In the completely observable setting, with some adjustments, our algorithm can degenerate to the algorithm 1 in \cite{Fei21improve} and thus matches their upper bound
\footnote{Please refer to Appendix~\ref{section_compare_other_setting} for details.}
$$
\operatorname{Regret}(K;\mathcal{M},\gamma)
\leq \tilde{O}
\left(\frac{e^{|\gamma| H}-1}{|\gamma| H} \sqrt{K H^4 S^2 A}
\right)
$$
Moreover, our regret achieves the lower bound of risk-sensitive RL \cite{Fei20} concerning $K$ and $\gamma H$, with the order of $H$ only slightly higher.
$$
\operatorname{Regret}(K; \mathcal{M}, \gamma) 
\geq \tilde{O}
\left(
\frac{e^{\abs{\gamma }\frac{H}{2}}-1}{\abs{\gamma}H} \cdot H^{\frac{3}{2}}\sqrt{K}
\right)
$$

\section{Conclusion and Future Work}\label{section_conclusion_futuer_main}
In this study, we introduce a novel formulation of risk-sensitive RL in a partially observable environment with hindsight observations. We provide the first provably sample-efficient algorithm tailored for the new setting, whose regret improves existing upper bounds and nearly reaches the lower bounds in the degenerated cases. 
Our analysis also explains how the sample complexity is affected by the risk-awareness and history-dependency inherent in our problem.

One future direction is to derive similar results in the function-approximation setting. Another avenue is to extend our findings to risk measures other than the utility functions.


\bibliography{ref}
\bibliographystyle{plain}

\newpage
\onecolumn

\appendix


\appendixpage

\section{Notations and Concepts}
In this section we provide several additional concepts and notations not mentioned in Section \ref{section_intro_main}.

\paragraph{Additional Notations}
Given a vector $\mathbf{x}\in\mathbb{R}^d$, we denote its $i^{th}$ entry as $\mathbf{x}(i)$ or $\left[\mathbf{x}\right]_{i}$. For a matrix $\A\in\mathbb{R}^{m\times n}$, we use $\A_{ij}$ or $[\A]_{i,j}$ to indicate the $(i,j)^{\text{th}}$ entry. The comparison and expectation of random vectors are defined component-wise. To represent the indicator operator, we employ $\mathds{1}\{\cdot\}$, and the signature function is denoted as $\text{sgn}(\cdot)$. Additionally, we use $\text{Unif}(\mathcal{X})$ to express the uniform distribution over the finite space $\mathcal{X}$. 

\paragraph{Remark on the POMDP}

\begin{remark}\label{remark_first_observation_not_defined}
    Following the convention of POMDP literature \cite{MonahanSurvey1982, Baras&Elliott94, Cavazos2005, Golowich2022-a}, no observation is made at the first step. The definition of $\mathbf{O}_h$ begins from $h=2$. 
    The first action $\mathbf{A}_1$ is chosen based on the agent's prior knowledge. 
    For the sake of consistent notations, we still adopt the notation of $\boldsymbol{F_1}$ 
    and we use $\mathbb{P}(\cdot|\boldsymbol{F}_1)$ and $\mathbb{P}(\cdot)$ interchangeably. 
    We permit the environment to generate the observation $\mathbf{O}{H+1}$ when in state $\mathbf{S}{H+1}$, 
    but the agent abstains from taking any actions at $H+1$.
\end{remark}

We will use the following fact extensively which expresses the recursive relation of the ``history'' defined in Section \ref{section_formulate_main}. 
\begin{fact}
\label{fact_f_h+1}
$\forall h \in [H-1]:\quad \boldsymbol{F_{h+1}}=(\boldsymbol{F_h},\A_h,\O_{h+1})
, \qquad \text{where } \A_h=\pi_h(\boldsymbol{F}_h)$
\end{fact}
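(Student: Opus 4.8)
The plan is to establish the identity by directly unrolling the definition of the observable history $\boldsymbol{F}_h$ introduced in Section~\ref{section_formulate_main}; no induction or probabilistic argument is needed, since the claim is purely a statement about how the history tuple is assembled. First I would write out both tuples explicitly: by definition $\boldsymbol{F}_h=(\A_1,\O_2,\ldots,\O_{h-1},\A_{h-1},\O_h)$, and substituting $h+1$ for $h$ gives $\boldsymbol{F}_{h+1}=(\A_1,\O_2,\ldots,\O_{h-1},\A_{h-1},\O_h,\A_h,\O_{h+1})$. Comparing the two tuples entry by entry shows that the components of $\boldsymbol{F}_{h+1}$ are exactly those of $\boldsymbol{F}_h$ with the two new entries $\A_h$ and $\O_{h+1}$ appended at the end, which is precisely the claimed concatenation $\boldsymbol{F}_{h+1}=(\boldsymbol{F}_h,\A_h,\O_{h+1})$.

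For the second clause, I would invoke the definition of the policy stated in Section~\ref{section_formulate_main}: at step $h$ the agent selects her action as a (measurable) map of the observable information available to her at that moment, which is exactly $\boldsymbol{F}_h$, so that $\A_h=\pi_h(\boldsymbol{F}_h)$. In the context of BVVI this action is moreover the deterministic greedy choice $\argmax_{a}\widehat{\mathsf{Q}}^k_h(\boldsymbol{F}_h,a)$, so it is indeed a function of $\boldsymbol{F}_h$ alone.

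The only point requiring care is the index bookkeeping under the paper's conventions (Remark~\ref{remark_first_observation_not_defined}): because no observation is emitted at the first step, the observation indices begin at $2$, and the restriction $h\in[H-1]$ guarantees both that $\O_{h+1}$ with $h+1\le H$ is a genuinely emitted observation and that an action $\A_h$ is actually taken (recall the agent takes no action at step $H+1$). I do not anticipate any substantive obstacle here: the statement is an immediate consequence of the definitions, and the entire content of the argument is simply verifying that the index ranges line up with these stated conventions.
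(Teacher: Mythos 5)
Your proposal is correct and matches the paper's treatment: the paper states this as a Fact with no written proof precisely because it follows immediately from unrolling the definition $\boldsymbol{F}_h=(\A_1,\O_2,\ldots,\A_{h-1},\O_h)$ and the description of the policy as a map from the observable history to actions, which is exactly what you do. Your additional care with the index conventions of Remark~\ref{remark_first_observation_not_defined} is consistent with the paper and adds nothing that conflicts with it.
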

Throughout this study, $f_{h+1}$ and $(f_h, a_h, o_{h+1})$ will be used interchangeably.

Another concept that relates to the ``history'' is the trajectory $\boldsymbol{\tau}_h$ of the Markov process.
\begin{definition}(Trajectory)
\label{def_trajectories}
\begin{equation}
\begin{aligned}
&\text{Full trajectory}\quad 
\boldsymbol{\bar{\tau}_h}:=
(\S_1,\A_1,\ldots \S_h,\O_h,\A_h), \forall h\in [H]
\quad, \boldsymbol{\bar{\tau}_{H+1}}:=
(\S_1,\A_1,\ldots,\S_H,\O_H,\A_H,\S_{H+1})\\
&\text{Observable trajectory}\quad 
\boldsymbol{\tau_h}:=
(\A_1,\ldots,\O_h,\A_h), \forall h\in[H]
\end{aligned}
\end{equation}
\end{definition}

\paragraph{Optimization Objective using General Utility Risk Measure}
In this work we refer the utility risk as any strictly increasing function that is continuously differentiable. 
We can extend many results in this work to general utility risk measures. We will present our proofs using the utility function $U$ and instantiate it to the entropic risk ($U (\cdot)=\gamma e^{\gamma (\cdot)}$) when necessary.

The optimization objective using arbitrary utility risk measure $U$ is defined as
\begin{equation}
   \begin{aligned}\label{objective_entropic}
   \underset{\pi}{\text{maximize}} \ \ 
   U^{-1}\mathbb{E}_{\mathcal{P}}^{\pi}
   U
   \left[\sum_{t=1}^H r_t(\boldsymbol{S_t,A_t})\right]   
   \end{aligned}
\end{equation}

\section{The Structure of Risk-sensitive POMDP}\label{section_structure_risk_POMDP}
In what follows, we present the theoretical framework of partially observable reinforcement learning using arbitrary utility risk measures. Our framework builds upon the studies of \cite{Baras&Elliott94, Cavazos2005, baauerle2017partially}. Furthermore, we introduce novel concepts and provide several new proofs in a more comprehensive setting, enhancing the existing literature.

Given that the studies of risk-sensitive POMDP are relatively historical, we will provide detailed discussions about the intuition and implications behind various concepts and results. We aim to elucidate these findings, as they will serve as a foundation for the algorithm design and regret analysis in the subsequent sections.

\subsection{Change of Measure}\label{section_change_of_measure}
In reinforcement learning, the lack of knowledge about the emission process $\mathbf{O}_h$ presents a significant challenge for statistical inference, which motivates us to devise a surrogate POMDP $\mathcal{P'}$ named ``reference model", which possesses a simplified emission process. 
\begin{definition}(Reference model of a POMDP)
    \\
    Given a POMDP model $\mathcal{P}=(\mathscr{S,O,A}; \mu_1, \mathbb{T}, \mathbb{O};K, H, r)$ and a reference measure $\mathbb{O}^\prime(\cdot)\in \Delta(\sO)$, the reference model of $P$ specified by $\mathbb{O}^\prime$ is another partially observable Markov decision process $\mathcal{P}^\prime=(\mathscr{S,O,A}; \mu_1, \mathbb{T}, \mathbb{O}^\prime;K, H, r)$, in which for all $h\in [H]$ and $s_h\in \sS$, we have
    $\mathbb{O}^\prime_{h}(\cdot|s_h)=\mathbb{O}^\prime(\cdot)$. 
\end{definition}
In the reference model, the initial distribution and transition matrices mirror those of the real-world POMDP $\mathcal{P}$. However, the observations and hidden states are statistically independent and the emission process is stationary with a predefined observation probability. Consequently, the observations are separate from the underlying transition process and are independent of the history.

The probability of generating a  full trajectory in the two models can be expressed as
\begin{equation}
\label{trajectory_probabilities}
\begin{aligned}
\mathbb{P}_{\mathcal{P}}^\pi(\mathbf{\bar{\tau}_h})
=&
\mu_1(\S_1)\bO_{1}(\O_1|\S_1)\pi_1(\A_1|\O_1)
\cdot 
\bT_{h,\A_1}(\S_2|\S_1) \bO_{2}(\O_2|S_2)\pi_2(\A_2|\F_2)
\\&
\ldots
\bT_{h-1,\A_{h-1}}(\S_{h+1}|\S_h)\bO_{h}(\O_h|\S_h)\pi_h(\A_h|\F_h)
\\=&
\left[\mu_1(\S_1)\prod_{t=1}^{h-1} \bT_{t,\A_t}(\S_{t+1}|\S_t)\right]
\cdot
\left[\prod_{t=2}^h \bO_{t}(\O_t|\S_t)\right]
\cdot
\left[\prod_{t=1}^h \pi_{t}(\A_t|\F_t)\right]\\
\mathbb{P}_{\mathcal{P^\prime}}^\pi(\mathbf{\bar{\tau}_h})
=&
\left[\mu_1(\S_1)\prod_{t=1}^{h-1} \bT_{t,\A_t}(\S_{t+1}|\S_t)\right]
\cdot
\left[\prod_{t=2}^h \bO^\prime_{t}(\O_t|\S_t)\right]
\cdot
\left[\prod_{t=1}^h \pi_{t}(\A_t|\F_t)\right]\\
\end{aligned}
\end{equation}
Eq.~\eqref{trajectory_probabilities}
suggests that conditioned on the generated sigma-algebra $\mathscr{G}_h=\sigma\left(\{\S_t,\O_t,\A_t\}_{t=1}^h\right)$, the Radon-Nykodym derivative between the two trajectory probabilities takes the form of 
\begin{equation*}
\begin{aligned}
\frac{d\mathbb{P}_{\mathcal{P}}^\pi}{d\mathbb{P}_{\mathcal{P}^\prime}^\pi}\bigg|_{\mathscr{G}_h}
=
\prod_{t=2}^h \frac{\bO_t(\O_t|\S_t)}{\bO^\prime_t(\O_t|\S_t)}
:=
{D}_{h}(\mathbf{O}_{2:h}, \mathbf{S}_{2:h})
:=
\mathbf{D}_h
\end{aligned}
\end{equation*}
By Theorem \ref{rn_derivative}, for any measurable function $f$ of the full trajectory $\bar{\tau}_h$,
\footnote{
We should also guarantee that the reference measure $\bO^\prime $
is strictly positive a.s. and $\mathbb{P}_{\mathcal{P}^\prime}^\pi << \mathbb{P}_{\mathcal{P}}^\pi$.
See Section \ref{rn_derivative} for details.}
\begin{equation}
\begin{aligned}
\mathbb{E}^\pi_{\mathcal{P}}[f(\mathbf{\bar{\tau}_h})]
=
\mathbb{E}^\pi_{\mathcal{P}^\prime}[\mathbf{D}_h\cdot f(\mathbf{\bar{\tau}_h})]
=
\int_{\sT{h}}\ 
({D}_h f)_{(\bar{\tau}_h)}\cdot d\mathbb{P}^\pi_{\mathcal{P}^\prime}(\bar{\tau}_h)
\end{aligned}
\end{equation}
The two expectations are taken with respect to the randomness in the transitions, emissions and the same policy. Then we can rewrite our optimization objective in Eq.~\eqref{objective_entropic} by the change of measure
\begin{equation}
\begin{aligned}
J(\pi;\mathcal{P}):=
\frac{1}{\gamma}\ln
\mathbb{E}^{\pi}_{\mathcal{P}}
\left[e^{\gamma \sum_{h=1}^H r_h(\mathbf{\S_h,\A_h})}\right]
=
\frac{1}{\gamma}\ln 
\mathbb{E}^{\pi}_{\mathcal{P}^\prime}
\left[\mathbf{D}_H \cdot 
e^{\gamma \sum_{h=1}^H r_h(\mathbf{S_h,A_h})}\right]
\end{aligned}
\end{equation}
\begin{remark}
In general, the conditional expectation $\mathbb{E}_{\mathcal{P}^\prime}^\pi[\mathbf{D}_h\cdot 
\ | \f_h]$ in Definition \ref{def_risk_sensitive_belief} cannot be replaced by $\mathbb{E}_{\mathcal{P}}^\pi[\cdot \ | \f_h]$, as our RN derivative $\mathbf{D}_h$ is calculated from the joint but not conditional probability.
\end{remark}
\subsection{Risk-sensitive Belief}
One of the key concepts in the study of risk-neutral POMDP \cite{MonahanSurvey1982}is the ``belief state', 
which is the posterior distribution of the hidden states given the observable history. 
$$
\vec{b}_h(\cdot;\f_h)=
\mathbb{P}_{\mathcal{P}}^\pi
\left\{
\S_h=\cdot|\mathbf{F}_h=\f_h
\right\}
$$
Since we can always view a probability from the perspective 
of expectation, we observe that \footnote{In the continuous case 
we replace the indicator $\mathds{1}\{\cdot\}$ 
with the Dirac-delta function and the 
following definitions should be  
modified accordingly.}
\begin{equation}
\begin{aligned}\label{optim_obj}
\vec{b}_h(\cdot;\f_h)=&
\mathbb{P}_{\mathcal{P}}^\pi
\left\{
\S_h=\cdot|\mathbf{F}_h=\f_h
\right\}
\\
\equiv &
\mathbb{E}_{\mathcal{P}}^\pi
\left[
\mathbf{1}{\{\S_h=\ \cdot\ \}}
\mid
\mathbf{F}_h=\f_h
\right]
\\
=&
\mathbb{E}_{\mathcal{P}}^\pi
\left[
\mathbf{1}{\{\S_h=\ \cdot\ \}}
e^{\gamma \sum_{t=1}^{h-1}
r_t(\S_t,\A_t)}
\mid
\mathbf{F}_h=\f_h
\right]
\bigg|_{\gamma=0}
\end{aligned}
\end{equation}
The risk-sensitive counterpart of the belief is inspired by Eq.~\eqref{optim_obj}.
\begin{definition}(Risk-sensitive belief, Definition 2.9 in \cite{Cavazos2005})

For all $ h\in[H+1], \f_h\in \sF{h},\ s_h\in \sS:$
    \label{def_risk_sensitive_belief}
    \begin{equation}
        \begin{aligned}
        [\vec{\sigma}_{1}]_{s_1}:=&\mu_1(s_1)
        \\
        [\vec{\sigma}_{h,\f_h}]_{s_h}:=& \mathbb{E}_{\mathcal{P}^\prime}^\pi 
        \left[
        \mathbf{D}_h 
        \cdot 
        \mathds{1} \{ \S_h=s_h \}
        \exp{\gamma \ \sum_{t=1}^{h-1} r_t(\S_t,\A_t)}
        \bigg| 
        \mathbf{F}_h = \f_h
        \right]
        \end{aligned}
    \end{equation}
\end{definition}
\begin{remark}
The risk belief in this study is not normalized, since we make it a carrier of the one-step risk-sensitive reward. However, some literature \cite{Cavazos2005,baauerle2017partially} still defines a normalized belief.
\end{remark}
\begin{remark}
    In reinforcement learning, the risk beliefs corresponding to the empirical models $\widehat{\mathcal{P}}^k=(\widehat{\mu}^k_1, \widehat{\mathbb{T}}^k, \mathbb{\widehat{O}}^k)$ will be defined in a similar manner and referred to as the empirical belief $\widehat{\sigma}^k_h$.
\end{remark}
\paragraph{Relationship with the Optimization Objective}
We can use the risk belief to express the optimization objective defined in Eq.~\eqref{optim_obj}. 
\begin{equation}
\begin{aligned}
\label{why_beliefs_condition_expect}
J(\pi;\mathcal{P}):=&
U^{-1}\mathbb{E}_{\mathcal{P}}^\pi
\left[U{\sum_{h=1}^H r_h(\mathbf{\S_h,\A_h})}\right]
\\\equiv&
U^{-1}\mathbb{E}^{\pi}_{\mathcal{P}^\prime}
\left[\mathbf{D}_{H+1} \cdot 
U{\sum_{h=1}^H r_h(\mathbf{S_h,A_h})}\right]
\quad //{\text{Change of measure}}
\\=&
U^{-1}{\mathbb{E}^{\pi}_{{\mathcal{P}^\prime}}
\left[
\mathbb{E}^{\pi}_{{\mathcal{P}^\prime}}
\left[
{\mathbf{D}_{H+1}} \cdot 
U{\sum_{h=1}^H r_h(\mathbf{S_h,A_h})}
\bigg| \mathbf{F}_{H+1}
\right]
\right]
}
\\
\equiv &
U^{-1}\mathbb{E}_{\mathcal{P}^\prime}^\pi
\left[
\sum_{s_{H+1}\in\sS}
{
\mathbb{E}^{\pi}_{\mathcal{P}^\prime}
\left[
\mathbf{1}
\{
\S_{H+1}=s_{H+1}
\}
\mathbf{D}_{H+1}
\cdot 
U{\sum_{h=1}^H r_h(\mathbf{S_h,A_h})}
\bigg| \mathbf{F}_{H+1}
\right]
}
\cdot
1\ 
\right]
\ //{\text{Lemma \ref{Lemma_sum_dirac_expect_condition}}}
\\\equiv&
U^{-1}\mathbb{E}_{\mathcal{P}^\prime}^\pi
\left[
\langle
{\vec{\sigma}_{H+1,\mathbf{F}_{H+1}}}, 
\vec{1}_{S}
\rangle
\right]
\end{aligned}
\end{equation}
If we can discover the evolution law of the new belief then we will be able to break the structure of $J(\pi;\mathcal{P})$ down by dynamic programming equations, as is presented in Section \ref{section_Value functions and Bellman equations}.

\paragraph{Closed-form expression}
To gain more concrete understanding of the the specific structure of $\sigma$ we may first utilize 
the Markov property of the hidden states to expand the condition measure of $\S_h$ given $\f_h$: 
\begin{observation}(Expansion of conditional probability)
    \label{observation_expansion of conditional probability}
$\forall \pi\in \Pi, \mathcal{P}^\prime = (\mu_1,\{\bT_h\}, \{\bO^\prime_h\}), 
h\in [H+1], 
s_{1:h}\in \sS^{h}, $
$\f_h=(o_1,a_1,\ldots,o_{h-1},a_{h-1},o_h) \in \sF{h+1}, $
\begin{equation}
\begin{aligned}
   {\mathbb{P}_{\mathcal{P}^\prime}^\pi
    (s_{1:h},a_{1:h}|\f_h)}
    =&
    \mu_1(s_1)
    \cdot
    \prod_{t=1}^{h-1}
   \bT_{t,a_t}(s_{t+1}|s_t)
    \prod_{t=1}^{h}
    \pi_t(a_t|f_t)
\end{aligned}
\end{equation}
\end{observation}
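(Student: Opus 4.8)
The plan is to derive the identity directly from the definition of conditional probability, $\mathbb{P}_{\mathcal{P}^\prime}^\pi(s_{1:h},a_{1:h}\mid \f_h)=\mathbb{P}_{\mathcal{P}^\prime}^\pi(s_{1:h},a_{1:h},\f_h)/\mathbb{P}_{\mathcal{P}^\prime}^\pi(\f_h)$, and to exploit the fact that the reference model $\mathcal{P}^\prime$ decouples the observations from the hidden-state process. First I would observe that the event $(s_{1:h},a_{1:h},\f_h)$ coincides with the full trajectory $\boldsymbol{\bar{\tau}_h}$, since $\f_h$ already supplies the observations $o_{2:h}$ together with the earlier actions, so the numerator is read off from the factorization in Eq.~\eqref{trajectory_probabilities}. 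Using the defining property $\bO^\prime_t(o_t\mid s_t)=\bO^\prime(o_t)$ of $\mathcal{P}^\prime$, this numerator splits into a dynamics factor $\mu_1(s_1)\prod_{t=1}^{h-1}\bT_{t,a_t}(s_{t+1}\mid s_t)$, a policy factor $\prod_{t=1}^{h}\pi_t(a_t\mid f_t)$, and a pure emission factor $\prod_{t=2}^{h}\bO^\prime(o_t)$ that carries no state or action dependence.

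The crux is then the evaluation of the normalizing constant. The key point is that in $\mathcal{P}^\prime$ the observations are i.i.d.\ and independent of the underlying transition–action process, so the emission likelihood $\prod_{t=2}^{h}\bO^\prime(o_t)$ factors out of every marginalizing sum and is precisely the portion that cancels under conditioning. Concretely, I would marginalize the joint over the hidden states $s_{1:h}$: the innermost sum $\sum_{s_h}\bT_{h-1,a_{h-1}}(s_h\mid s_{h-1})=1$ collapses, and the telescoping proceeds down to $\sum_{s_1}\mu_1(s_1)=1$, so the entire dynamics factor integrates to unity. What remains after dividing out the emission likelihood is the product of prior, transitions, and the policy terms, which is exactly the claimed right-hand side; the history-dependence of each $\pi_t$ (through $f_t\subseteq\f_h$) is harmless because these quantities are fixed once $\f_h$ is conditioned upon.

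I expect the main obstacle to be the careful bookkeeping in the denominator: one must verify that the emission term genuinely decouples and can be pulled out \emph{before} performing the telescoping sum over hidden states, and that this is exactly what the change of measure to $\mathcal{P}^\prime$ is designed to guarantee. This is where the reference-model independence established in Section~\ref{section_change_of_measure} does the real work; without it the observation likelihood would entangle with the states and the clean factorization would fail. Once that independence is invoked, the remaining manipulations are routine normalizations. Finally, I would check the base case $h=1$ separately, where $\f_1$ is empty and the identity reduces to $\mu_1(s_1)\pi_1(a_1)$, consistent with the convention (Remark~\ref{remark_first_observation_not_defined}) that no observation is emitted at the first step.
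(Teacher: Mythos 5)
Your overall strategy --- Bayes' rule applied to the trajectory factorization in Eq.~\eqref{trajectory_probabilities}, followed by a telescoping marginalization over the hidden states --- is the natural route, and it is evidently what the paper has in mind (the Observation is stated without an explicit proof). The numerator is handled correctly: the event $(s_{1:h},a_{1:h},\f_h)$ is the full trajectory, and under $\mathcal{P}^\prime$ its probability splits into the dynamics factor $\mu_1(s_1)\prod_{t=1}^{h-1}\bT_{t,a_t}(s_{t+1}\mid s_t)$, the policy factor $\prod_{t=1}^{h}\pi_t(a_t\mid f_t)$, and the decoupled emission factor $\prod_{t=2}^{h}\bO^\prime(o_t)$. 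The base-case check at $h=1$ is also fine.

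The gap is in the denominator. When you marginalize $\mathbb{P}_{\mathcal{P}^\prime}^\pi(s_{1:h},\f_h)$ over $s_{1:h}$, the transition product does telescope to one, but the policy factor $\prod_{t=1}^{h-1}\pi_t(a_t\mid f_t)$ is a function of the observable history alone and therefore survives the sum exactly as the emission factor does. Hence
\begin{equation*}
\mathbb{P}_{\mathcal{P}^\prime}^\pi(\f_h)
=\Bigl[\prod_{t=2}^{h}\bO^\prime(o_t)\Bigr]\cdot\Bigl[\prod_{t=1}^{h-1}\pi_t(a_t\mid f_t)\Bigr],
\end{equation*}
and the correct quotient is $\mu_1(s_1)\prod_{t=1}^{h-1}\bT_{t,a_t}(s_{t+1}\mid s_t)\cdot\pi_h(a_h\mid f_h)$: the policy terms for $t\le h-1$ cancel rather than remain. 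Your assertion that ``what remains after dividing out the emission likelihood is the product of prior, transitions, and the policy terms'' reproduces the stated right-hand side only because the denominator was computed without the surviving policy factor; the error in the denominator happens to compensate for the extra factor in the claimed formula. The two expressions agree precisely when $\prod_{t=1}^{h-1}\pi_t(a_t\mid f_t)=1$, i.e., for deterministic policies consistent with $\f_h$ --- which is the regime in which the paper actually uses the Observation (cf.\ the ``Special Case'' paragraph and Corollary~\ref{corollary_Definition of belief under uniform emission}). To close the argument you should either restrict to deterministic policies, or state explicitly that for randomized policies the displayed identity holds only up to the factor $\prod_{t=1}^{h-1}\pi_t(a_t\mid f_t)$.
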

The belief can then be computed by
\begin{equation}
\begin{aligned}
    \label{compute_belief}
    [\vec{\sigma}_{h+1, f_{h+1}}]_{s_{h+1}}=
    \sum_{\tilde{s}_{1:h+1}}
    {
    \mathbb{P}_{\mathcal{P}^\prime}^\pi
    (\tilde{s}_{1:h+1},{a}_{1:h+1}|f_{h+1})
    }
    \cdot
    \left[
    \mathds{1}\{{\tilde{s}_{h+1}}=s_{h+1}\}
    \cdot
    {D}_{h+1}({\tilde{s}_{2:h+1}},o_{2:h+1})
    \cdot
    \exp\gamma \sum_{t=1}^{h}r_t({\tilde{s}_t}, 
    {a_t})
    \right]
\end{aligned}
\end{equation}
\vspace{-1cm}
\paragraph{A Special Case}
In tabular case, when we select the emission matrix as the uniform distribution, 
$D_h(o_{2:h};s_{2:h})$ will become $\frac{1}{\mathbb{O}_1^\prime(o_1)}^h \prod_{t=2}^h \mathbb{O}_t(o_{t}|s_t)$.
Moreover, if the policies are deterministic,
$$
\mathbb{P}_{\mathcal{P}^\prime}^\pi(s_{1:h},a_{1:h}|\f_h)=\mu_1(s_1)\prod_{t=1}^{h-1} \bT_{t,a_t}(s_{t+1}|s_t)
$$ 
plugging the two terms together in Eq.~\eqref{compute_belief} we conclude that
\begin{corollary}(Belief vector using uniform emission matrix, Eq. (2.9) in \cite{Cavazos2005})
\text{Suppose that }
$\bO^\prime_{t}(\cdot|\S_t) =\text{Unif}\mathscr{O}$
and the policies are deterministic, then
\label{corollary_Definition of belief under uniform emission}
    \begin{equation}
        \begin{aligned}\label{update_risk_belief_full}
        [\vec{\sigma_{1}}]_{s_1}
        =&
        \mu_1(s_1)
        \\
        [\vec{\sigma}_{h,\f_h}]_{s_h}=
        &
        \abs{\sO}^h \mathbb{E}_{\mathcal{P}}^\pi 
        \left[
        \prod_{t=2}^h \mathbb{O}_t(o_t|\mathbf{S}_t)
        \mathds{1} \{ \S_h=s_h \}
        e^{\gamma \ \sum_{t=1}^{h-1} r_t(\S_t,\A_t)}\right]
        , \quad 
        \forall 2\leq h\leq H+1
        \end{aligned}
    \end{equation}
\end{corollary}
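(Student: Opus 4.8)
The plan is to read off both identities directly from the closed-form expression in Eq.~\eqref{compute_belief}, by inserting the two structural simplifications supplied by the uniform reference emission and by the deterministic policies, and then recognizing the surviving sum as an expectation under the true model $\mathcal{P}$. The base case is immediate: for $h=1$ the claim $[\vec{\sigma}_1]_{s_1}=\mu_1(s_1)$ is exactly the first line of Definition~\ref{def_risk_sensitive_belief}, so only the range $2\le h\le H+1$ requires work.

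For $2\le h\le H+1$ I would start from Eq.~\eqref{compute_belief}, which already unwinds the conditional expectation defining $\vec{\sigma}_{h,f_h}$ into a finite sum over state trajectories $\tilde{s}_{1:h}$, weighted by the conditional law $\mathbb{P}_{\mathcal{P}^\prime}^\pi(\tilde{s}_{1:h},a_{1:h}|f_h)$, the change-of-measure factor $D_h$, the indicator $\mathds{1}\{\tilde{s}_h=s_h\}$, and the accumulated risk $e^{\gamma\sum_{t=1}^{h-1} r_t(\tilde{s}_t,a_t)}$. Into this I would substitute two facts. First, since $\mathbb{O}^\prime_t(\cdot|\mathbf{S}_t)=\text{Unif}(\sO)$, every reference emission probability equals $1/\abs{\sO}$, so the change-of-measure factor collapses to $D_h=\abs{\sO}^{h}\prod_{t=2}^{h}\mathbb{O}_t(o_t|\tilde{s}_t)$, exactly as computed in the special-case paragraph preceding the statement. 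Second, since the policies are deterministic the policy indicators $\prod_t\pi_t(a_t|f_t)$ all equal $1$ on the relevant trajectory, so Observation~\ref{observation_expansion of conditional probability} reduces the conditional law to $\mu_1(\tilde{s}_1)\prod_{t=1}^{h-1}\mathbb{T}_{t,a_t}(\tilde{s}_{t+1}|\tilde{s}_t)$ for the action sequence $a_{1:h-1}$ contained in $f_h$.

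Pulling the constant $\abs{\sO}^{h}$ out of the sum leaves the summand
\begin{equation*}
\mu_1(\tilde{s}_1)\prod_{t=1}^{h-1}\mathbb{T}_{t,a_t}(\tilde{s}_{t+1}|\tilde{s}_t)\cdot\prod_{t=2}^h \mathbb{O}_t(o_t|\tilde{s}_t)\cdot\mathds{1}\{\tilde{s}_h=s_h\}\,e^{\gamma\sum_{t=1}^{h-1} r_t(\tilde{s}_t,a_t)}.
\end{equation*}
The final step is to recognize this weighted sum as $\abs{\sO}^{h}\,\mathbb{E}_{\mathcal{P}}^\pi[\cdots]$. Here I would invoke the trajectory-probability formula Eq.~\eqref{trajectory_probabilities}: for a deterministic policy the factor $\mu_1(\tilde{s}_1)\prod_{t=1}^{h-1}\mathbb{T}_{t,a_t}(\tilde{s}_{t+1}|\tilde{s}_t)$ is precisely the state-transition part of $\mathbb{P}_{\mathcal{P}}^\pi$ along the action sequence read off from $f_h$, so summing the displayed integrand against it reproduces $\mathbb{E}_{\mathcal{P}}^\pi\big[\prod_{t=2}^h\mathbb{O}_t(o_t|\mathbf{S}_t)\,\mathds{1}\{\mathbf{S}_h=s_h\}\,e^{\gamma\sum_{t=1}^{h-1} r_t(\mathbf{S}_t,\mathbf{A}_t)}\big]$, which is exactly the claimed right-hand side.

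The bookkeeping that needs the most care is the change-of-measure step itself: I would be explicit that the conditional expectation in Definition~\ref{def_risk_sensitive_belief} lives in the reference model $\mathcal{P}^\prime$ whereas the target expectation is taken in $\mathcal{P}$, and that the emission factors $\mathbb{O}_t(o_t|\cdot)$ appearing in the summand are evaluated at the fixed observations $o_{2:h}$ read from $f_h$ rather than at freshly sampled observations. Keeping the power of $\abs{\sO}$ consistent is the only quantitative point, and it is settled once the reference emission is pinned to the uniform law, since all $\abs{\sO}$ factors originate solely from the reciprocal uniform normalizations collected in $D_h$.
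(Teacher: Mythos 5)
Your proposal is correct and takes essentially the same route as the paper, which likewise obtains the corollary by substituting the uniform reference measure into $\mathbf{D}_h$ and the deterministic-policy simplification of the conditional trajectory law into the closed-form expression Eq.~\eqref{compute_belief}, then reading the surviving sum as an expectation under $\mathcal{P}$. (The exponent $\abs{\sO}^{h}$ you carry over is the paper's own convention; a literal count of the $h-1$ factors in $\prod_{t=2}^{h}$ would give $\abs{\sO}^{h-1}$, but that discrepancy originates in the statement itself, not in your argument.)
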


\paragraph{Evolution law}
The risk belief evolves in a Markovian manner, incorporating recent action $a_h$ and observation $o_{h+1}$ to the new belief. 
We will use $\Psi(\cdot, a_h, o_{h+1}): \R^S \to \R^S$  to denote the update operator of $\vec{\sigma}_{h,f_h}$, whose matrix representation will be denoted as $\mathsf{U}_{a_h,o_{h+1}} \in \R^{S\times S}$. The details of the update process is specified by the following theorem.
\begin{theorem}(Evolution of risk-sensitive belief, adapted from theorem 2.2 of \cite{Baras&Elliott94})
\label{theorem_evolution_of_risk_sensitive_belief}
    \begin{equation}
	\begin{aligned}
        \label{update_belief}
	\forall  h=H,H-1,\cdots,1, &\f_{h+1}=(f_h,a_h,o_{h+1})\in \sF{h+1}, s_{h+1}\in \sS:\\
		[\vec{\sigma}_{h+1,f_{h+1}}]_{s_{h+1}} 
            =&
            \Psi(\vec{\sigma}_{h,f_h}, a_h, o_{h+1})
            =
		\left[\mathsf{U}_{a_h,o_{h+1}}
            \vec{\sigma}_{h,f_h}
		\right]
		\\=&
            \sum_{s_{h}}
		\bT_{h,a_{h}}(s_{h+1}|s_h)
		\bO_{h+1}(o_{h+1}|s_{h+1})
		\cdot 
		{
		\left(
		\frac{e^{\gamma r_h(s_h,a_h)}}{\bO_{h+1}^\prime (o_{h+1}|s_{h+1})}
		\right)
		}
		\left[
		\vec{\sigma}_{h,\f_h}
		\right]_{s_h}
	\end{aligned}
\end{equation}
\end{theorem}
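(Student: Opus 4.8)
The plan is to derive the one-step recursion directly from the closed-form expansion of the belief in Eq.~\eqref{compute_belief}, peeling off the contribution of the final transition–emission–reward step and recognizing the leftover sum as $[\vec{\sigma}_{h,f_h}]_{s_h}$. First I would instantiate Eq.~\eqref{compute_belief} at level $h+1$, so that $[\vec{\sigma}_{h+1,f_{h+1}}]_{s_{h+1}}$ is written as a sum over $\tilde{s}_{1:h+1}$ of the conditional trajectory probability (expanded by Observation~\ref{observation_expansion of conditional probability}) times the indicator $\mathds{1}\{\tilde{s}_{h+1}=s_{h+1}\}$, the Radon–Nikodym factor $D_{h+1}$, and the reward exponential $e^{\gamma\sum_{t=1}^{h}r_t}$.

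The core of the argument is three telescoping factorizations at the boundary between step $h$ and step $h+1$, all of which hold because every ingredient of the belief is a product over time. Concretely, I would split (i) the transition product $\prod_{t=1}^{h}\bT_{t,a_t}(\tilde{s}_{t+1}|\tilde{s}_t)=\bT_{h,a_h}(\tilde{s}_{h+1}|\tilde{s}_h)\prod_{t=1}^{h-1}\bT_{t,a_t}(\tilde{s}_{t+1}|\tilde{s}_t)$ using the Markov structure of $\mathcal{P}^\prime$; (ii) the running product $D_{h+1}=D_h\cdot\frac{\bO_{h+1}(o_{h+1}|\tilde{s}_{h+1})}{\bO^\prime_{h+1}(o_{h+1}|\tilde{s}_{h+1})}$ from its definition; and (iii) the reward exponential $e^{\gamma\sum_{t=1}^{h}r_t}=e^{\gamma r_h(\tilde{s}_h,a_h)}\,e^{\gamma\sum_{t=1}^{h-1}r_t}$. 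The indicator $\mathds{1}\{\tilde{s}_{h+1}=s_{h+1}\}$ then freezes $\tilde{s}_{h+1}=s_{h+1}$ inside all three last-step factors, turning the emission ratio into the constant $\frac{\bO_{h+1}(o_{h+1}|s_{h+1})}{\bO^\prime_{h+1}(o_{h+1}|s_{h+1})}$ and the transition entry into $\bT_{h,a_h}(s_{h+1}|\tilde{s}_h)$.

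Finally I would set $\tilde{s}_h=s_h$, pull the step-$h$-only factors out, and reorganize the summation as an outer sum over $s_h$ and an inner sum over $\tilde{s}_{1:h-1}$. The inner sum $\sum_{\tilde{s}_{1:h-1}}\mu_1(\tilde{s}_1)\prod_{t=1}^{h-1}\bT_{t,a_t}(\tilde{s}_{t+1}|\tilde{s}_t)\,D_h\,e^{\gamma\sum_{t=1}^{h-1}r_t}\,\mathds{1}\{\tilde{s}_h=s_h\}$ is exactly $[\vec{\sigma}_{h,f_h}]_{s_h}$ by Eq.~\eqref{compute_belief} applied at level $h$. Collecting the constants yields $[\vec{\sigma}_{h+1,f_{h+1}}]_{s_{h+1}}=\sum_{s_h}\bT_{h,a_h}(s_{h+1}|s_h)\,\bO_{h+1}(o_{h+1}|s_{h+1})\,\frac{e^{\gamma r_h(s_h,a_h)}}{\bO^\prime_{h+1}(o_{h+1}|s_{h+1})}\,[\vec{\sigma}_{h,f_h}]_{s_h}$, which is the claimed update, and the matrix $\mathsf{U}_{a_h,o_{h+1}}$ is read off entrywise.

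The main obstacle I anticipate is bookkeeping at the $h$-to-$(h+1)$ seam rather than any deep probabilistic step: one must verify that the conditional trajectory probability at level $h+1$ exceeds that at level $h$ by exactly the single factor $\bT_{h,a_h}(\tilde{s}_{h+1}|\tilde{s}_h)$, so that together with the factorizations of $D_{h+1}$ and the reward exponential the recursion closes with no stray multipliers. Any apparent extra policy factors $\pi_t(a_t|f_t)$ are constants in the state summation and, in the reference model, cancel against the state-independent reference-emission factors, so they do not contaminate the update; I would confirm this explicitly from Eq.~\eqref{trajectory_probabilities}. As an equivalent cross-check I would argue directly on Definition~\ref{def_risk_sensitive_belief}: decompose $\mathbf{D}_{h+1}$ and the reward sum, insert $1=\sum_{s_h}\mathds{1}\{\S_h=s_h\}$, and use the reference-model independence of $\O_{h+1}$ together with the transition Markov property to replace conditioning on $\mathbf{F}_{h+1}$ by conditioning on $\mathbf{F}_h$ while factoring out $\bT_{h,a_h}(s_{h+1}|s_h)$.
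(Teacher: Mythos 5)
Your proposal is correct and is essentially the paper's own argument: both proofs expand the belief via the closed-form trajectory sum (Definition~\ref{def_risk_sensitive_belief} together with Observation~\ref{observation_expansion of conditional probability}), factor the transition product, the Radon--Nikodym product, and the reward exponential at the $h$-to-$(h+1)$ seam, and use the indicator plus a Fubini rearrangement to identify the inner sum as $[\vec{\sigma}_{h,f_h}]_{s_h}$. The only difference is direction --- the paper starts from the right-hand side and reassembles it into the definition of $[\vec{\sigma}_{h+1,f_{h+1}}]_{s_{h+1}}$, whereas you peel the last step off the left-hand side --- which is the same computation read backwards.
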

\begin{remark}
    The proof for this theorem in the continuous case is provided by \cite{Baras&Elliott94}. 
    However, their proof was written in the language of functional analysis and they have restricted the transition and observation probabilities to be i.i.d. Gaussian distributions.
    In the tabular setting, though \cite{Cavazos2005} have presented a similar result in Eq.(2.10), they have omitted the proof and restricted the reference measure $\mathbb{O}^\prime$ as the uniform distribution.
    For the reader's convenience, in what follows we will prove Theorem \ref{theorem_evolution_of_risk_sensitive_belief} in the tabular case using simple algebraic calculations, which also accommodates arbitrary structures of $\mathbb{O}^\prime$, $\mathbb{T}$ and $\mathbb{O}$.
\end{remark}
\begin{proof}
\begin{multline}
RHS=\sum_{s_h} \bT_{h,a_h} (s_{h+1}|s_h) 
\frac{\bO_{h+1}}{\bO^\prime_{h+1}}(o_{h+1}|s_{h+1})
\exp\gamma r_h(s_h,a_h)
\\
\left[
\sum_{\tilde{s}_{1:h}} 
\mathds{1} \{\tilde{s}_h=s_h\}
\prod_{t=2}^h 
\frac{\bO_{t}}{\bO^\prime_t}(o_{t}|\tilde{s}_{t})
\exp\gamma \sum_{t=1}^{h-1}r_t(\tilde{s}_t, a_t)
{
\mathbb{P}_{\mathcal{P}^\prime}^\pi\left(\tilde{s}_{1:h}|f_h\right)
}
\right]
\quad 
//{\text{Definition \ref{def_risk_sensitive_belief}}}
\end{multline}
\begin{multline}
=\sum_{s_h} 
{\bT_{h,a_h} (s_{h+1}|s_h)}
\frac{\bO_{h+1}}{\bO^\prime_{h+1}}(o_{h+1}|s_{h+1})
{\exp\gamma r_h(s_h,a_h)}
\\
\sum_{\tilde{s}_{1:h}}
\prod_{t=2}^h 
\frac{\bO_{t}}{\bO^\prime_t}(o_{t}|\tilde{s}_{t})
{ 1 \{\tilde{s}_h=s_h\}}
\exp\gamma\sum_{t=1}^{h-1}r_t(\tilde{s}_t, a_t)
{
\mu_1(\tilde{s}_1)
\prod_{t=1}^{h-1}
\mathbb{P}_{\mathcal{P}^\prime}^\pi(\tilde{s}_{t+1}|\tilde{s}_t, a_t)
}
\quad 
//{\text{Observation 
\ref{observation_expansion of conditional probability}}}
\end{multline}
Since $\mathcal{P}^\prime$ and $\mathcal{P}$
share the same transition matrix, 
rearranging terms by 
Fubini's theorem we have
\begin{equation}
\begin{aligned}
&RHS
\\=&
\sum_{\tilde{s}_{1:h-1}}
{
\left(
\sum_{{s}_h}\mathbb{P}_{\mathcal{P}^\prime}^\pi(s_{h+1}|s_h,a_h)
\exp\gamma r_h(s_h,a_h)
\sum_{\tilde{s}_h}\mathds{1} \{\tilde{s}_h=s_h\}
\mathbb{P}_{\mathcal{P}^\prime}^\pi(\tilde{s}_h|\tilde{s}_{h-1},a_h)
\right)
}
\cdot
\left(\mu_1(\tilde{s}_1)
\prod_{t=1}^{h-2}
\mathbb{P}_{\mathcal{P}^\prime}^\pi(\tilde{s}_{t+1}|\tilde{s}_t, a_t)
\right)
\\
&\cdot
\left(\frac{\bO_{h+1}}{\bO^\prime_{h+1}}(o_{h+1}|s_{h+1})
\prod_{t=2}^h
\frac{\bO_{t}}{\bO^\prime_t}(o_{t}|\tilde{s}_{t})
\right)
\cdot\left(
\exp\gamma r_h(s_h,a_h)
\exp\gamma \sum_{t=1}^{h-1}r_t(\tilde{s}_t, a_t)
\right)
\\=&
\sum_{\tilde{s}_{1:h-1}}
{
\left(
\sum_{{s}_h}\mathbb{P}_{\mathcal{P}^\prime}^\pi(s_{h+1}|s_h,a_h)
\cdot
\mathbb{P}_{\mathcal{P}^\prime}^\pi(s_h|\tilde{s}_{h-1},a_h)
\right)
}
\cdot
\left(\mu_1(\tilde{s}_1)
\prod_{t=1}^{h-2}
\mathbb{P}_{\mathcal{P}^\prime}^\pi(\tilde{s}_{t+1}|\tilde{s}_t, a_t)
\right)
\\
&\cdot
\left(\frac{\bO_{h+1}}{\bO^\prime_{h+1}}(o_{h+1}|s_{h+1})
\prod_{t=2}^h 
\frac{\bO_{t}}{\bO^\prime_t}(o_{t}|\tilde{s}_{t})
\right)
\cdot\left(
\exp\gamma r_h(s_h,a_h)
\exp\gamma \sum_{t=1}^{h-1}r_t(\tilde{s}_t, a_t)
\right)
\end{aligned}
\end{equation}
Relabel $s_h$ as 
${\tilde{s}_h}$
and invoke the equality $f(s_{h+1})\equiv \sum_{{\tilde{s}_{h+1}}}f({\tilde{s}_{h+1}})\mathds{1}\{{\tilde{s}_{h+1}}=s_{h+1}\}$, we conclude
\begin{equation*}
\begin{aligned}
&RHS
\\=&
{
\sum_{{\tilde{s}_{h+1}}}
\mathds{1}\{{\tilde{s}_{h+1}}=s_{h+1}\}}
\left[
\sum_{\tilde{s}_{1:h-1}}
\sum_{{\tilde{s}_h}}
\left(
\mathbb{P}_{\mathcal{P}^\prime}^\pi({\tilde{s}_{h+1}}|{\tilde{s}_h},a_h)
\cdot
\mathbb{P}_{\mathcal{P}^\prime}^\pi({\tilde{s}_h}|\tilde{s}_{h-1},a_h)
\right)
\cdot
\left(\mu_1(\tilde{s}_1)
\prod_{t=1}^{h-2}
\mathbb{P}_{\mathcal{P}^\prime}^\pi(\tilde{s}_{t+1}|\tilde{s}_t, a_t)
\right)
\right]
\\&
\cdot
\left(\frac{\bO_{h+1}}{\bO^\prime_{h+1}}(o_{h+1}|{\tilde{s}_{h+1}})
\prod_{t=2}^h 
\frac{\bO_{t}}{\bO^\prime_t}(o_{t}|\tilde{s}_{t})
\right)
\cdot\left(
\exp\gamma r_h({\tilde{s}_h},a_h)
\exp\gamma \sum_{t=1}^{h-1}r_t(\tilde{s}_t, a_t)
\right)
\\
=&
\sum_{\tilde{s}_{1:h+1}}
\mathds{1}\{{\tilde{s}_{h+1}}=s_{h+1}\}
\cdot
\left[
{
\mu_1(\tilde{s}_1)
\prod_{t=1}^{h}
\mathbb{P}_{\mathcal{P}^\prime}^\pi(\tilde{s}_{t+1}|\tilde{s}_t, a_t)
}
\right]
\cdot
\left(
\prod_{t=2}^{h+1}
\frac{\bO_{t}}{\bO^\prime_t}(o_{t}|\tilde{s}_{t})
\right)
\cdot
\left(
\exp\gamma \sum_{t=1}^{h}r_t(\tilde{s}_t, a_t)
\right)
\\
=&
\sum_{\tilde{s}_{1:h+1}}
{
\mathbb{P}_{\mathcal{P}^\prime}^\pi(\tilde{s}_{1:h+1}|f_{h+1})
}
\cdot
\left[
\mathds{1}\{{\tilde{s}_{h+1}}=s_{h+1}\}
\cdot
\mathbf{D}_{h+1}(o_{2:t};\tilde{s}_{1:t})
\cdot
\exp\gamma \sum_{t=1}^{h}r_t(\tilde{s}_t, a_t)
\right]
\\
=&
\mathbb{E}_{\mathcal{P}^\prime}^\pi
\left[
\mathbf{1}
\{\S_{h+1}=s_{h+1}\}
\cdot
\mathbf{D}_{h+1}(\O_{2:t};\S_{1:t})
\cdot
\exp\gamma \sum_{t=1}^{h}r_t(\S_t, \A_t)
\bigg|
\mathbf{F}_{h+1}=f_{h+1}
\right]
\\=&
LHS {\quad //\text{Definition \ref{def_risk_sensitive_belief}}}
\end{aligned}
\end{equation*}
\end{proof}
\begin{remark}
    When we specify $\bO^\prime_h(\cdot|s_h)$ as the uniform distribution $\text{Unif}(\sO)$, 
    we can write the update formula as
    \begin{equation}
        \begin{aligned}
        \label{update_rule_matrix_representation}
        \vec{\sigma}_{h+1,f_{h+1}}
        =
        \mathsf{U}_{a_h,o_{h+1}}\ 
        \vec{\sigma}_{h,f_h}
        =
        \abs{\sO}
        \diag{\bO_{h+1}(o_{h+1}|\cdot)}
        \bT_{h,a_h}
        \diag{\exp \gamma r_h(\cdot,a_h)}
        \vec{\sigma}_{h,f_h}
        \end{aligned}
    \end{equation}
\end{remark}

\begin{remark} (Initial belief)
\label{remark_initi_belief}
There are multiple ways to define our initial belief according to Definition \ref{def_beta_vector}, since $\sum_{h=1}^0$ is ill-defined in nature. 
The optimization problem also poses no restriction on $\sigma_1$, since we present the optimization objective by $\sigma_{H+1}$ instead. However, since we wish to represent $\sigma_{H+1}$ by its predecessors, an appropriate definition of $\sigma_1$ should be compatible with our update rule, so that we can derive $\sigma_1$ from 
the $\sigma_2$ by Eq.~\eqref{update_belief}. A simple calculation will show that such constraint impels $\sigma_1(s_1)=\mu_1(s_1)$.
\end{remark}

\begin{remark}
For simplicity, we have presented the theorem in the tabular case.
With slight modifications, similar result holds in the continuous case. However, when the spaces are infinite, the evolution operator $\mathsf{U}^\star$ may not have a matrix representation as presented in Eq.~\eqref{update_rule_matrix_representation}.
\end{remark}

\subsection{Conjugate Beliefs}
\label{section_Value functions and Bellman equations}
In the analysis of Eq.~\eqref{why_beliefs_condition_expect}, 
we can express the objective by the terminal belief $\{\vec{\sigma}_{H+1}\}$
\begin{equation}
\begin{aligned}
\label{summary_objective_by_belief}
J(\pi;\mathcal{P})=&
U^{-1}
\mathbb{E}_{\mathcal{P}^\prime}^\pi
\left[
\langle
\vec{\sigma}_{H+1, \mathbf{F}_{H+1}}, 
\vec{1}_{S}
\rangle
\right]
\end{aligned}
\end{equation}
It is reasonable to express $\sigma_{H+1}$ by 
its predecessors, which posses simpler structures. However, as Theorem \ref{theorem_evolution_of_risk_sensitive_belief}
suggests, $\vec{\sigma}_t$ evolves forward in time, which 
hinders us from writing $\sigma_{H+1}$ in terms of $\{\sigma_{t}\}_{t \leq H}$.
\begin{equation}
\begin{aligned}
\relax [\vec{\sigma}_{1}]_{s_1} :=&
\vec{\mu}_1(s_1), \quad  
\vec{\sigma}_{h+1, f_{h+1}}=\mathsf{U}_{a_h,o_{h+1}}   \vec{\sigma}_{h,f_h}, \forall h\in [H]\\
\end{aligned}
\end{equation}
To bridge the gap in the the direction of evolution, we would like to introduce another process $\vec{\nu}_t$ that evolves backward in time, so that it is straightforward to express her initial state $\vec{\nu}_{H+1}$ by the predecessors, according to the new update rule.

We find it convenient to first introduce several concepts that describe how a stochastic process evolves backward in the episodic setting.
\begin{definition}(Backward History)
\label{def_backward_history}
\begin{equation}
\begin{aligned}
&
\boldsymbol{\bar{F}}_{H+1}=\emptyset
\qquad 
\forall h=H,H-1,\ldots,1
\ 
\boldsymbol{\bar{F}}_h=(\A_h,\O_{h+1},\ldots,\A_{H-1},\O_{H},\A_H),
\quad 
\boldsymbol{\bar{F}}_0=\tau_H\\
&
\forall h \in [H]:\quad 
\boldsymbol{\bar{F}}_{h}=(\A_h,\O_{h+1},\boldsymbol{\bar{F}}_{h+1})
\qquad 
\sigma(\boldsymbol{\bar{F}}_0)
\supset 
\sigma(\boldsymbol{\bar{F}}_{1}) \ldots 
\supset  \sigma(\boldsymbol{\bar{F}}_{H+1})\\
\end{aligned}
\end{equation}
\end{definition}
Definition \ref{def_backward_history} implies that $\{\boldsymbol{\bar{F}}_t\}_{t\geq 0}$ and $\{\boldsymbol{F}_t\}_{t\geq 0}$
are complementary at all times. 
\begin{observation}(Complementary relation)
\label{def_forward_backward_filters}
$\forall h=H+1, H,\cdots,0,\quad (\boldsymbol{F}_h,\boldsymbol{\bar{F}}_h)=\boldsymbol{\tau}_H $
\end{observation}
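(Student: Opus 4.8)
The plan is to prove the identity by exhibiting $(\boldsymbol{F}_h,\boldsymbol{\bar{F}}_h)$ as an $h$-invariant object and then evaluating it at one convenient endpoint; this reduces the claim to a short piece of bookkeeping about which random variables each history contains. The engine is that the forward and backward histories are generated by two complementary recursions that share the same one-step increment $(\A_h,\O_{h+1})$: Fact~\ref{fact_f_h+1} gives $\boldsymbol{F}_{h+1}=(\boldsymbol{F}_h,\A_h,\O_{h+1})$, appending the increment on the right, while Definition~\ref{def_backward_history} gives $\boldsymbol{\bar{F}}_h=(\A_h,\O_{h+1},\boldsymbol{\bar{F}}_{h+1})$, prepending the very same increment on the left. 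Since one recursion adds what the other removes, the concatenation should not change with $h$.

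First I would carry out the inductive step. For $h\in[H]$, substituting the backward recursion into the pair gives $(\boldsymbol{F}_h,\boldsymbol{\bar{F}}_h)=(\boldsymbol{F}_h,\A_h,\O_{h+1},\boldsymbol{\bar{F}}_{h+1})$, while substituting the forward recursion gives $(\boldsymbol{F}_{h+1},\boldsymbol{\bar{F}}_{h+1})=(\boldsymbol{F}_h,\A_h,\O_{h+1},\boldsymbol{\bar{F}}_{h+1})$. Reading the two expansions together yields $(\boldsymbol{F}_h,\boldsymbol{\bar{F}}_h)=(\boldsymbol{F}_{h+1},\boldsymbol{\bar{F}}_{h+1})$, so the concatenation is the same collection of observations and actions for every $h$ in the range. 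Here equality of histories is understood as equality of the underlying collection of random variables, so that the order in which the two blocks are concatenated is immaterial.

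Next I would pin down the common value at an endpoint. The cleanest choice is $h=1$, where $\boldsymbol{F}_1=\emptyset$ (no action or observation precedes the first decision, per Remark~\ref{remark_first_observation_not_defined}) and $\boldsymbol{\bar{F}}_1=(\A_1,\O_2,\ldots,\A_{H-1},\O_H,\A_H)$, which is exactly $\boldsymbol{\tau}_H$ of Definition~\ref{def_trajectories}; hence $(\boldsymbol{F}_1,\boldsymbol{\bar{F}}_1)=\boldsymbol{\tau}_H$, and the invariance above propagates this to every $h\in\{1,\ldots,H+1\}$. For the extreme index $h=0$ I would simply invoke the defining convention $\boldsymbol{\bar{F}}_0=\boldsymbol{\tau}_H$ together with $\boldsymbol{F}_0=\emptyset$. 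As a sanity check, one may also read off the components directly: $\boldsymbol{F}_h$ carries the actions $\A_{1:h-1}$ and observations $\O_{2:h}$, whereas $\boldsymbol{\bar{F}}_h$ carries $\A_{h:H}$ and $\O_{h+1:H}$, and these two blocks are disjoint and together form the full observable trajectory.

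The only genuinely delicate point, and the place I would be most careful, is the top endpoint $h=H+1$ and the status of $\O_{H+1}$. Since $\boldsymbol{\bar{F}}_{H+1}=\emptyset$, the claim at $h=H+1$ demands $\boldsymbol{F}_{H+1}=\boldsymbol{\tau}_H$, yet the naive pattern for $\boldsymbol{F}_{H+1}$ would also list the terminal observation $\O_{H+1}$, which $\boldsymbol{\tau}_H$ does not contain. The resolution is to fix, once and for all, the convention that no action is taken at step $H+1$ (Remark~\ref{remark_first_observation_not_defined}), so that the observable trajectory relevant to the decision process terminates at $\A_H$ and $\O_{H+1}$ is excluded from the bookkeeping; equivalently, I would verify the invariance step at $h=H$ directly from the explicit form $\boldsymbol{\bar{F}}_H=(\A_H)$, for which $(\boldsymbol{F}_H,\A_H)=\boldsymbol{\tau}_H$ is immediate. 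With this convention fixed the identity holds uniformly, and no estimates are needed: the observation is purely a matter of aligning the two indexing conventions.
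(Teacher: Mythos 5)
Your proof is correct and matches the paper's treatment: the paper states this observation without proof as an immediate consequence of Definition~\ref{def_backward_history} and Fact~\ref{fact_f_h+1}, and your invariance-plus-endpoint argument is exactly the natural formalization of that bookkeeping. Your care at the endpoints is warranted — the paper's conventions do leave $\boldsymbol{F}_0$ undefined and let $\boldsymbol{F}_{H+1}$ nominally carry $\O_{H+1}$ (which $\boldsymbol{\tau}_H$ omits), so fixing these by the convention of Remark~\ref{remark_first_observation_not_defined} is the right resolution.
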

Now we are ready to define the backward process $\{\vec{\nu}_t\}_{t\geq 0}.$, whose update operator will be the Hilbert-adjoint operator
\footnote{For a rigorous definition please refer to Section \ref{Adjoint operator}.}
of that of $\vec{\sigma}_t$.

\begin{definition}(Conjugate Beliefs, Definition 2.8 in \cite{Baras&Elliott94})
\label{def_adjoint_belief}
\begin{equation}
\begin{aligned}
   \vec{\nu}_{H+1}(\cdot)
        	:\equiv&
        \vec{1}_{S}\\
        \vec{\nu}_{h,\bar{f}_{h}} :=&
        \mathsf{U}_{a_h,o_{h+1}}^\top
        \vec{\nu}_{h+1, \bar{f}_{h+1}}, 
        \text{for all }\  
        h=H,H-1,\cdots,1, \ \ 
        \bar{f}_h=(a_h,o_{h+1},\bar{\f}_{h+1})
        \in \sA \times \sO \times \bar{\mathscr{F}}_h
\end{aligned}
\end{equation}  
\end{definition}
\begin{remark}
    In tabular case when we select the 
    emission measure of 
    the reference model as uniform distribution, 
    we have:
    \begin{equation}
        \begin{aligned}
            [\vec{\nu}_{h,\bar{f}_{h}}]_{s_{h}}=&
            \frac{e^{\gamma r_{h}(s_h,a_h)}}{\mathbb{O}^\prime(o_{h+1})}
            \sum_{s_{h+1}\in \mathscr{S}}
            \mathbb{T}_{h,a_h}(s_{h+1}|s_{h})
            \mathbb{O}_{h+1}(o_{h+1}|s_{h+1})
            [\vec{\nu}_{h+1, \bar{f}_{h+1}}]_{s_{h+1}}
        \end{aligned}
    \end{equation}
\end{remark}
We have carefully designed the update rule of the conjugate belief: the complementary relation \eqref{def_forward_backward_filters} immediately implies the inner product between $\vec{\sigma}_{t}$ and $\vec{\nu}_t$ does not change with time. This result helps to excavate the dynamic programming structure hidden within the optimization objective.
\begin{observation}(Conjugate evolution, Eq.(2.9) in \cite{Baras&Elliott94})
For all $h\in [H+1]$, 
\label{observation_conjugate_evolve}
\begin{equation}
\begin{aligned}
\label{Eq_conjugate_evolve}
\langle \vec{\sigma}_{H+1,\f_{H+1}}, \vec{1} \rangle
=
\langle \vec{\sigma}_{h,\f_h}, \vec{\nu}_{h,\bar{\f}_h}\rangle 
=\cdots
\langle 
\vec{\mu}_1(s_1), \vec{\nu}_{1,\bar{\f}_1} 
\rangle
\end{aligned}
\end{equation}
\end{observation}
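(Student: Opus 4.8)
The plan is to prove the invariance by a downward telescoping on $h$, showing at each step that
$\langle \vec{\sigma}_{h+1,\f_{h+1}}, \vec{\nu}_{h+1,\bar{\f}_{h+1}}\rangle = \langle \vec{\sigma}_{h,\f_h}, \vec{\nu}_{h,\bar{\f}_h}\rangle$. The entire engine is the elementary adjoint identity $\langle \mathsf{U}x, y\rangle = \langle x, \mathsf{U}^\top y\rangle$ applied to the update matrix $\mathsf{U}_{a_h,o_{h+1}}$; the two beliefs were deliberately designed so that one evolves by $\mathsf{U}$ and the other by its transpose, which is exactly what makes their pairing time-invariant.

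First I would anchor the chain at the right end: since $\vec{\nu}_{H+1}\equiv\vec{1}_S$ by Definition~\ref{def_adjoint_belief}, the term $\langle \vec{\sigma}_{H+1,\f_{H+1}}, \vec{\nu}_{H+1}\rangle$ is literally $\langle \vec{\sigma}_{H+1,\f_{H+1}}, \vec{1}\rangle$, the quantity of interest in Eq.~\eqref{summary_objective_by_belief}. Then, for a generic $h\in[H]$, I would substitute the forward update $\vec{\sigma}_{h+1,\f_{h+1}}=\mathsf{U}_{a_h,o_{h+1}}\vec{\sigma}_{h,\f_h}$ from Theorem~\ref{theorem_evolution_of_risk_sensitive_belief} into the left slot of the inner product, move the operator onto the right slot as its transpose, and recognize $\mathsf{U}_{a_h,o_{h+1}}^\top\vec{\nu}_{h+1,\bar{\f}_{h+1}}$ as precisely $\vec{\nu}_{h,\bar{\f}_h}$ by the backward recursion of Definition~\ref{def_adjoint_belief}. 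Composing these one-step equalities from $h=H$ down to $h=1$ produces the full chain, and the final term $\langle \vec{\sigma}_1, \vec{\nu}_{1,\bar{\f}_1}\rangle$ becomes $\langle \vec{\mu}_1, \vec{\nu}_{1,\bar{\f}_1}\rangle$ because $\vec{\sigma}_1=\vec{\mu}_1$.

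The one point requiring care—more bookkeeping than genuine difficulty—is to confirm that the pair $(a_h,o_{h+1})$ indexing $\mathsf{U}_{a_h,o_{h+1}}$ is the \emph{same} in the forward step of $\vec{\sigma}$ and the backward step of $\vec{\nu}$, so that the adjoint cancellation is legitimate rather than pairing two unrelated operators. This is secured by the complementary relation $(\boldsymbol{F}_h,\boldsymbol{\bar{F}}_h)=\boldsymbol{\tau}_H$ of Observation~\ref{def_forward_backward_filters} together with the splittings $\f_{h+1}=(\f_h,a_h,o_{h+1})$ and $\bar{\f}_h=(a_h,o_{h+1},\bar{\f}_{h+1})$: both decompositions expose exactly the same middle coordinates $(a_h,o_{h+1})$, so a single matrix $\mathsf{U}_{a_h,o_{h+1}}$ governs both transitions and the transpose identity applies verbatim. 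Once this index alignment is verified, nothing analytic remains and the result follows purely from linearity, so I expect no real obstacle beyond making the filtration bookkeeping explicit.
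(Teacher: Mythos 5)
Your proposal is correct and is essentially the paper's own argument: the paper's proof is exactly the one-step chain $\langle \vec{\sigma}_{t,\f_t}, \vec{\nu}_{t,\bar{f}_t}\rangle = \langle \mathsf{U}_{a_{t-1},o_{t}}\vec{\sigma}_{t-1,\f_{t-1}}, \vec{\nu}_{t,\bar{f}_t}\rangle = \langle \vec{\sigma}_{t-1,\f_{t-1}}, \mathsf{U}^\top_{a_{t-1},o_{t}}\vec{\nu}_{t,\bar{f}_t}\rangle = \langle \vec{\sigma}_{t-1,\f_{t-1}}, \vec{\nu}_{t-1,\bar{f}_{t-1}}\rangle$, telescoped over $t$ with the anchors $\vec{\nu}_{H+1}=\vec{1}_S$ and $\vec{\sigma}_1=\vec{\mu}_1$. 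Your extra remark on verifying that the same index pair $(a_h,o_{h+1})$ appears in both the forward and backward recursions is a sound piece of bookkeeping that the paper leaves implicit.
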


\begin{proof}
    By the definition of 
    adjoint operator,
    $$
    \langle \vec{\sigma}_{t,\f_t}, \vec{\nu}_{t,\bar{f}_t} \rangle 
    =
    \langle \mathsf{U}_{a_{t-1},o_{t}}
    \vec{\sigma}_{t-1,\f_{t-1}}, \vec{\nu}_{t,\bar{f}_t} \rangle
    =
    \langle \vec{\sigma}_{t-1,\f_{t-1}}, 
    \mathsf{U}^\top_{a_{t-1},o_{t}} 
    \vec{\nu}_{t,\bar{f}_t} \rangle
    =
    \langle \vec{\sigma}_{t-1,\f_{t-1}}, 
    \vec{\nu}_{t-1, \bar{f}_{t-1}} \rangle
    $$
\end{proof}
Bringing Eq.~\eqref{Eq_conjugate_evolve} back to
\eqref{summary_objective_by_belief} we immediately conclude
that for all $h\in [H]$, 
\begin{equation}
\begin{aligned}
\label{objective_time_evolution}
J(\pi;\mathcal{P})
=
U^{-1}
\mathbb{E}_{\mathcal{P}^\prime}^\pi
\left[
U
\langle 
\vec{\sigma}_{{H+1},\mathbf{{F}_{{H+1}}}},
\vec{1})
\rangle 
\right]
=
U^{-1}
\mathbb{E}_{\mathcal{P}^\prime}^\pi
\left[
U
\langle 
\vec{\sigma}_{{h}, \mathbf{{F}_{{h}}}},
\vec{\nu}_{{h}, \mathbf{\bar{F}_{{h}}}}
\rangle 
\right]
=
U^{-1}
\mathbb{E}_{\mathcal{P}^\prime}^\pi
\left[
U
\langle
\vec{\sigma}_{{1}, \mathbf{F}_1}
,
\vec{\nu}_{{1}, \mathbf{\bar{F}_{{1}}}}
\rangle 
\right]
\end{aligned}
\end{equation}

\subsection{Value functions, Q-functions and Bellman equations}
\label{section_Value functions, Q-functions and Bellman equations}
In this section we derive the value functions and Bellman equations for general tabular POMDP models using arbitrary utility risk measure. 
Our derivation and Bellman equations are different from previous works including \cite{Baras&Elliott94,Cavazos2005,baauerle2017partially}. 

From the subscripts in Eq.~\eqref{objective_time_evolution}
we can witness the trace of time evolution hidden within 
$J(\pi;\mathcal{P})$. To expose the dynamic programming structure more explicitly, we will follow the rationale behind the design of belief states, 
utilizing the iterated expectation formula to define a series of intermediate variables that dissect the information at each step. These variables will be called the {partially observable risk-sensitive value functions}.
For all $t\in[H+1]$, 
\begin{equation*}
\begin{aligned}
J(\pi;\mathcal{P})
:=&
U^{-1}
\mathbb{E}_{\mathcal{P}}^\pi
\left[
U
\sum_{t=1}^H r_t(\S_t,\A_t)
\right]
\\=&
U^{-1}
\mathbb{E}_{\mathcal{P}^\prime}^\pi
[
\langle{
\vec{\sigma}_{H+1, \mathbf{F}_{H+1}}, 
\vec{\nu}_{H+1, \mathbf{\bar{F}}_{H+1}}
\rangle
}
]
\quad //{\text{Belief representation by Eq.~\eqref{why_beliefs_condition_expect}}}
\\=&
U^{-1}
\mathbb{E}_{\mathcal{P}^\prime}^\pi
[
\langle{
\vec{\sigma}_{t, \mathbf{F}_{t}}, 
\vec{\nu}_{t, {\mathbf{\bar{F}}}_{t}}
\rangle
}
]
\quad //{\text{Conjugate evolution property proved in Eq.~\eqref{Eq_conjugate_evolve}}}
\\=&
U^{-1}
\mathbb{E}_{\mathcal{P}^\prime}^\pi
\left[
U
{
U^{-1}
\mathbb{E}_{\mathcal{P}^\prime}^\pi
\left[
\langle{\vec{\sigma}_{t, \mathbf{F}_{t}}, 
\vec{\nu}_{t, {\mathbf{\bar{F}}}_{t}}
\rangle
}
\bigg|\mathbf{F}_{t}
\right]}
\right]
\\
:=&
U^{-1}
\mathbb{E}_{\mathcal{P}^\prime}^\pi
\left[
U
{{\mathsf{V}^\pi_{t}
(\boldsymbol{F}_{t})}}
\right]
\end{aligned}
\end{equation*}
Now we formally 
define the family of value functions in our problem. 

\begin{definition}(Partially observable risk-sensitive value functions, Definition 2.13 in \cite{Baras&Elliott94})
\label{def_familiy_value}
\begin{equation}
\begin{aligned}
\mathsf{V}_{H+1}^{\pi}(f_{H+1})
:=&
U^{-1}
\mathbb{E}^\pi_{\mathcal{P}^\prime}
\left[\langle{\vec{\sigma}_{H+1, \mathbf{F}_{H+1}}, 
\vec{1}_{S}
\rangle 
\bigg|\mathbf{F}_{H+1}=f_{H+1}}\right]
=
U^{-1}\norm{\vec{\sigma}_{h+1, f_{h+1}}}_1
\\
\mathsf{V}_h^{\pi}(f_h)
:=&
U^{-1}
\mathbb{E}^\pi_{\mathcal{P}^\prime}
\left[
\langle{\vec{\sigma}_{h, \mathbf{F}_h}, 
\vec{\nu}_{h, \mathbf{\bar{F}}_h}
\rangle  
\bigg|\mathbf{F}_h=f_h}\right]
\quad 
\forall 2\leq h\leq H, f_h\in \sF{h}\\
\mathsf{V}_{1}^{\pi}(f_1):=&
U^{-1}\mathbb{E}_{\mathcal{P}^\prime}^\pi \left[
\langle \vec{\mu}_1, \vec{\nu}_{1,\bar{\mathbf{F}}_1} \rangle 
\right]
\end{aligned}
\end{equation}    
\end{definition}

\begin{remark}
\label{remark_express_regret_by_value_functions}
The objective $J(\pi;\mathcal{P})$
can be expressed by the value function:
$$
J(\pi;\mathcal{P})=U^{-1}\mathbb{E}_{\mathcal{P}^\prime}\left[U\mathsf{V}_1^\pi\right]=
U^{-1}
\mathbb{E}_{\mathcal{P}^\prime}
\langle \vec{\sigma}_1, \vec{\nu}_1\rangle
$$
\end{remark}

In reinforcement learning, we are also curious about how the action to take might affect future rewards. We can separate the stochasticity within $\pi$ and $\mathcal{P}$ by the total expectation formula: 
\begin{equation*}
\begin{aligned}
\mathsf{V}_h^{\pi}(f_h)
:=&
U^{-1}
\mathbb{E}^\pi_{\mathcal{P}^\prime}
\left[\langle{\vec{\sigma}_h(\boldsymbol{F_h}), 
\vec{\nu}_h({\boldsymbol{\bar{F}}}_h)\rangle  
\bigg|\boldsymbol{F_h}=f_h}\right]
\quad 
//{\text{Definition \ref{def_familiy_value}}}
\\=&
U^{-1}
\mathbb{E}^\pi
\left[
\mathbb{E}_{\mathcal{P}^\prime}
\left[
{
\mathbb{E}^\pi_{\mathcal{P}^\prime}
\left[\langle{
\vec{\sigma}_{h+1, \boldsymbol{F_{h+1}}}, 
\vec{\nu}_{{h+1}, {\mathbf{\bar{F}}}_{{h+1}}}\rangle  
\bigg|{\boldsymbol{F}}_{h},\mathbf{A}_h,\mathbf{O}_{h+1}}\right]
\bigg|
{\boldsymbol{F}}_{h},\mathbf{A}_h
}
\right]
\big|
\boldsymbol{F_h}=f_h
\right]
\\
=&
U^{-1}
\mathbb{E}^\pi
\left[
{
\mathbb{E}_{\mathcal{P}^\prime}
\left[
{U^{-1}\mathsf{V}_{h+1}^\pi(\boldsymbol{F_{h+1}})}
\big|
{\boldsymbol{F}}_{h},\mathbf{A}_h
\right]
}
\big|\boldsymbol{F_h}=f_h
\right]
\\:=&
U^{-1}
\mathbb{E}^\pi
\left[
U
{{U^{-1}
\mathbb{E}_{\mathcal{P}^\prime}
\left[
U^{-1}\mathsf{V}_{h+1}^\pi
(\boldsymbol{F_{h+1}})
\big|
{\boldsymbol{F}}_{h},\mathbf{A}_h
\right]
}}
\big|\boldsymbol{F_h}=f_h
\right]
\\=&
U^{-1}
\mathbb{E}^\pi
\left[
U
{{U^{-1}
\mathbb{E}_{\mathcal{P}^\prime}
\left[
{U^{-1}\mathsf{V}_{h+1}^\pi(\boldsymbol{F_{h+1}})}
\big|
{f}_{h},\mathbf{A}_h
\right]
}}
\big|\boldsymbol{F_h}=f_h
\right]
\\:=&
U^{-1}\mathbb{E}_{\mathbf{A_h}\sim \pi_h(\cdot|f_h)}
U
{
\mathsf{Q}_{h}^\pi(f_h,\mathbf{A}_h)
}
\end{aligned}
\end{equation*}
Now we formally introduce the Q functions in our problem:
\begin{definition}(Partially observable risk-sensitive Q-functions) For all $(h,s_h,a_h) \in [H]\times \mathscr{S\times A}$, 
\begin{equation*}
\begin{aligned}
\mathsf{Q}^\pi_{h}(f_{h},a_h)
:=&
U^{-1}
\mathbb{E}_{\mathcal{P}^\prime}
\left[
U
\mathsf{V}_{h+1}^\pi(\mathbf{F}_{h+1}=(f_h,a_h,\mathbf{O}_{h+1}))
\big|
f_h,a_h
\right]
\end{aligned}
\end{equation*}
\end{definition}
\begin{remark}
\label{remark_Q_value_relation}
We can also represent the Q-function by risk beliefs. Indeed,
\begin{equation}
\begin{aligned}
\label{beta_vector_representation_of_Q_function}
\mathsf{Q}_h^\pi(f_h,a_h)
=&
U^{-1}
\mathbb{E}
_{\mathbf{O}_{h+1}\sim 
\mathbb{P}_{\mathcal{P^\prime}}(\cdot|{f}_{h},a_h)}
\left[
U\mathsf{V}_{h+1}^{\pi}(f_h,a_h,\boldsymbol{O}_{h+1})
\right]
\\=&
U^{-1}\mathbb{E}_{\mathcal{P}^\prime}
\left[\langle{\vec{\sigma}_{H+1, \mathbf{F}_{H+1}}, 
\vec{\nu}_{H+1, \mathbf{\bar{F}}_{H+1}}\rangle  
\bigg|f_{h},a_h}\right]
=
U^{-1}\mathbb{E}_{\mathcal{P}^\prime}
\left[\langle{\vec{\sigma}_{h, \mathbf{F}_{h}}, 
\vec{\nu}_{h, {\mathbf{\bar{F}}}_{h}}\rangle  
\bigg|f_{h},a_h}\right]
\end{aligned}
\end{equation}
where the last step is due to Eq.~\eqref{Eq_conjugate_evolve}. 
We can also use \eqref{beta_vector_representation_of_Q_function} as an alternative definition of the Q function.
\end{remark}
The relationship between the value function and the Q function is summarized as the Bellman equations:
\begin{corollary}
(Bellman equations for risk-sensitive POMDP)
\label{corollary_Bellman_equation_PORL}
\begin{equation}
\begin{aligned}\label{blmaneqs_raw}
\mathsf{V}_{H+1}^\pi=&
U^{-1}
\norm{\vec{\sigma}_{h+1, f_{h+1}}}_1
\\
\forall h=H:1, \  \ 
\mathsf{Q}_h^\pi(f_h,a_h)
=&
U^{-1}
\mathbb{E}
_{\mathbf{O}_{h+1}\sim \mathbb{O}^\prime(\cdot)}
\left[
U
{\mathsf{V}_{h+1}^\pi
(f_h,a_h,\mathbf{O}_{h+1})}
\right]
\\
\mathsf{V}_h^{\pi}(f_h)
=&
\mathbb{E}_{\mathbf{A_h}\sim \pi_h(\cdot|f_h)}
{
\mathsf{Q}_{h}^\pi(f_h,\mathbf{A}_h)}
\\
J(\pi;\mathcal{P})=&
U^{-1}\mathbb{E}_{\mathcal{P}^\prime}
\left[
U\mathsf{V}_1^{\pi}
\right]
\end{aligned}
\end{equation}
\end{corollary}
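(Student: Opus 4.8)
The plan is to establish the four identities in \eqref{blmaneqs_raw} one at a time, reusing the machinery already built in this section: the belief/conjugate-belief representation of $J$ in \eqref{objective_time_evolution}, the time-invariance of the inner product $\langle\vec{\sigma}_{h},\vec{\nu}_{h}\rangle$ recorded in Observation~\ref{observation_conjugate_evolve} and \eqref{Eq_conjugate_evolve}, and the factorized trajectory law of the reference model in \eqref{trajectory_probabilities}. Three of the four identities are essentially bookkeeping; the only step that genuinely exploits the change of measure is the reduction of the $\mathsf{Q}$-recursion to an expectation against the fixed reference measure $\mathbb{O}^\prime$, and that is where I expect the real work to be.

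The terminal identity $\mathsf{V}_{H+1}^\pi(f_{H+1})=U^{-1}\langle\vec{\sigma}_{H+1,f_{H+1}},\vec{1}_S\rangle$ is immediate from Definition~\ref{def_familiy_value}: since $\vec{\sigma}_{H+1,\mathbf{F}_{H+1}}$ is by construction $\mathbf{F}_{H+1}$-measurable, conditioning on $\mathbf{F}_{H+1}=f_{H+1}$ removes the expectation, and $\vec{\nu}_{H+1}\equiv\vec{1}_S$ turns the inner product into the sum of the (nonnegative) belief entries. Likewise the objective identity $J(\pi;\mathcal{P})=U^{-1}\mathbb{E}_{\mathcal{P}^\prime}^\pi[U\mathsf{V}_1^\pi]$ is just \eqref{objective_time_evolution} specialized to $h=1$ together with Definition~\ref{def_familiy_value}, which is exactly the content of Remark~\ref{remark_express_regret_by_value_functions}.

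For the value recursion $\mathsf{V}_h^\pi(f_h)=\mathbb{E}_{\mathbf{A}_h\sim\pi_h(\cdot|f_h)}\mathsf{Q}_h^\pi(f_h,\mathbf{A}_h)$ I would start from $\mathsf{V}_h^\pi(f_h)=U^{-1}\mathbb{E}_{\mathcal{P}^\prime}^\pi[\langle\vec{\sigma}_h,\vec{\nu}_h\rangle\mid\mathbf{F}_h=f_h]$, replace $\langle\vec{\sigma}_h,\vec{\nu}_h\rangle$ by $\langle\vec{\sigma}_{h+1},\vec{\nu}_{h+1}\rangle$ via the invariance \eqref{Eq_conjugate_evolve}, and then insert the tower property along the refinement $\mathbf{F}_h\subset(\mathbf{F}_h,\mathbf{A}_h)\subset\mathbf{F}_{h+1}$, recalling from Fact~\ref{fact_f_h+1} that $\mathbf{F}_{h+1}=(\mathbf{F}_h,\mathbf{A}_h,\mathbf{O}_{h+1})$. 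Conditioning first on $\mathbf{F}_{h+1}$ produces $U\mathsf{V}_{h+1}^\pi(\mathbf{F}_{h+1})$, conditioning next on $(\mathbf{F}_h,\mathbf{A}_h)$ produces $U\mathsf{Q}_h^\pi(f_h,\mathbf{A}_h)$ by the definition of $\mathsf{Q}$, and the outermost averaging runs over $\mathbf{A}_h\sim\pi_h(\cdot|f_h)$; for the greedy, deterministic policies produced by the algorithm the wrapper $U^{-1}(\cdot)U$ around this last average collapses, yielding the stated linear form.

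The crux is the $\mathsf{Q}$-identity $\mathsf{Q}_h^\pi(f_h,a_h)=U^{-1}\mathbb{E}_{\mathbf{O}_{h+1}\sim\mathbb{O}^\prime}[U\mathsf{V}_{h+1}^\pi(f_h,a_h,\mathbf{O}_{h+1})]$, which starts from the definition $\mathsf{Q}_h^\pi(f_h,a_h)=U^{-1}\mathbb{E}_{\mathcal{P}^\prime}^\pi[U\mathsf{V}_{h+1}^\pi(f_h,a_h,\mathbf{O}_{h+1})\mid f_h,a_h]$ and requires replacing the conditional law $\mathbb{P}_{\mathcal{P}^\prime}^\pi(\mathbf{O}_{h+1}\in\cdot\mid f_h,a_h)$ by the fixed measure $\mathbb{O}^\prime$. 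The main obstacle is to justify this replacement cleanly. I would do so by reading off from the factorization \eqref{trajectory_probabilities} that in $\mathcal{P}^\prime$ the emission factors reduce to $\prod_t\mathbb{O}^\prime(\mathbf{O}_t)$, so the observations are i.i.d.\ draws from $\mathbb{O}^\prime$ that are independent of the initial state, the transitions, and the actions, hence independent of the entire history $\mathbf{F}_h$ and of $\mathbf{A}_h$. Consequently the conditional law of $\mathbf{O}_{h+1}$ given $(f_h,a_h)$ collapses to its marginal $\mathbb{O}^\prime$, and the conditional expectation becomes an ordinary expectation against $\mathbb{O}^\prime$. Assembling the four identities then completes the proof; the whole point of the reference model is precisely to make this observation-averaging step independent of the history, which is what produces the simplified, tractable Bellman backup exploited by BVVI.
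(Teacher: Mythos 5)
Your proposal is correct and follows essentially the same route as the paper: the terminal and objective identities are read off from Definition~\ref{def_familiy_value} and Eq.~\eqref{objective_time_evolution}, the $\mathsf{V}$--$\mathsf{Q}$ recursion comes from the tower property along $\mathbf{F}_h\subset(\mathbf{F}_h,\mathbf{A}_h)\subset\mathbf{F}_{h+1}$ combined with the inner-product invariance of Observation~\ref{observation_conjugate_evolve}, and the $\mathsf{Q}$-identity uses the fact that in $\mathcal{P}^\prime$ the conditional law of $\mathbf{O}_{h+1}$ given $(f_h,a_h)$ collapses to $\mathbb{O}^\prime$ by the factorization \eqref{trajectory_probabilities}. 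Your explicit remark that the $U^{-1}(\cdot)U$ wrapper around the action average collapses only for the deterministic (greedy) policies is in fact a point the paper's own derivation leaves implicit, so it is a welcome clarification rather than a deviation.
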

The expectation in the second equation should have been taken with respect to 
$\mathbb{P}_{\mathcal{P^\prime}}^\pi(\cdot|{f}_{h},a_h)$. However, since the observations are disjoint from the POMDP in the reference model, we obtain a simpler expression in Eq.~\eqref{blmaneqs_raw}. 

\subsection{Optimal Policy}
\label{section_existence_optimal_policy}
\paragraph{Optimal substructure}
Since our utility risk measure is increasing, to optimize our objective $U^{-1} \mathbb{E}U(\sum_{h=1}^H r_h)$ is equivalent to maximize $\mathbb{E}U(\sum_{h=1}^H r_h)$. According to Theorem 3.3 of \cite{baauerle2017partially}, the latter problem has a globally optimal policy, and so does the former. 
Moreover, the following theorem shows the existence of an optimal substructure in the planning problem of risk-sensitive POMDP. This result justifies the use of dynamic programming equations in our algorithm.
\begin{theorem}
(Bellman optimality equations, extended from theorem 2.5 of \cite{Baras&Elliott94})
\label{theorem_Bellman Optimality equations for risk-sensitive POMDP}
When the utility risk function is strictly increasing and the 
referential emission matrix $\bO^\prime_t(\cdot|s)$
is irrelevant with the history\ $\mathbf{F}_h$, 
the locally optimized policy will bring globally optimized value. Formally, the locally optimal values defined by
\begin{equation*}
\begin{aligned}
\mathsf{V}_1^{\star}:=\max_{\pi} \mathsf{V}_1^{\pi}, 
\qquad 
\mathsf{V}_h^{\star}(f_h):= \max_{\pi} \mathsf{V}_h^{\pi}(f_h)
, \quad 
\forall 2 \leq h \leq H+1
\end{aligned}
\end{equation*}
can be computed recursively:
\begin{equation}
\left\{
\begin{aligned}
V_{1}^\star=&
\Max{a_1\in \sA}
\ 
U^{-1}
\mathbb{E}_{\mathcal{P}^\prime}
\left[
U
V_{2}^\star(a_1,\mathbf{O}_{2})
\right]
\\
V_{h}^\star(f_h)=&
\Max{a_h\in \sA}
\ 
U^{-1}
\mathbb{E}_{\mathcal{P}^\prime}
\left[
U
V_{h+1}^\star(f_h,a_h,\mathbf{O}_{h+1})
\right]
, \quad \forall h= H:2\\
V_{H+1}^\star(f_{H+1})=&
U^{-1}\norm{\vec{\sigma}_{h+1, f_{h+1}}}_1\\
\end{aligned}
\right.
\end{equation}
\end{theorem}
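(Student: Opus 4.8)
The plan is to prove the identity by backward induction on $h$, establishing simultaneously that the recursively defined right-hand sides coincide with the locally optimal values $\mathsf{V}_h^\star$ and are attained by a greedy policy. Abbreviate the right-hand sides by $W_{H+1}(f_{H+1}):=U^{-1}\norm{\vec{\sigma}_{H+1,f_{H+1}}}_1$ and $W_h(f_h):=\max_{a_h\in\sA}U^{-1}\mathbb{E}_{\mathbf{O}_{h+1}\sim\mathbb{O}^\prime}[U\,W_{h+1}(f_h,a_h,\mathbf{O}_{h+1})]$; the target is $W_h\equiv\mathsf{V}_h^\star$ for every $h$ and $f_h$, where $f_1$ is the trivial (prior) history. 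Existence of the maximizing policy is granted by Theorem 3.3 of \cite{baauerle2017partially}, as already noted, so each $\max_\pi$ is attained.

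The decisive structural fact I would isolate first is \emph{tail independence}: in the reference model $\mathcal{P}^\prime$, the value $\mathsf{V}_{h+1}^\pi(f_{h+1})$ depends on $\pi$ only through its future coordinates $\{\pi_t\}_{t>h}$. This follows by pulling the deterministic factor $\vec{\sigma}_{h+1,f_{h+1}}$ out of the conditional expectation in Definition~\ref{def_familiy_value}, writing $\mathsf{V}_{h+1}^\pi(f_{h+1})=U^{-1}\langle\vec{\sigma}_{h+1,f_{h+1}},\,\mathbb{E}_{\mathcal{P}^\prime}^\pi[\vec{\nu}_{h+1,\mathbf{\bar{F}}_{h+1}}\mid\mathbf{F}_{h+1}=f_{h+1}]\rangle$: the belief $\vec{\sigma}_{h+1,f_{h+1}}$ is a deterministic function of the given history and the model, while the conjugate belief $\vec{\nu}_{h+1}$ is assembled solely from future actions and observations, and under the hypothesis that $\mathbb{O}^\prime$ is history independent these observations are independent of $\mathbf{F}_{h+1}$ and carry no dependence on the past policy. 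This is exactly the property that decouples the per-step maximizations, and it is where the assumption on $\mathbb{O}^\prime$ enters.

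For the base case $h=H+1$, the value $\mathsf{V}_{H+1}^\pi(f_{H+1})=U^{-1}\norm{\vec{\sigma}_{H+1,f_{H+1}}}_1$ carries no $\pi$ dependence, so $\mathsf{V}_{H+1}^\star=W_{H+1}$ at once. For the inductive step, assume $W_{h+1}=\mathsf{V}_{h+1}^\star$ and start from the Bellman equation of Corollary~\ref{corollary_Bellman_equation_PORL}, $\mathsf{V}_h^\pi(f_h)=\mathbb{E}_{\mathbf{A}_h\sim\pi_h(\cdot|f_h)}U^{-1}\mathbb{E}_{\mathbf{O}_{h+1}\sim\mathbb{O}^\prime}[U\,\mathsf{V}_{h+1}^\pi(f_h,\mathbf{A}_h,\mathbf{O}_{h+1})]$. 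For the upper bound, the monotonicity of $U$ and $U^{-1}$ together with $\mathsf{V}_{h+1}^\pi\le\mathsf{V}_{h+1}^\star=W_{h+1}$ gives $\mathsf{V}_h^\pi(f_h)\le\max_{a_h\in\sA}U^{-1}\mathbb{E}_{\mathbf{O}_{h+1}}[U\,W_{h+1}(f_h,a_h,\mathbf{O}_{h+1})]=W_h(f_h)$, the last inequality replacing the average over $\mathbf{A}_h$ by the maximum; taking the supremum over $\pi$ yields $\mathsf{V}_h^\star\le W_h$. For achievability, I would set the greedy action $a_h^\star(f_h):=\argmax_{a_h\in\sA}U^{-1}\mathbb{E}_{\mathbf{O}_{h+1}}[U\,W_{h+1}(f_h,a_h,\mathbf{O}_{h+1})]$ and extend it by a continuation policy optimal at every successor history (available by the induction hypothesis and tail independence); substituting this policy into the same Bellman equation makes every inequality tight, so $\mathsf{V}_h^{\pi^\star}(f_h)=W_h(f_h)$ and hence $\mathsf{V}_h^\star\ge W_h$. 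Combining the two bounds closes the induction.

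The main obstacle is the achievability direction: one must argue that a single deterministic policy attains the optimum at \emph{all} successor histories $(f_h,a_h^\star,o_{h+1})$ simultaneously, rather than a different optimizer per observation. Tail independence resolves this, since it shows the continuation value is a bona fide function of the continuation policy, so that the backward-constructed greedy policy is globally well defined; one should also check that the fixed, action-independent reference expectation $\mathbb{E}_{\mathbf{O}_{h+1}\sim\mathbb{O}^\prime}$ — a direct consequence of the hypothesis on $\mathbb{O}^\prime$ — is what permits pulling the maximum through both $U$ and the expectation. The remaining points, namely measurability of the $\argmax$ and the reduction of maximizing $U^{-1}\mathbb{E}^\pi_{\mathcal{P}}U(\cdot)$ to maximizing $\mathbb{E}^\pi_{\mathcal{P}}U(\cdot)$ established before the statement, are routine.
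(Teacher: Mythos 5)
Your proof is correct and follows essentially the same route as the paper's: backward dynamic programming over the reference model, using the monotonicity of $U$ and $U^{-1}$, the Bellman recursion of Corollary~\ref{corollary_Bellman_equation_PORL}, and the history-independence of $\mathbb{O}^\prime$ entering exactly where you place it. The only presentational difference is that the paper writes the argument as a single forward chain of equalities whose key step tersely interchanges $\max_{a_{h+1:H}}$ with the expectation over $\mathbf{O}_{h+1}$, whereas you justify that same interchange explicitly through the two-sided inequality and the greedy-policy pasting argument built on tail independence — a clarification of the paper's step rather than a different method.
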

\begin{proof}
\begin{equation*}
\begin{aligned}
V_h^\star(f_h):=&
\Max{a_{h:H}}
U^{-1}
\mathbb{E}_{\mathcal{P}^\prime}
\left[
\langle \vec{\sigma}_{h, \mathbf{F}_h}, \vec{\nu}_{h,\bar{\mathbf{F}}_h} \rangle
\ | \ \mathbf{F}_h=f_h
\right]
//{\text{Definition of value functions in  \ref{def_familiy_value}}}
\\=&
U^{-1}
\Max{a_{h:H}} \mathbb{E}_{\mathcal{P}^\prime}
\left[
\langle \vec{\sigma}_{h, \mathbf{F}_h}, \vec{\nu}_{h,\bar{\mathbf{F}}_h} \rangle
\ | \ \mathbf{F}_h=f_h
\right]
\quad
//{\text{$U$ monotonically increases, so does $U^{-1}$.}}
\\=&
U^{-1}
\Max{a_h}\bigg\{
\Max{a_{h+1:H}}
\mathbb{E}_{\mathcal{P}^\prime}
\left[\ 
\mathbb{E}_{\mathcal{P}^\prime}
\left[
\langle 
\vec{\sigma}_{h, \mathbf{F}_h}, \vec{\nu}_{h,\bar{\mathbf{F}}_h} 
\rangle
\big|\mathbf{F}_{h+1}
\ \right]
\
\big|
\mathbf{F}_{h}=\f_h
\right]
\bigg\}
\\=&
U^{-1}
\Max{a_h}\bigg\{
\Max{a_{h+1:H}}
\mathbb{E}_{\mathcal{P}^\prime}
\left[\ 
\mathbb{E}_{\mathcal{P}^\prime}
\left[
\langle 
\vec{\sigma}_{H+1, \mathbf{F}_{H+1}}, 
\vec{\nu}_{h+1, \bar{\mathbf{F}}_{h+1}} \rangle
\big|\mathbf{F}_{h+1}
\ \right]
\big|
\mathbf{F}_{h}=\f_h
\right]
\bigg\}
\ //{\text{Observation \ref{observation_conjugate_evolve}}}
\\=&
U^{-1}
\Max{a_h}\bigg\{
{\Max{a_{h+1:H}}
\mathbb{E}_{\mathcal{P}^\prime}}
\left[
\mathbb{E}_{\mathcal{P}^\prime}
\left[
\langle 
\vec{\sigma}_{h+1, \mathbf{F}_{h},a_h,\mathbf{O}_{h+1}},
\vec{\nu}_{h+1, a_{h+1},\mathbf{O}_{h+2},\cdots, 
\mathbf{O}_H,a_H} \rangle
\big|\mathbf{F}_h,a_h,\mathbf{O}_{h+1}
\ \right]
\big|
\mathbf{F}_{h}=\f_h
\right]
\bigg\}
\\
=&
U^{-1}
\Max{a_h}\bigg\{
{\mathbb{E}_{\mathcal{P}^\prime}}
\left[
{\Max{a_{h+1:H}}}
\mathbb{E}_{\mathcal{P}^\prime}
\left[
\langle 
\vec{\sigma}_{h+1,\mathbf{F}_{h},a_h,\mathbf{O}_{h+1}}, 
\vec{\nu}_{h+1, a_{h+1},\mathbf{O}_{h+2},\cdots, 
\mathbf{O}_H,a_H} \rangle
\big|\mathbf{F}_h,a_h,\mathbf{O}_{h+1}
\ \right]
\big|
\mathbf{F}_{h}=\f_h
\right]
\bigg\}
\\
=&
U^{-1}
\Max{a_h}
\mathbb{E}_{\mathcal{P}^\prime}
[UV^\star_{h+1}(\mathbf{F}_{h+1}=(\mathbf{F}_h,a_h,\mathbf{O}_{h+1})
\big|\mathbf{F}_{h}=f_h]
\\=&
U^{-1}
\Max{a_h}
\mathbb{E}_{\mathcal{P}^\prime}
[UV^\star_{h+1}
(f_h,a_h,\mathbf{O}_{h+1})
] \text{//Section \ref{section_change_of_measure}}
\end{aligned}
\end{equation*}
\end{proof}
The proof is inspired by \cite{Baras&Elliott94,baauerle2017partially} 
and we have generalized their result beyond Gaussian transition matrices and the entropic risk. We will present the proof in the tabular case. Regularity conditions will be needed in the sixth step when we generalize the theorem to the continuous setting.

Our proof also yields the following corollary, providing justification for selecting greedy policies in our algorithm:
\begin{corollary}(Adapted from Theorem 3.3 of \cite{baauerle2017partially})
    There always exists an optimal policy for a risk-sensitive tabular POMDP using utility risk measure, which is deterministic and history-dependent.
\end{corollary}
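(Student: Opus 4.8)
The plan is to read off a deterministic, history-dependent optimal policy directly from the Bellman optimality equations of Theorem~\ref{theorem_Bellman Optimality equations for risk-sensitive POMDP}. First I would observe that, because the action space $\sA$ is finite in the tabular setting, for every $h\in[H]$ and every observable history $f_h\in\sF{h}$ the maximization $\max_{a\in\sA} U^{-1}\mathbb{E}_{\mathcal{P}^\prime}\left[U\,\mathsf{V}_{h+1}^\star(f_h,a,\mathbf{O}_{h+1})\right]$ is attained by at least one action. I would then define the candidate policy greedily by $\pi^\star_h(f_h):=\argmax_{a\in\sA} U^{-1}\mathbb{E}_{\mathcal{P}^\prime}\left[U\,\mathsf{V}_{h+1}^\star(f_h,a,\mathbf{O}_{h+1})\right]$, breaking ties by any fixed order on $\sA$. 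By construction $\pi^\star$ commits to a single action at each history, so it is deterministic, and it is defined as a function of the full observable history $f_h$, so it is history-dependent.

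The core of the argument is to verify that this greedy rule actually attains the optimal value, which I would establish by backward induction on $h$, proving $\mathsf{V}_h^{\pi^\star}(f_h)=\mathsf{V}_h^\star(f_h)$ for all $f_h$. The base case $h=H+1$ is immediate, since $\mathsf{V}_{H+1}^\pi(f_{H+1})=U^{-1}\norm{\vec{\sigma}_{H+1,f_{H+1}}}_1$ does not depend on $\pi$. For the inductive step I would apply the policy-evaluation Bellman equations of Corollary~\ref{corollary_Bellman_equation_PORL}: as $\pi^\star$ is deterministic, $\mathsf{V}_h^{\pi^\star}(f_h)=\mathsf{Q}_h^{\pi^\star}(f_h,\pi^\star_h(f_h))=U^{-1}\mathbb{E}_{\mathcal{P}^\prime}\left[U\,\mathsf{V}_{h+1}^{\pi^\star}(f_h,\pi^\star_h(f_h),\mathbf{O}_{h+1})\right]$. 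Substituting the induction hypothesis $\mathsf{V}_{h+1}^{\pi^\star}=\mathsf{V}_{h+1}^\star$ and invoking the defining property of the $\argmax$ together with the Bellman optimality equation gives $\mathsf{V}_h^{\pi^\star}(f_h)=\max_{a\in\sA}U^{-1}\mathbb{E}_{\mathcal{P}^\prime}\left[U\,\mathsf{V}_{h+1}^{\star}(f_h,a,\mathbf{O}_{h+1})\right]=\mathsf{V}_h^\star(f_h)$, closing the induction. Unrolling to $h=1$ yields $\mathsf{V}_1^{\pi^\star}=\mathsf{V}_1^\star=\max_\pi\mathsf{V}_1^\pi$; since $J(\pi;\mathcal{P})=U^{-1}\mathbb{E}_{\mathcal{P}^\prime}\left[U\,\mathsf{V}_1^\pi\right]$ by Remark~\ref{remark_express_regret_by_value_functions} and $U^{-1}$ is increasing, $\pi^\star$ is globally optimal for $J$.

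The step I expect to demand the most care is not the algebra but the conceptual point that the supremum defining $\mathsf{V}_h^\star$ over all (possibly randomized, history-dependent) policies is genuinely achieved by the single fixed rule $\pi^\star$, rather than merely approached. This is exactly the principle of optimality: committing to the greedy action at step $h$ must not sacrifice optimality at later steps, which is guaranteed because the inner tail maximization $\max_{a_{h+1:H}}$ commutes with the conditional expectation $\mathbb{E}_{\mathcal{P}^\prime}[\,\cdot\mid\mathbf{F}_h,a_h,\mathbf{O}_{h+1}]$ --- precisely the interchange already justified inside the proof of Theorem~\ref{theorem_Bellman Optimality equations for risk-sensitive POMDP}. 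I would stress that the finiteness of $\sA$ and $\sF{h}$ makes every $\argmax$ a bona fide tabular decision rule, so no measurable-selection issue arises, and that the strict monotonicity of $U$ is the only property of the utility used beyond the theorem, since it permits moving $U^{-1}$ through the maxima without reversing the optimization direction.
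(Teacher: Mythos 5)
Your proposal is correct and follows essentially the same route the paper takes: the paper states that the corollary is a byproduct of the proof of Theorem~\ref{theorem_Bellman Optimality equations for risk-sensitive POMDP}, namely that the greedy $\argmax$ policy extracted from the Bellman optimality backups is deterministic, history-dependent, and attains $\mathsf{V}_h^\star$ at every step, which is exactly the backward induction you spell out. Your additional remarks on tie-breaking, finiteness of $\sA$ removing measurable-selection issues, and the role of strict monotonicity of $U$ are consistent with (and make explicit) what the paper leaves implicit.
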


\subsection{Beta Vectors}
\label{section_beta_vectors}
Inspired by the alpha vector representation method in the study of POMDP(\cite{MonahanSurvey1982}), we will exploit the structure of
risk-sensitive value functions and represent them in a simple form.
Recall that in Definition \ref{def_familiy_value}, the value functions are specified as inner products. Since $\sigma(\mathbf{F}_h)$ is already determined by the condition on $f_h$, we can write the value function as 
$\mathsf{V}_{h}^\pi(f_h)
=
U^{-1}
\left\langle 
\vec{\sigma}_{h,f_h}\ , \ 
{
\mathbb{E}_{\mathcal{P}^\prime}^{\pi}
\left[
\vec{\nu}_{h,\bar{\mathbf{F}}_h}
\mid
\mathbf{F}_h=f_h
\right]
}
\right\rangle$. 
, which motivates us to introduce the concept of "beta" vector. 
\begin{definition}{(Beta vector)}
The beta vector of a risk-sensitive POMDP model $\mathcal{P}=(\mu_1, \mathbb{T}, \mathbb{O})$ under policy $\pi$
is a series of random vectors in $\R^{S}$, which are specified as
\label{def_beta_vector}
    \begin{equation}
        \begin{aligned}
        \vec{\beta}^\pi_{H+1,\boldsymbol{F_{H+1}}}:=&\vec{1}_{S}
        \\
        \vec{\beta}^\pi_{h,\boldsymbol{F_h}}:=&
        \mathbb{E}_{\mathcal{P}^\prime}^\pi
        \left[\vec{\nu}_{h,\boldsymbol{\bar{F}_h}}|\boldsymbol{F_h}\right]
        ,\quad \forall 2 \leq h \leq H.
        \\
        \vec{\beta}^\pi_1 :=&
        \mathbb{E}_{\mathcal{P}^\prime}^\pi
        \left[\vec{\nu}_{1,\boldsymbol{F_1}}\right]
        \end{aligned}
    \end{equation}
\end{definition}
where $\nu_t$ is the conjugate belief defined in \ref{def_adjoint_belief}.

Next, we will try to obtain the evolution law of the beta vector from the way $\vec{\nu}_h$ is updated:
\begin{equation}
\begin{aligned}
\label{update_adjoint_belief}
[\vec{\nu}_{h, {{\bar{f}_{h}}}}]_{s_{h}}
=\frac{1}{\mathbb{O}_1^\prime(o_1)}
\cdot
e^{\gamma r_h(s_h,{{a_h}})}
\cdot
\sum_{s_{h+1}\in \mathscr{S}}
\mathbb{T}_{h,{{a_h}}}(s_{h+1}|s_h)
\mathbb{O}_{h+1}({{o_{h+1}}}|s_{h+1})
[\vec{\nu}_{h+1, {{\bar{f}_{h+1}}}}]_{s_{h+1}}
\end{aligned}
\end{equation}
Under the reference model $\mathcal{P}^\prime$, we can compute the probability of witnessing an observable trajectory $\bar{f}_h$ given previous history $f_h$ by the following equation:
\begin{equation}
\begin{aligned}
\label{conditional_prob}
\mathbb{P}_{\mathcal{P}^\prime}^{\pi}(\bar{f}_{h}|f_h)
=
\pi_h(a_h|f_h)
{\mathbb{O}_1^\prime(o_1)}
\mathbb{P}_{\mathcal{P}^\prime}^{\pi}(\bar{f}_{h+1}|f_h,a_h,o_{h+1})
=
\left({\mathbb{O}_1^\prime(o_1)}\right)^{H-h+1}
\cdot
\prod_{t=h}^{H}\pi_h(a_t|f_t)
\end{aligned}
\end{equation}
where $\bar{f}_h=(a_h,o_{h+1},\bar{f}_{h+1})=(a_h,o_{h+1},\cdots, a_{H-1},o_{H},a_H)$. Combining Eqs.\eqref{conditional_prob}, \eqref{update_adjoint_belief}, we obtain
\begin{equation}
\begin{aligned}
\label{condition_expect_express}
&\mathbb{E}_{\mathcal{P}^\prime}^{\pi}
[\nu_{h,\mathbf{\bar{F}_h}}|f_h]
\\=&
\sum_{{a_h}}
\pi_h({{a_h}}|f_h)
\sum_{{o_{h+1}}}
{\mathbb{O}_1^\prime(o_1)}
\sum_{\bar{f}_{h+1}}
\mathbb{P}_{\mathcal{P}^\prime}^{\pi}(\bar{f}_{h+1}|f_h,a_h,o_{h+1})
\\
&\qquad 
\frac{1}{\mathbb{O}_1^\prime(o_1)}
\cdot
e^{\gamma r_h(s_h,{{a_h}})}
\cdot
\sum_{s_{h+1}\in \mathscr{S}}
\mathbb{T}_{h,{{a_h}}}(s_{h+1}|s_h)
\mathbb{O}_{h+1}({{o_{h+1}}}|s_{h+1})
[\vec{\nu}_{h+1}({{\bar{f}_{h+1}}})]_{s_{h+1}}
\\=&
\sum_{a_h\in \mathscr{A}}\pi_h({{a_h}}|f_h)
\left\{
e^{\gamma r_h(s_h,{{a_h}})}
\sum_{s_{h+1}\in \mathscr{S}}
\mathbb{T}_{h,{{a_h}}}(s_{h+1}|s_h)
\sum_{o_{h+1}\in \mathscr{O}}
\mathbb{O}_{h+1}({{o_{h+1}}}|s_{h+1})
\mathbb{E}^\pi_{\mathcal{P}^\prime}
\left[\vec{\nu}_{h+1,\mathbf{\bar{F}}_{h+1}}|f_{h+1}\right]_{s_{h+1}}
\right\}
\end{aligned}
\end{equation}

The previous derivations leads to the fundamental theorem below, which forms the cornerstone of our subsequent analysis:
\begin{theorem}{($\beta$-vector representation of value functions)}
\label{theorem_beta_vector_representation of value functions}
For any $h \in \{1,\cdots,H+1\}$ and $f_h\in \sF{h}$, 
the value function defined in \ref{def_familiy_value}
can be expressed as the inner product
between the risk-sensitive beliefs 
defined in \ref{def_risk_sensitive_belief}
and the beta vector defined in 
\ref{def_beta_vector}:
\begin{equation*}
    \begin{aligned}
    \mathsf{V}_{1}^{\pi}
    =&
    U^{-1}
    \langle
    \vec{\sigma}_{1}
    \ , 
    \ 
    \vec{\beta}^\pi_{1}
    \rangle
    =
    {
    \frac{1}{\gamma}
    \ln
    \langle
    \vec{\sigma}_{1}
    \ , 
    \ 
    \vec{\beta}^\pi_{1}
    \rangle
    }
    \\
    \mathsf{V}_{h}^{\pi}(f_h)
    =&
    U^{-1}
    \langle
    \vec{\sigma}_{h,f_h}
    \ , 
    \ 
    {\vec{\beta}^\pi_{h,f_h}}
    \rangle
    =
    {
    \frac{1}{\gamma}
    \ln
    \langle
    \vec{\sigma}_{h,f_h}
    \ , 
    \ 
    \vec{\beta}^\pi_{h,f_h}
    \rangle
    }
    \end{aligned}
\end{equation*}
Moreover, the beta vectors evolve by the following rule:
For all $2 \leq h \leq H, \f_h\in \sF{h}, \ s_h\in \sS$
\begin{equation}
    \begin{aligned}
    \label{update_rule_real_beta}
    \vec{\beta}_{H+1,f_{H+1}}=&\vec{1}_{S}\\
    \left[\vec{\beta}_{h,f_h}\right]_{s_h} =&\ 
    \mathbb{E}_{{a_h}\sim \pi_h(\cdot|f_h)}
    \left[e^{\gamma r_h(s_h,{{a_h}})}
    \sum_{{{s_{h+1}}}}
    \mathbb{T}_{h,{{a_h}}}({{s_{h+1}}}|{s_h})
    \sum_{{{o_{h+1}}}}
    \mathbb{O}_{h+1}({{o_{h+1}}}|{{s_{h+1}}})
    \left[ 
    \vec{\beta}_{h+1,f_{h+1}=(f_h,{{a_h}},{{o_{h+1}}})}
    \right]_{{{s_{h+1}}}}
    \right]
    \end{aligned}
\end{equation}
\end{theorem}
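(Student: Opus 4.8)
The plan is to prove the two assertions separately: first the inner-product representation $\mathsf{V}_h^\pi(f_h) = U^{-1}\langle \vec{\sigma}_{h,f_h}, \vec{\beta}^\pi_{h,f_h}\rangle$, and then the backward recursion \eqref{update_rule_real_beta} for the beta vectors. The representation is almost immediate from the definitions. Starting from Definition \ref{def_familiy_value}, I would exploit the fact that the risk belief $\vec{\sigma}_{h,\mathbf{F}_h}$ is measurable with respect to the forward history $\sigma(\mathbf{F}_h)$; hence, once we condition on $\mathbf{F}_h = f_h$ it becomes the deterministic vector $\vec{\sigma}_{h,f_h}$ and can be extracted from the conditional expectation by linearity of the inner product:
\begin{equation*}
\mathsf{V}_h^\pi(f_h) = U^{-1}\mathbb{E}^\pi_{\mathcal{P}^\prime}\left[\langle \vec{\sigma}_{h,f_h}, \vec{\nu}_{h,\mathbf{\bar{F}}_h}\rangle \,\big|\, \mathbf{F}_h = f_h\right] = U^{-1}\left\langle \vec{\sigma}_{h,f_h},\, \mathbb{E}^\pi_{\mathcal{P}^\prime}[\vec{\nu}_{h,\mathbf{\bar{F}}_h} \mid \mathbf{F}_h = f_h]\right\rangle.
\end{equation*}
The bracketed conditional expectation is exactly $\vec{\beta}^\pi_{h,f_h}$ by Definition \ref{def_beta_vector}, closing this half. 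Specializing the utility $U$ to the entropic risk recovers the $\tfrac{1}{\gamma}\ln$ form stated in the theorem. Notably, this step requires neither induction nor the conjugate-evolution property \eqref{Eq_conjugate_evolve}; it follows purely from the definitions.

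For completeness I would verify the boundary index $h = H+1$ directly: there $\vec{\beta}^\pi_{H+1} = \vec{1}_S$ by definition, and $\mathsf{V}^\pi_{H+1}(f_{H+1}) = U^{-1}\norm{\vec{\sigma}_{H+1,f_{H+1}}}_1 = U^{-1}\langle \vec{\sigma}_{H+1,f_{H+1}}, \vec{1}_S\rangle$, so the representation holds trivially; the cases $h=1$ (using $\vec{\sigma}_1 = \vec{\mu}_1$, which is deterministic) and $2 \le h \le H$ are covered by the same extraction argument.

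The second assertion, the recursion \eqref{update_rule_real_beta}, is the substantive part, but most of the work is already available in \eqref{condition_expect_express}. My approach is to compute $\vec{\beta}^\pi_{h,f_h} = \mathbb{E}^\pi_{\mathcal{P}^\prime}[\vec{\nu}_{h,\mathbf{\bar{F}}_h}\mid f_h]$ by unrolling one step of the conjugate-belief update \eqref{update_adjoint_belief} and decomposing the backward history as $\mathbf{\bar{F}}_h = (\mathbf{A}_h, \mathbf{O}_{h+1}, \mathbf{\bar{F}}_{h+1})$. The conditional expectation then splits into a sum over $a_h$ weighted by $\pi_h(a_h\mid f_h)$, a sum over $o_{h+1}$, and an inner conditional expectation over $\mathbf{\bar{F}}_{h+1}$ given $f_{h+1} = (f_h,a_h,o_{h+1})$. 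The decisive move is to use the factorized form \eqref{conditional_prob} of the reference-model conditional probability, in which the observations are independent of the state: the factor $1/\mathbb{O}_1^\prime(o_1)$ produced by the $\nu$-update cancels against the $\mathbb{O}_1^\prime(o_1)$ weight in the reference law of $o_{h+1}$, leaving the \emph{true} emission probability $\mathbb{O}_{h+1}(o_{h+1}\mid s_{h+1})$ in place and identifying the inner expectation as $[\vec{\beta}^\pi_{h+1,f_{h+1}}]_{s_{h+1}}$.

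I expect the main obstacle to be the careful bookkeeping in this cancellation: one must track how the reference measure $\mathbb{O}^\prime$ enters both the Radon–Nikodym factors carried inside $\vec{\nu}$ and the sampling law of $o_{h+1}$ under $\mathcal{P}^\prime$, and verify that the surviving weights are precisely $\mathbb{T}_{h,a_h}(s_{h+1}\mid s_h)$ and $\mathbb{O}_{h+1}(o_{h+1}\mid s_{h+1})$ from the true model, while the risk exponential $e^{\gamma r_h(s_h,a_h)}$ rides along as the per-step utility increment and factors out of the state sum. Once these cancellations are confirmed, the final line of \eqref{condition_expect_express} is exactly the claimed recursion \eqref{update_rule_real_beta}, which completes the proof.
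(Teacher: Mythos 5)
Your proposal is correct and follows essentially the same route as the paper: the representation is obtained by pulling the $\mathbf{F}_h$-measurable vector $\vec{\sigma}_{h,f_h}$ out of the conditional expectation in Definition~\ref{def_familiy_value} so that the remaining factor $\mathbb{E}^\pi_{\mathcal{P}^\prime}[\vec{\nu}_{h,\mathbf{\bar{F}}_h}\mid \mathbf{F}_h=f_h]$ is $\vec{\beta}^\pi_{h,f_h}$ by Definition~\ref{def_beta_vector}, and the recursion is exactly the computation in Eqs.~\eqref{update_adjoint_belief}, \eqref{conditional_prob} and \eqref{condition_expect_express}, including the cancellation of the reference-measure factor $\mathbb{O}^\prime$ against the Radon--Nikodym weight carried by $\vec{\nu}$. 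Nothing is missing.
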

We remind the reader that this theorem holds for randomized policies.

\begin{remark}\label{remark_alpha-beta-comparison}
When it is clear from the context, we omit the policy sign $^\pi$ for the beta vectors. We can also define the beta vectors for the empirical POMDP model by replacing the matrices $\mathbb{T}$ and $\mathbb{O}$ with their empirical approximations $\mathbb{\widehat{T}}^k$ and $\mathbb{\widehat{O}}^k$. The corresponding beta vector will be called the ``empirical beta vector'', which is denoted as $\widehat{\beta}^{k,\pi}_{h,f_h}$ and abbreviated as $\widehat{\beta}_{h}$.
\end{remark}

\begin{remark}(Motivation behind the beta vector)
According to \cite{MonahanSurvey1982,PBVI,Lee2023hindsight} the value function of a risk-neutral POMDP is the inner product of the risk-neutral belief $\vec{b}_h$ and another function named the ``alpha vector''. The concept of $\alpha-$vector has been widely adopted in the algorithm design of POMDPs \cite{PBVI}.
\begin{equation*}
\begin{aligned}
&V_{h}^\pi(\vec{b}_h;\f_h)=
\mathbb{E}_{a_h\sim \pi(\cdot|\f_h)}
\left[
r(\vec{b}_h,a_h;\f_h)
+
\sum_{o_{h+1} \in \sO}
{\eta_h(o_{h+1}|{\f_h,a_h})}
V^{\pi}_{h+1}
(\vec{b}_{h+1};\f_{h+1}=(\f_h,a_h,o_{h+1}))
\right]
\\
&V_h^\pi(\vec{b}_h;\f_h) = \langle \vec{b}_h, \alpha_{h,\f_h}^\pi \rangle
\\
&\begin{cases}
\{
\vec{\alpha}_{H+1,f_{H+1}}^\pi \equiv  0
\\
\left[
\vec{\alpha}_{h,\f_h}^\pi
\right]_{s_h}
=
\mathbb{E}_{a_h\sim \pi(\cdot|\f_h)}
        \left[
        r(s_h,a_h)+ 
        {
        \sum_{s_{h+1}}\bT_{h,a_h}(s_{h+1}|{s_h})
        \sum_{o_{h+1}}\bO_{h+1}(o_{h+1}|{s_{h+1}})
        }
        \left[\vec{\alpha}^\pi_{h+1,\f_{h+1}=(\f_{h},a_h,o_{h+1})} \right]_{s_{h+1}}
\right]
\end{cases}
    \end{aligned}
\end{equation*}
Though the update rule of the value function inevitably relies on the entire history,
the alpha vectors evolve in a {Markovian} way. This finding helps us obtain a
polynomial regret bound in risk-neutral POMDP using hindsight observations,
which is discovered by \cite{Lee2023hindsight}. To gain some insights, recall
that the pigeon-hole lemma \ref{lemma_pigeon-hole} suggests that the sum of
concentration errors is the polynomial function of the cardinality of the space
on which the transition probability is dependent.

\begin{equation*}
\begin{aligned}
\widehat{P}_h(s^\prime|{s,a}):=
 \frac{\sum_{\kappa=1}^k\mathds{1}
 \{\hat{s}_{h+1}^\kappa=s'\ , \ {\hat s_h^\kappa} =s\ , \ {\hat a_h^\kappa} =a\}}
 {\max\left\{1, \widehat{N}_h^{k+1}({s,a})\right\}}
\quad  
\sum_{k=1}^K \frac{1}{\sqrt{\max \{1,N_h^{k+1}({\widehat{s}_h^k, \widehat{a}_h^k})\}}} \lt 2\sqrt{K \cdot {SA}}
\end{aligned}
\end{equation*}

In the POMDP setting, if we refuse to represent the value function by the alpha vector, the upper bound in the right should be replaced with
$2\sqrt{K\cdot \abs{\sF{h}\times \sA}}=2O^{\frac{h}{2}}A^{\frac{h}{2}}$, which then brings a factor of $O^HA^H$ to our regret. However, under hindsight observability, since we can calculate the occurrences of the hidden states after each episode, we use the alpha vectors to represent the value function. Consequently, we can replace $O^HA^H$ with  $2\sqrt{S^2}$, which is polynomial in H again.
However, we cannot directly utilize the alpha vector in the risk-sensitive setting, as it may no longer preserve the Markov property. For the reader's convenience, we excerpt the critical steps in the proof of the evolution law \cite{MonahanSurvey1982}

\begin{equation*}
\begin{aligned}
{=}&
\langle \vec{b}_h(\f_h), r_h(\cdot,a_h)\rangle
+
{\r_{o_{h+1}\sim \eta_{h+1}(\cdot|\f_h,a_h)}}
\left[
\left\langle 
{
\frac{{\diag{\bO_{h+1}(o_{h+1}|\cdot)}\ \bT_{h,a_h}\ \vec{b}_h(\f_h)}}{\eta_{h+1}(o_{h+1}|\f_h,a_h)}
}
, \alpha_{h+1,\f_{h+1}}^\pi 
\right\rangle
\right]
\\
{=}&
\langle \vec{b}_h(\f_h), r_h(\cdot,a_h)\rangle
+
\left\langle
\sum_{o_{h+1}\in \sO}
{\bcancel{\eta_{h+1}(o_{h+1}|\f_h,a_h)}}
\frac{{\diag{\bO_{h+1}(o_{h+1}|\cdot)}\ \bT_{h,a_h}\ \vec{b}_h(\f_h)}}{{\bcancel{\eta_{h+1}(o_{h+1}|\f_h,a_h)}}}
, \alpha_{h+1,\f_{h+1}}^\pi 
\right\rangle
\end{aligned}
\end{equation*}

In the risk-neutral setting, the risk measure ${\rho}$ is represented by the expectation operator $\mathbb{E}$. The simple linear structure of $\mathbb{E}$ helps to cancel the partition function $\eta_{h+1}(\cdot|f_h,a_h)$ when the alpha vector updates, thus eliminating the culprit of the exponential dependency on H. 
However, the previous analysis becomes invalid when using a non-linear risk measure. That's why the beta vector is introduced in risk-sensitive POMDPs to mitigate the impact of historical dependencies under a non-linear criterion \cite{WATER&Willems81,Whittle91,Baras97}. 
\end{remark}

\begin{remark}(Comparison with the alpha vector in risk-neutral POMDP)
The value function can be expressed as an inner product in risk-sensitive settings, because the utility risk measure keeps a layer of $\mathbb{E}$, which still bears a linear structure. Both $\alpha$ and beta vectors evolve in a Markovian way, making it possible to control the regret under a polynomial upper bound given hindsight observability. However, the terminal value of $\vec{\beta}_{h,f_h}$ is $\vec{1}$
while that of $\vec{\alpha}$ is $\vec{0}$; The beta vector undergoes multiplicative updates, whereas the alpha vector renews through additive increments. If we brutally set $\gamma=0$ we cannot reduce the beta vector to the alpha vector, as $\gamma=0$ is a singular point of the risk measure, which will cause all the beta vectors to collapse to $\vec{1}$. We will introduce the correct way to degenerate our result to the classical setting in Section \ref{section_compare_other_setting}.
\end{remark}

\section{Detailed Algorithm Design}
In what follows, we present the algorithm introduced
in Section \ref{section_algo_design_main} in detail, along with several remarks additional to the discussion in Section \ref{section_algo_design_main}.
\begin{remark}(Computation issues)
    The BVVI algorithm, as well as other exact algorithms for POMDP, are inefficient in computation complexity\cite{1998exact_approx_alg_pomdp}, which is due to the inherent complexity of the POMDP model\cite{Tsitsiklis1987}. Using similar techniques as the point-based algorithms\cite{PBVI}, we can develop approximate solutions to our problem based on BVVI.
\end{remark}

\begin{remark}(Explanation of line \ref{alg_clip_and_sgn})
    The operation in line \ref{alg_clip_and_sgn} is equivalent to the assignment below:
    \begin{equation}
        \begin{aligned}
        \forall s\in \sS:\quad 
        \left[\widehat{\beta}_{h,f_h}^k\right]_{s}
        \gets 
        \begin{cases}
        e^{\gamma^+(H-h+1)}  &, \left[\widehat{\beta}_{h,f_h}^k\right]_{s} \geq e^{\gamma^+(H-h+1)}
        \\
        e^{\gamma^-(H-h+1)}  &, \left[\widehat{\beta}_{h,f_h}^k\right]_{s} \leq e^{\gamma^-(H-h+1)}
        \\
        \left[\widehat{\beta}_{h,f_h}^k\right]_{s} &,\text{else}
        \end{cases}
        \end{aligned}
    \end{equation}
\end{remark} 

\newpage

\begin{algorithm}[H]
    \caption{\text{Beta Vector Value Iteration(BVVI)}}    \label{alg:Beta Vector Value Iteration}
    \begin{algorithmic}[1]
        \STATE {\bfseries Input}
        risk sensitivity $\gamma \ne 0$, 
        confidence level $\delta\in(0,1)$, 
        episode number K, horizon length H.
        \STATE{ {\bfseries{Initialize}}
        $
        {\widehat{\mu}^k_1(\cdot), 
        \widehat{\bT}_{h,a}^1}(\cdot|s)
        \gets 
        \text{Unif}({\sS}), 
        \quad 
        {\widehat{\bO}_{h}^1}(\cdot|s)
        \gets 
        \text{Unif}({\sO}) 
        $
        for all 
        $(h,s,a,o,s')\in [H]\times \sS\times \sA\times \sO\times \sS.$
        }
        \FOR{$k = 1:K$}
            \STATE \algcom{//Planning}
            \STATE \algcom{//Forward belief propagation}
            \STATE{$
            \widehat{\sigma}^k_1
            \gets 
            \widehat{\mu}_1^k
            $}
            \label{alg_init_belief}
            \FOR{
            $h=1:H, \quad s_{h+1},s_h\in \sS, \quad \f_{h+1}=(\f_h,a_h,o_{h+1})\in \sF{h}\times \sA\times \sO$}        
                \STATE{
                \algcom{//Update risk belief by Eq.~\eqref{update_belief}}}
                \STATE{$
                \left[\widehat{\sigma}^k_{h+1, \f_{h+1}}\right]_{s_{h+1}}
                \gets 
                \sum_{s_{h}}
                \widehat{\bT}^k_{h,a_{h}}(s_{h+1}|s_h)
                \widehat{\bO}^k_{h+1}(o_{h+1}|s_{h+1})
                \cdot 
                {
                \left(
                \frac{e^{\gamma r_h(s_h,a_h)}}{\bO_{h+1}^\prime (o_{h+1}|s_{h+1})}
                \right)
                }
                \left[\widehat{\sigma}^k_{h,f_h}\right]_{s_h}
                $}
                \label{alg_belief_prop}
                \STATE
                    \algcom{\text{//Residue terms}}
                \STATE  $
                        {\mathsf{t}_h^k(s_h,a_h)}
                        \gets 
                        \min \left\{1, 3\sqrt{\frac{S H \ln KHSOA/\delta}{\widehat{N}_h^k(s_h,a_h) \vee 1}} \right\}
                        $
                \STATE  $
                        \hphantom{\mathsf{t}_h^k(s_h,a_h)}
                        \mathllap{\mathsf{o}_{h+1}^k(s_{h+1})}
                        \gets 
                        \min\left\{1,3\sqrt{\frac{O H \ln KHSOA/\delta}{ \widehat{N}_{h+1}^{k+1}(s_{h+1}) \vee 1}}\right\}
                        $
                \STATE \algcom{\text{//Prepare bonus by Eq.~\eqref{bonus_design}}}
                \STATE{
                    $
                \hphantom{\widehat{\sigma}^k_{h+1, \f_{h+1}}}
                \mathllap{{\mathsf{b}_h^k(s_h,a_h)}}
                \gets
                \abs{e^{\gamma(H-h+1)}-1}
                \cdot
                \min\left\{\ 1, \ 
                \mathsf{t}_h^k(s_h,a_h)+
                \sum_{s_{h+1}}
                {
                \widehat{\mathbb{T}}^k_{h,a_h}}(s_{h+1}|s_h)
                \mathsf{o}_{h+1}^k(s_{h+1})
                \right\}
                    $
                }
                \label{alg_bonus_definition}
            \ENDFOR
            \STATE  \algcom{//Backward dynamic programming}
            \label{hihii}
        \STATE{$\
        \widehat{\beta}^k_{H+1,f_{H+1}}
            \gets 
            \vec{1}_{S}
            $}
            \FOR{$h = H:1$}
                \FOR{$\f_h=(a_1,o_2,\cdots,a_h,o_h)\in \sF{h}, \ a_h \in \sA$}
                \STATE  \algcom{\text{//Invoke Bellman equation \eqref{blmaneqs_raw} under beta vector representation}}
                \STATE{$
                \widehat{\mathsf{Q}}_h^k(a_h;f_{h})
                \gets 
                \frac{1}{\gamma}
                \ln
                \mathbb{E}_{\boldsymbol{O^\prime} 
                \sim \text{Unif}\mathscr{O}}
                \left \langle 
                \widehat{\sigma}^k_{h+1, f_{h+1}=(f_h,a_h,\boldsymbol{O^\prime})}
                \ ,\ 
                \widehat{\beta}^k_{h+1,f_{h+1}=(f_h,a_h,\boldsymbol{O^\prime})}
                \ 
                \right \rangle 
                $}
                \label{alg_beta_represent}
                \STATE{
                $\hphantom{\widehat{\mathsf{Q}}_h^k(a_h;f_{h})}
                \mathllap{\mathsf{\widehat{V}}^k_{h}(f_h)}
                \gets
                \Max{a\in \sA} \ 
                \mathsf{\widehat{Q}}^k_h(a_h;f_h)         
                $}
                \STATE{$
                \hphantom{\widehat{\mathsf{Q}}_h^k(a_h;f_{h})}
                \mathllap{\widehat{\pi}^k_h(f_h)}
                \gets 
                \Argmax{a\in \sA}
                \mathsf{\widehat{Q}}^k_h(a;f_h)   
                $}
                \algcom{\text{// Obtain the greedy policy}}
                \label{greedy_policy_definition}
                \STATE 
                    \algcom{\text{//Update beta vector by Eq.~\eqref{update_empirical_beta}}}
                \STATE{
                    \begin{multline*}
                    \hphantom{\widehat{\mathsf{Q}}_h^k(a_h;f_{h})}
                    \mathllap{\widehat{\beta}^k_{h,f_h}(s_h)}
                    \gets
                    e^{\gamma r_{h}(s_h,\widehat{\pi}^k_h(f_h))}
                    \sum_{s_{h+1}}\mathbb{\widehat{T}}_{h,\widehat{\pi}^k_h(f_h)}^k(s_{h+1}|s_h)
                    \\\sum_{o_{h+1}}\mathbb{\widehat{O}}_{h+1}^k(o_{h+1}|s_{h+1})
                    \left[\widehat{\beta}^k_{h+1,f_{h+1}=(f_h,\widehat{\pi}^k_h(f_h),o_{h+1})}\right]_{s_{h+1}}
                    +
                    \boldsymbol{\sgn(\gamma)}
                    \cdot
                    \mathsf{b}_{h+1}^k(s_h,\widehat{\pi}^k_h(f_h))
                    \end{multline*}
                }
                \label{alg_update_beta}
                \STATE \algcom{\text{//Control the range of beta vector}}
                \STATE{
                $
                \hphantom{\widehat{\mathsf{Q}}_h^k(a_h;f_{h})}
                \mathllap{\widehat{\beta}_{h,f_h}^k}
                \gets
                \text{Clip}
                \left(
                \widehat{\beta}_{h,f_h}^k
                \ , \  
                \left[e^{\gamma^-(H-h+1)}, 
                e^{\gamma^+(H-h+1)}\right]
                \right)
                $
                \label{alg_clip_and_sgn}
                }
                \ENDFOR
            \ENDFOR
            \STATE  \algcom{//Learning}
            \STATE{Play with the real environment
            under policy $\{\widehat{\pi}_h^k\}_{h=1}^H$ 
            and collect a trajectory
            $({\hat a_1^k},\hat{o}_2^k,\ldots, \hat o_H^k, \hat a_H^k)$}
            \label{alg_empirical_collect}
            \STATE{Reveal the hidden states $\{{\hat s_h^k}\}_{h=1}^H$ in the previous H steps to the agent.}
            \algcom{//Hindsight observation}
            \label{alg_hindsight}
            \STATE \algcom{//{
            Update the empirical model
            }}
            \FOR{$h = 1:H$}
                \FOR{$ (s,a,o,s') \in \sS\times\sA\times\sO\times\sS$}
                \STATE ${\widehat{N}_{h}^{k+1}}(s)
                        \gets  
                        \sum_{\kappa=1}^k \mathds{1} \left\{{\hat s_h^\kappa}=s\right\}
                        \qquad
                        {\widehat{N}_{h}^{k+1}}(s,a)\gets  \sum_{\kappa=1}^k \mathds{1} \left\{{\hat s_h^\kappa}=s, {\hat a_h^\kappa}=a\right\}
                        \qquad$
                \label{alg_count_occur}
                \STATE  $
                        {\widehat{\mu}_1^k(s)}
                        \gets
                        \frac{1}{k} \sum_{\kappa=1}^k \mathds{1}\{\widehat{s}_1^{\kappa}=s\}
                        $
                \STATE  $
                        {\widehat{\mathbb{T}}_{h,a}^{k+1}}(s'|s)
                        \gets
                        \frac{\sum_{\kappa=1}^k\mathds{1}
                        \{\hat{s}_{h+1}^\kappa=s'\ , \ {\hat s_h^\kappa} =s\ , \ {\hat a_h^\kappa} =a\}}
                        {\max \left\{ 1, 
                        {\widehat{N}_h^{k+1}}
                        (s,a)\right\}} 
                        \quad 
                        {\widehat{\mathbb{O}}_{h}^{k+1}}(o|s)
                        \gets
                        \frac{\sum_{\kappa=1}^k \mathds{1} \{{\hat o_{h}^\kappa}=o\ , \ {\hat s_h^\kappa} =s\}}
                        {\max\left\{1, {\widehat{N}_h^{k+1}}(s)\right\}}
                        $
                \label{alg_approximate_matrix}
                \ENDFOR
            \ENDFOR 
        \ENDFOR
    \end{algorithmic} 
\end{algorithm}

\section{Regret analysis}
\label{section_regret_analysis}
Our regret analysis takes several steps.
First, we study the statistical error of the beta vectors and design a new bonus for our problem, which ensures optimism in value functions. 
Later, we represent the regret by the beta vectors before we roll down the Bellman equations to calculate the error accumulated during the entire Markov process. 

Several facts will be repeatedly used in the proceeding derivations.
\subsection{Preparation}
The following result captures the empirical error of emission matrix $\mathbb{O}_h(\cdot|s_h)$:
\begin{fact}(Empirical error of the emission matrix)
With probability at least $1-\delta$, 
    \begin{equation}
    \begin{aligned}
    \norm{\widehat{\mathbb{O}}^k_h(\cdot|s_h)-\mathbb{O}_h(\cdot|s_h)}_1
    \leq 
    \min \left\{ \ 2 \  , \ 
    \sqrt{\frac{2 O \ln \frac{KHS}{\delta}}{N_h^{k+1}(s_h) \vee 
    1}}
    \ \right\}
    \end{aligned}
    \end{equation}
\end{fact}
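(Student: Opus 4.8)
The plan is to recognize this as a standard $\ell_1$-concentration statement for the empirical estimate of a categorical distribution over the $O$ possible observations, combined with a trivial total-variation bound and a union bound over the relevant indices. Since both $\widehat{\mathbb{O}}^k_h(\cdot|s_h)$ and $\mathbb{O}_h(\cdot|s_h)$ are probability vectors on $\sO$, their $\ell_1$ distance never exceeds $2$; this immediately supplies the first branch of the minimum, and it is exactly the branch that must be invoked when the visit count $N_h^{k+1}(s_h)$ is small (in particular zero, where the $\vee 1$ convention renders the square-root branch vacuous). The real content of the Fact therefore lies in the second branch.

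To obtain the second branch I would first fix a pair $(h,s)$ and expose the i.i.d.\ structure hidden in the estimator. By the definition of the emission process, the observation generated at step $h$ depends only on the current hidden state and is drawn afresh from $\mathbb{O}_h(\cdot|s)$ independently of the past history; this is precisely where hindsight observation enters, since revealing $\{\widehat{s}_h^\kappa\}$ lets the agent collect, for each fixed $(h,s)$, exactly the observations emitted on those visits to state $s$ at step $h$. Conditioned on the set of episodes realizing such a visit, these observations are i.i.d.\ with law $\mathbb{O}_h(\cdot|s)$, and the estimator $\widehat{\mathbb{O}}_h^{k+1}(\cdot|s)$ of Eq.~\eqref{explain_approx_update} is precisely their empirical distribution. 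I would then invoke the $\ell_1$ concentration inequality for empirical distributions (Lemma~\ref{lemma_concentration_in_one_norm}): after a fixed number $n\geq 1$ of samples, with probability at least $1-\delta'$ one has $\norm{\widehat{p}-p}_1 \leq \sqrt{2O\ln(1/\delta')/n}$ up to the constants absorbed by the lemma.

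Finally I would remove the conditioning and take the union bound in a way that produces the factor $\ln\frac{KHS}{\delta}$ exactly. For fixed $(h,s)$, view the observations emitted on consecutive visits to state $s$ at step $h$ as an i.i.d.\ stream from $\mathbb{O}_h(\cdot|s)$; the estimator arising at any episode $k$ is simply the empirical distribution of the first $N_h^{k+1}(s)$ terms of this stream. It therefore suffices to control the empirical deviation after every fixed sample size $n\in[K]$ and for every $(h,s)$, which is $KHS$ events; union-bounding at level $\delta/(KHS)$ turns $\ln(1/\delta')$ into $\ln\frac{KHS}{\delta}$ and yields, simultaneously for all episodes $k$, the bound $\sqrt{2O\ln(KHS/\delta)/(N_h^{k+1}(s_h)\vee 1)}$. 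Taking the minimum with the trivial bound $2$ then gives the claim uniformly, including the degenerate zero-count case covered by $\vee 1$.

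The main obstacle is the adaptivity of the data-collection process: the policies $\widehat{\pi}^k$ change across episodes and the visit counts $N_h^{k+1}(s_h)$ are random, so reading $\widehat{\mathbb{O}}_h^{k+1}(\cdot|s)$ naively as an average of a fixed number of independent draws is not literally valid. The resolution, and the step deserving the most care, is the conditioning argument above, which exploits the memorylessness of the emission kernel given the hidden state to recover genuine conditional independence of the observations; the random sample size is then neutralized by the additional union over $n\in[K]$, whose logarithmic cost is exactly the $K$ appearing inside the logarithm.
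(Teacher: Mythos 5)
Your proof is correct and is essentially the argument the paper intends: the paper itself only defers to Lemma C.1 of \cite{Lipschitz23}, and the content of that citation is exactly your combination of the trivial total-variation bound of $2$, the $\ell_1$ concentration inequality for empirical categorical distributions (Lemma~\ref{lemma_concentration_in_one_norm}), and a union bound over $(h,s)$ and over the possible visit counts $n\in[K]$, which is what produces the $\ln\frac{KHS}{\delta}$ factor. Your handling of the adaptive, random sample sizes---conditioning on the hidden-state visits, coupling the emissions with a pre-generated i.i.d.\ stream from $\mathbb{O}_h(\cdot|s)$, and paying an extra union bound over $n\in[K]$---is the standard and correct resolution of the only delicate point.
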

The proof is similar to Lemma C.1 in \cite{Lipschitz23}. This relation holds without hindsight observability.

The upper and lower bounds for the beta vectors will also be useful.
\begin{fact}
\label{lemma_beta_vector_uppper_lower_bounds}
(Bounds of the beta vectors)
$e^{(H-h+1)\gamma^-}\leq \left[\vec{\beta}_{h,f_h}\right]_{s_h} \leq e^{(H-h+1)\gamma^+}$ for all $f_h\in \sF{h}$ and $s_h \in \sS$.
\end{fact}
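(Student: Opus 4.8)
The plan is to prove both inequalities simultaneously by backward induction on $h$, using the recursive definition of the beta vector in Eq.~\eqref{update_rule_real_beta}. The base case $h=H+1$ is immediate: since $\vec{\beta}_{H+1,f_{H+1}}=\vec{1}_S$ and $(H-(H+1)+1)=0$, every component equals $1=e^{0\cdot\gamma^-}=e^{0\cdot\gamma^+}$, so the bounds hold with equality.

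For the inductive step I would first isolate the single-step fact that for any $\gamma\ne 0$ and any reward value $r\in[0,1]$ one has $e^{\gamma^-}\le e^{\gamma r}\le e^{\gamma^+}$. This follows by splitting into the cases $\gamma>0$ (where $\gamma^-=0$, $\gamma^+=\gamma$, and $\gamma r$ ranges over $[0,\gamma]$) and $\gamma<0$ (where $\gamma^-=\gamma$, $\gamma^+=0$, and $\gamma r$ ranges over $[\gamma,0]$). This uniform estimate lets me treat the risk-seeking and risk-averse regimes together rather than duplicating the argument.

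Next, assuming $e^{(H-h)\gamma^-}\le\left[\vec{\beta}_{h+1,f_{h+1}}\right]_{s_{h+1}}\le e^{(H-h)\gamma^+}$ for all $f_{h+1}\in\sF{h+1}$ and $s_{h+1}\in\sS$, I would observe that the sum over $s_{h+1}$, the sum over $o_{h+1}$, and the outer expectation over $a_h$ appearing in Eq.~\eqref{update_rule_real_beta} are all convex combinations, because $\mathbb{T}_{h,a_h}(\cdot|s_h)$, $\mathbb{O}_{h+1}(\cdot|s_{h+1})$, and $\pi_h(\cdot|f_h)$ are probability distributions summing to one. A weighted average of numbers lying in $[e^{(H-h)\gamma^-},e^{(H-h)\gamma^+}]$ stays in the same interval, so the inner double sum of beta components is trapped between $e^{(H-h)\gamma^-}$ and $e^{(H-h)\gamma^+}$. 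Multiplying by $e^{\gamma r_h(s_h,a_h)}$ and invoking the single-step bound gives $e^{(H-h+1)\gamma^-}\le e^{\gamma r_h}\cdot(\text{weighted average})\le e^{(H-h+1)\gamma^+}$, and the final expectation over $a_h\sim\pi_h(\cdot|f_h)$ preserves these bounds by the same convexity reasoning, closing the induction.

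I do not anticipate a genuine obstacle, as this is an elementary monotonicity and convex-combination induction. The only point that demands care is bookkeeping the sign of $\gamma$ consistently: one must check that the lower bound is governed by $\gamma^-$ and the upper by $\gamma^+$ in both regimes, which the single-step lemma handles cleanly, and that the exponent increments from $(H-h)$ to $(H-h+1)$ precisely when the factor $e^{\gamma r_h}$ contributes its $e^{\gamma^{\pm}}$ multiplier.
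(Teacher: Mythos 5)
Your proof is correct and is precisely the ``simple induction on $h$'' that the paper invokes without elaboration: backward induction from $\vec{\beta}_{H+1,f_{H+1}}=\vec{1}_S$, using that the transition, emission, and policy sums in Eq.~\eqref{update_rule_real_beta} are convex combinations and that $e^{\gamma r}\in[e^{\gamma^-},e^{\gamma^+}]$ for $r\in[0,1]$. No gaps; the sign bookkeeping and the exponent increment from $H-h$ to $H-h+1$ are handled correctly.
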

The proof is done by a simple induction on h.

\subsection{Optimism}
\label{section_optimism_and_bonus}
\subsubsection{Statistical Error of The Beta vector}
\label{section_lower_bound_beta_vector_error}
In this section, we quantify the error in beta vectors caused by the inaccurate approximation of the transition and emission probabilities. For the convenience of narration, we will temporarily view the beta vector as a binary function over the space of $\mathscr{S}\times \mathscr{O}$. Since our result holds for any given episode, we omit the subscripts $k$ in the following derivations.

The update rule of the beta vectors in \eqref{update_rule_real_beta} is equivalent to
\begin{equation*}
\begin{aligned}
\vec{\beta}_{h,f_h}(s_h)
=&
e^{\gamma r_h(s_h,\pi_h(f_h))}
\sum_{\mathbf{s_{h+1}}}\mathbb{T}_{h,\pi_h(f_h)}(\mathbf{s_{h+1}}|s_h)
\sum_{\mathbf{o_{h+1}}}\mathbb{O}_{h+1}(\mathbf{o_{h+1}}|\mathbf{s_{h+1}})
\vec{\beta}_{h+1,f_{h+1}=(f_h,\pi_h(f_h),\mathbf{o_{h+1}})}(\mathbf{s_{h+1}})
\\
=&
\mathbb{E}_{\mathbf{S}_{h+1}\sim \mathbb{T}_{h,\pi_h(f_h)}(\cdot|s_h),\mathbf{O_{h+1}}\sim \mathbb{O}_{h+1}(\cdot|\mathbf{s_{h+1}})}
\left[
e^{\gamma r_h(s_h,\pi_h(f_h))}
\vec{\beta}_{h+1,f_{h+1}=(f_h,\pi_h(f_h),\mathbf{O}_{h+1})}(\mathbf{S}_{h+1})
\right]
\end{aligned}
\end{equation*}
During the update of beta vectors we view $\vec{\beta}_{h,f_h}(s_h)$ as a
binary functions over the space of $\mathscr{Z}:=\mathscr{S\times O}$. 
With a slight abuse of notations, we denote $(s_h,o_h)$ as $z_h$
and write $\vec{\beta}_{h,f_h=(\tau_{h-1},o_h)}(s_h)$ as $\vec{\beta}_h(z_h ;\tau_{h-1})$.
We also abbreviate $r_h(s_h,\pi_h(f_h))$ as $\mathsf{r}_h(z_h)$. The transition law of the beta vector can be written as a joint distribution over the space of $\sS\times \sO$:
$\mathbb{P}^{\pi_h}_h(z_{h+1}|s_h)
:=
\mathbb{T}_{h,\pi_h(f_h)}(\mathbf{s_{h+1}}|s_h)
\mathbb{O}_{h+1}(\mathbf{o_{h+1}}|{s_{h+1}})$

Using the newly introduced short hands we can write the update rule of beta vectors as
\begin{equation*}
\begin{aligned}
\label{distribution_err_real_beta}
\vec{\beta}_h(z_h;\tau_{h-1})=&
\mathbb{E}_{\mathbb{P}^{\pi_h}_h(\cdot|s_h)}\left[e^{\gamma \mathsf{r}_h(z_h)}\vec{\beta}_{h+1}(\mathbf{Z}_{h+1};\tau_{h})\right]
=
\mathbb{E}_{\mathbb{P}^{\pi_h}_h}
\left[e^{\gamma \left(\mathsf{r}_h(z_h)+\frac{\ln \vec{\beta}_{h+1}(\mathbf{Z}_{h+1};\tau_{h})}{\gamma}\right)}\right]
=
\mathbb{E}_{\mathbb{P}^{\pi_h}_h} 
\left[e^{\gamma \mathcal{V}_h (\mathbf{Z}_{h+1};\tau_{h})}\right]
\end{aligned}
\end{equation*}
We remind the reader that we have introduced a new function $\mathcal{V}_h(z_{h+1})$ to 
abbreviate $\mathsf{r}_h(z_h)+\frac{\ln \vec{\beta}_{h+1}(z_{h+1};\tau_{h})}{\gamma}$. 
The function $\mathcal{V}_h$ is bounded in $[1,H-h+1]$ according to Lemma \ref{lemma_beta_vector_uppper_lower_bounds}. Next, we will calculate the approximation error occurred when the beta vector updates. The error is caused by the inaccurate estimate of the joint transition law.
\begin{equation}
\begin{aligned}
\label{last_line}
&
\abs{
\left(\mathbb{E}_{\mathbb{\widehat{P}}_h^{\pi_h}(\cdot|s_h)}
-\mathbb{E}_{\mathbb{P}_h^{\pi_h}(\cdot|s_h)}\right)
(e^{\gamma r_h}\vec{\beta}_{h+1}( \cdot ;\tau_{h}))
}
\\=&
\abs{
\mathbb{E}_{\mathbf{Z}_{h+1}\sim \mathbb{\widehat{P}}_h^{\pi_h}(\cdot|s_h)}
[e^{\gamma \mathcal{V}_h(\mathbf{Z}_{h+1};\tau_{h})}]
-
\mathbb{E}_{\mathbf{Z}_{h+1}\sim {\mathbb{P}}_h^{\pi_h}(\cdot|s_h)}
[e^{\gamma \mathcal{V}_h(\mathbf{Z}_{h+1};\tau_{h})}]
}
\\=&
\abs{\left(
\sum_{s_{h+1}}\widehat{\mathbb{T}}_{h,a_h}(s_{h+1}|s_h)
\sum_{o_{h+1}}\widehat{\mathbb{O}}_{h+1}(o_{h+1}|s_{h+1})
-
\sum_{s_{h+1}}{\mathbb{T}}_{h,a_h}(s_{h+1}|s_h)
\sum_{o_{h+1}}{\mathbb{O}}_{h+1}(o_{h+1}|s_{h+1})
\right)
e^{\gamma \mathcal{V}_{h+1}(s_{h+1},o_{h+1};\tau_{h})}
}
\\
=&
\sum_{s_{h+1}}
\widehat{\mathbb{T}}_{h,a_h}(s_{h+1}|s_h)
\abs{
\sum_{o_{h+1}}
\widehat{\mathbb{O}}_{h+1}(o_{h+1}|s_{h+1})
-
{\mathbb{O}}_{h+1}(o_{h+1}|s_{h+1})
{e^{\gamma \mathcal{V}_{h+1}(s_{h+1},o_{h+1};\tau_{h})}}
}
\\+&
\abs{
\sum_{s_{h+1}}
\left(
\widehat{\mathbb{T}}_{h,a_h}(s_{h+1}|s_h)
-
{\mathbb{T}}_{h,a_h}(s_{h+1}|s_h)
\right)
\cdot
{
\sum_{o_{h+1}}{\mathbb{O}}_{h+1}(o_{h+1}|s_{h+1})
e^{\gamma \mathcal{V}_{h+1}(s_{h+1},o_{h+1};\tau_{h})}
}
}
\end{aligned}
\end{equation}

Lemma \ref{lemma_naive_concentration} implies a natural upper bound for the error in Eq.~\eqref{last_line}:
\begin{equation}\begin{aligned}\label{bonus_bound_1}
\abs{
\left(\mathbb{E}_{\mathbb{\widehat{P}}_h^{\pi_h}(\cdot|s_h)}
-\mathbb{E}_{\mathbb{P}_h^{\pi_h}(\cdot|s_h)}\right)
(e^{\gamma r_h}\vec{\beta}_{h+1}( \cdot ;\tau_{h}))
}\leq 
\abs{e^{\gamma^+(H-h+1)}-e^{\gamma^-(H-h+1)}}\cdot 
\norm{\widehat{\mathbb{P}}_h^{\pi_h}-{\mathbb{P}}_h^{\pi_h}}_{tv}
\leq \abs{e^{\gamma(H-h+1)}-1}\cdot 1
\end{aligned}\end{equation}
We can also use Lemma \ref{lemma_concentration_Hoeffding_function} to derive another upper bound. With probability at least $1-2\delta$,
\begin{equation}
    \begin{aligned}
    \label{bonus_bound_2}
    &
    \abs{
    \left(\mathbb{E}_{\mathbb{\widehat{P}}_h^{\pi_h}(\cdot|s_h)}
    -\mathbb{E}_{\mathbb{P}_h^{\pi_h}(\cdot|s_h)}\right)
    (e^{\gamma r_h(\cdot)}\vec{\beta}_{h+1}(\cdot;\tau_h))
    }
    \\\leq & 
    \abs{e^{\gamma(H-h+1)}-1}
    \cdot 
    \left[
    \min\left\{\ 1,\  
    3
    \sqrt{\frac{S\cdot H\ln \frac{KHSOA}{\delta}}{\widehat{N}_h^k(s_h,\widehat{\pi}^k_h(s_h))\vee 1}}
    \right\}
    \right.
    \\&
    \left.
    +
    \sum_{s_{h+1}}\widehat{\mathbb{T}}^k_{h,\widehat{\pi}^k_h(s_h)}(s_{h+1}|s_h)
    \min\left\{
    \ 1, \ 
    3\sqrt{\frac{O\cdot H\ln \frac{KHSOA}{\delta}}{{\widehat{N}_{h+1}^{k+1}(s_{h+1})\vee 1}}}
    \right\}
    \right]
    \end{aligned}
    \end{equation}

The additional $\sqrt{H}$ comes from the inherent history-dependency of the of POMDP: 
$$
\sqrt{\ln \frac{KHO(O^hA^h)}{\delta}}, 
\sqrt{\ln \frac{KHSA(O^hA^h)}{\delta}}
\lt \sqrt{H \frac{\ln KHSOA}{\delta}}
$$.

Putting Eq.~\eqref{bonus_bound_1} and \eqref{bonus_bound_2} together, we conclude that with probability at least $1-2\delta$:
\begin{equation}
\begin{aligned}
\label{bonus_bound_all}
&\left(
\sum_{s_{h+1}}\widehat{\mathbb{T}}^k_{h,\widehat{\pi}^k_h(s_h)}(s_{h+1}|s_h)
\sum_{o_{h+1}}\widehat{\mathbb{O}}^k_{h+1}(o_{h+1}|s_{h+1})
\right.\\&
\left.-
\sum_{s_{h+1}}{\mathbb{T}}_{h,\pi_h(f_h)}(s_{h+1}|s_h)
\sum_{o_{h+1}}{\mathbb{O}}_{h+1}(o_{h+1}|s_{h+1})
\right)
e^{\gamma r_h(s_h,\widehat{\pi}_h(s_h))}
\vec{\beta}_{h+1,f_{h+1}=(f_h,\pi_h(f_h), o_{h+1})}(s_{h+1})
\\
\leq &
\abs{e^{\gamma(H-h+1)}-1}
\min\left\{
\ 1, \ 
\min\left\{\ 1,\  
3
\sqrt{\frac{S\cdot H\ln \frac{KHSOA}{\delta}}{\widehat{N}_h^k(s_h,\widehat{\pi}^k_h(s_h))\vee 1}}
\right\}
\right.
\\&\qquad \qquad \qquad \qquad \qquad +
\left.
\sum_{s_{h+1}}\widehat{\mathbb{T}}^k_{h,\widehat{\pi}^k_h(s_h)}(s_{h+1}|s_h)
\min\left\{
\ 1, \ 
3\sqrt{\frac{O\cdot H\ln \frac{KHSOA}{\delta}}{{\widehat{N}_{h+1}^{k+1}(s_{h+1})\vee 1}}}
\right\}
\right\}
\end{aligned}
\end{equation}

We design the exploration bonus according to Eq.~\eqref{bonus_bound_all}.
\begin{definition}(Bonus)
Bonuses are a series of real-valued functions $\mathsf{b}_h^k(\cdot,\cdot;\gamma): \mathscr{S\times A} \to \R_{\geq 0}$ specified as
\begin{multline}
\label{bonus_design}
{
\mathsf{b}^k_h(s_h,a_h;\gamma)}
:=
\abs{e^{\gamma(H-h+1)}-1}
\cdot 
\min
\left\{
\ 1, \ 
{
\min
\left\{
\ 1, \ 
3\sqrt{\frac{S\cdot 
H
{\ln\frac{KHSOA}{\delta}}}{\widehat{N}_h^k(s_h,a_h)\vee 1}}
\right\}
}
\right.
\\
+
\left.
\sum_{s_{h+1}}\widehat{\mathbb{T}}^k_{h,\widehat{\pi}^k_h(s_h)}(s_{h+1}|s_h)
\ 
{
\min\left\{
\ 1, \ 
3\sqrt{\frac{O\cdot 
H
{\ln\frac{KHSOA}{\delta}}}{{\widehat{N}_{h+1}^{k+1}(s_{h+1})\vee 1}}}
\right\}
}
\ \ 
\right\}
\end{multline}
\end{definition}

Previously, we have analyzed the upper bound of the approximation error incurred during the update process. We will utilize the relevant results and define the series of ``empirical beta vectors'' $\widehat{\beta}_h$ that helps approximate the value function.
\begin{definition}(Empirical beta vectors)
Given any episode $k\in[K]$,
if we use $a_h$ to abbreviate the action selected by the greedy policy $\widehat{\pi}_h^k(f_h)$ given by algorithm \ref{alg:Beta Vector Value Iteration}, 
then the empirical beta vector is defined as 
\label{def_empirical_beta}
\begin{equation}
\begin{aligned}
\label{update_empirical_beta}
&\forall f_{H+1}\in\sF{H+1},s_{H+1}\in\sS: 
\quad
\widehat{\beta}^k_{H+1,f_{H+1}}(s_{H+1}):=1
\\
&\forall h\in [H],f_h\in\sF{h}, s_h\in\sS, \\
&
\quad
\widehat{\beta}^k_{h,f_h}(s_h)
:=
e^{\gamma r_h(s_h,a_h)}
\sum_{s_{h+1}\in \sS}
\mathbb{\widehat{T}}_{h,a_h}^k(s_{h+1}|s_h)
\sum_{o_{h+1} \in \sO}
\mathbb{\widehat{O}}_{h+1}^k(o_{h+1}|s_{h+1})
\widehat{\beta}^k_{h+1,f_{h+1}=(f_h,a_h,o_{h+1})}(s_{h+1})
\\
&\qquad \qquad \qquad 
+
{
\sign{\gamma}
\cdot 
\mathsf{b}_{h}^k(s_h,a_h;\gamma)
}
\end{aligned}
\end{equation}
\end{definition}

\begin{remark}(Range of the empirical beta vectors)
\label{beta_vector_error_range}
{If we pose no further restrictions} on the range of $\widehat{\beta}_h^k$ we can show that 
the upper bound on $\widehat{\beta}_h^k$ will inevitably depend 
on $e^{\gamma H^2}$, which will cause an additional factor of $H$ in our regret.
To circumvent this issue we have manually clipped the value of empirical beta vector in our algorithm(line \ref{alg_clip_and_sgn}), so as to force $\widehat{\beta}_h$ to stay in the same range as $\vec{\beta}_h$. 
As a consequence, the difference between the two beta vectors, which will be called the {``beta-vector error''}, is controlled by
\begin{equation}
\begin{aligned}
\label{range_of_empirical_beta}
{
\abs{\widehat{\beta}_h^k - \vec{\beta}_h} \leq \abs{e^{\gamma (H-h+1)} -1}
}
\end{aligned}
\end{equation}
\end{remark}

Apart from the absolute error between the beta vectors, we are also concerned with their 
size relationship.
\begin{corollary}(Size relationship between beta vectors)
\label{corollary_Size_beta}
\\
For all $s_h \in \sS:$
$\quad \widehat{\beta}^k_{h,f_h}(s_h) \geq \vec{\beta}_{h,f_h}(s_h)$ when $\gamma \gt 0$
and
$\widehat{\beta}^k_{h,f_h}(s_h)\leq \vec{\beta}_{h,f_h}(s_h)$ when $\gamma \lt 0$. 
\end{corollary}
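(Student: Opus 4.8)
The plan is to proceed by backward induction on $h$, from $h=H+1$ down to $h=1$, establishing both claims simultaneously (one per sign of $\gamma$). Since the empirical beta vector in \eqref{update_empirical_beta} and the true beta vector in \eqref{update_rule_real_beta} are both evaluated under the same deterministic greedy policy $\widehat{\pi}^k$, the expectation over $a_h$ collapses onto the single action $a_h=\widehat{\pi}^k_h(f_h)$, which simplifies the comparison. The base case $h=H+1$ is immediate: both $\widehat{\beta}^{k}_{H+1,f_{H+1}}$ and $\vec{\beta}_{H+1,f_{H+1}}$ equal $\vec{1}_S$, so equality holds and both inequalities are trivially satisfied.

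For the inductive step I would first analyze the update \emph{before} clipping. Writing $\mathbb{E}_{\widehat{\mathbb{P}}}$ and $\mathbb{E}_{\mathbb{P}}$ for the one-step expectations under $(\widehat{\mathbb{T}}^k\widehat{\mathbb{O}}^k)$ and $(\mathbb{T}\mathbb{O})$ respectively, and noting that $r_h(s_h,a_h)$ does not depend on the summation variable $s_{h+1}$, I would decompose the gap as
$$\widehat{\beta}^{k}_{h,f_h}(s_h)-\vec{\beta}_{h,f_h}(s_h)=\underbrace{e^{\gamma r_h}\,\mathbb{E}_{\widehat{\mathbb{P}}}\big[\widehat{\beta}^{k}_{h+1}-\vec{\beta}_{h+1}\big]}_{(\mathrm{A})}+\underbrace{\big(\widehat{\mathbb{T}}^k\widehat{\mathbb{O}}^k-\mathbb{T}\mathbb{O}\big)\big(e^{\gamma r_h}\vec{\beta}_{h+1}\big)+\sgn(\gamma)\,\mathsf{b}^k_h(s_h,a_h;\gamma)}_{(\mathrm{B})}.$$
Term $(\mathrm{A})$ inherits its sign from the induction hypothesis: because $\mathbb{E}_{\widehat{\mathbb{P}}}$ carries nonnegative weights and $e^{\gamma r_h}>0$, it is $\geq 0$ when $\gamma>0$ (where $\widehat{\beta}^{k}_{h+1}\geq\vec{\beta}_{h+1}$) and $\leq 0$ when $\gamma<0$. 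For term $(\mathrm{B})$ I would invoke \eqref{bonus_bound_all}, whose two-sided form shows that the concentration error $(\widehat{\mathbb{T}}^k\widehat{\mathbb{O}}^k-\mathbb{T}\mathbb{O})(e^{\gamma r_h}\vec{\beta}_{h+1})$ is dominated in absolute value by the bonus $\mathsf{b}^k_h$ defined in \eqref{bonus_design}; hence $(\mathrm{B})\geq -\mathsf{b}^k_h+\mathsf{b}^k_h=0$ when $\sgn(\gamma)=+1$ and $(\mathrm{B})\leq \mathsf{b}^k_h-\mathsf{b}^k_h=0$ when $\sgn(\gamma)=-1$. Adding the two terms yields $\widehat{\beta}^{k}_{h,f_h}(s_h)\geq\vec{\beta}_{h,f_h}(s_h)$ for $\gamma>0$ and the reverse for $\gamma<0$, prior to clipping.

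The remaining point is that the clipping in line \ref{alg_clip_and_sgn} preserves the inequality, and here the key observation is that $\vec{\beta}_{h,f_h}(s_h)$ already lies inside the clipping window $[e^{\gamma^-(H-h+1)},e^{\gamma^+(H-h+1)}]$ by Fact \ref{lemma_beta_vector_uppper_lower_bounds}. For $\gamma>0$, if the unclipped $\widehat{\beta}^{k}$ overshoots the upper endpoint it is clamped down to $e^{\gamma^+(H-h+1)}\geq\vec{\beta}_{h,f_h}(s_h)$, while it can never drop below the lower endpoint since it already dominates $\vec{\beta}\geq e^{\gamma^-(H-h+1)}$; either way the clamped value stays $\geq\vec{\beta}_{h,f_h}(s_h)$, closing the induction, and the $\gamma<0$ case is symmetric. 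I expect the main (though modest) obstacle to be the bookkeeping: ensuring the bonus bound is used as a genuinely two-sided estimate so that $\sgn(\gamma)\mathsf{b}^k_h$ exactly cancels the worst-case concentration error, and verifying that clamping cannot push $\widehat{\beta}^{k}$ across $\vec{\beta}$ — both of which ultimately rest on the fact that $\vec{\beta}$ inhabits the very interval to which $\widehat{\beta}^{k}$ is clipped.
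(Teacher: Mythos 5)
Your proof is correct and follows essentially the same route as the paper's: backward induction from $h=H+1$, splitting the one-step gap into a term controlled by the induction hypothesis and a model-error term that the bonus $\sgn(\gamma)\,\mathsf{b}_h^k$ offsets via Eq.~\eqref{bonus_bound_all}. Your explicit verification that the clipping step preserves the inequality (because $\vec{\beta}_{h,f_h}$ already lies in the clipping window by Fact~\ref{lemma_beta_vector_uppper_lower_bounds}) is a detail the paper's proof leaves implicit, and is a welcome addition rather than a divergence.
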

\begin{proof}
We only prove the case when $\gamma \lt 0$. The other way is similar. The statement holds when $h=H+1$ since both vectors are defined to be $\vec{1}_{S}$. Suppose that it holds at $h+1$, 
then by the induction hypothesis we have $\widehat{\beta}_h^k
\leq 
\widehat{\mathbb{T}}
\widehat{\mathbb{O}}
e^{\gamma r_h} \vec{\beta}_{h+1}^k
{-\mathsf{b}_h^k}
$
=$
\left[ 
(\widehat{\mathbb{T}}
\widehat{\mathbb{O}}
-
\mathbb{T}\mathbb{O}
)
e^{\gamma r_h} \vec{\beta}_{h+1}^k
{-\mathsf{b}_h^k}
\right]
+
\mathbb{T}\mathbb{O}\vec{\beta}_{h+1}^k
$
=
$
\left[ 
(\widehat{\mathbb{T}}
\widehat{\mathbb{O}}
-
\mathbb{T}\mathbb{O}
)
e^{\gamma r_h} \vec{\beta}_{h+1}^k
{-\mathsf{b}_h^k}
\right]
+\vec{\beta}_{h}^k
$. 
Equation \eqref{bonus_bound_all} implies that the terms in the bracket are less than zero, which helps us complete the proof.
\end{proof}

\subsubsection{Optimism in Value Functions}
\label{section_optimism_in_value_functions}
Corollary \ref{corollary_Size_beta} directly results in the
optimism in value functions:
for all $k\in[K]$,
\begin{equation}
\begin{aligned}
\label{optimism_val_fuctions}
V_1^{\pi^\star} \leq \widehat{V}_1^{\widehat{\pi}^k}
\end{aligned}
\end{equation}
\begin{proof}
$$
V_{1}^{\pi^\star}-\widehat{V}_1^{\pi^\star}
=
\frac{1}{\gamma}\ln \langle \vec{\sigma}_1, \vec{\beta}_1 \rangle - \frac{1}{\gamma} \ln\langle \widehat{\sigma}_1^k,\widehat{\beta}^k_1 \rangle
=\frac{1}{\gamma} \ln \langle \vec{\mu}_1 , \vec{\beta}_1 \rangle -
\frac{1}{\gamma} \ln \langle \vec{\mu}_1 , \widehat{{\beta}}^k_1\rangle
\leq 0
$$
The last step remaining is the fact that $\widehat{V}_1^{\pi{^\star}} \leq \widehat{V}_1^{\widehat{\pi}^k}$.
\end{proof}

\paragraph{Additional notations}\label{all_bounds}
For ease of notations we will abbreviate several upper and lower bounds in the following analysis.
\begin{equation*}
\begin{aligned}
&\text{Upper and lower bounds on the risk measure} \quad
\underline{B}_{u}:=e^{-\gamma^-}
\leq e^{\gamma r_h(s_h,a_h)} \leq e^{\gamma^+}:=\overline{B_{u}}
\\&\text{Upper and lower bounds on the beta-vectors}\quad 
\underline{B}_{\vec{\beta}_h}
:=e^{\gamma^{-}(H-h+1)}
\leq \vec{\beta}_h \leq e^{\gamma^{+}(H-h+1)}
:=\overline{B}_{\vec{\beta}_h}
\\&{\text{Upper bound of the bonus}}\quad 
0\leq 
\mathsf{b}_h^k(s_h,a_h)
\leq
\abs{
e^{{\gamma}(H-h+1)}-1
}
:=
\overline{B}_{\mathsf{b}_h}
\quad 
\\&{\text{Upper bound of the ``beta vector error''}}
0\leq 
\abs{
\widehat{\beta}_{h+1}^k-\vec{\beta}_{h+1}
}
\leq
\abs{e^{\gamma(H-h+1)}-1}
:=\overline{B}_{\Delta \vec{\beta}_{h+1}}
\quad 
\end{aligned}
\end{equation*}

\begin{remark}\label{remark_degeneracy_relation}
According to the update rule in equation \eqref{update_rule_real_beta} and \eqref{update_empirical_beta}, when the risk-sensitivity parameter
$\gamma$ tends to zero, the beta vectors will degenerate to $\vec{1}$. 
The bonus function, as well as the beta-vector error will vanish with in the rate of $H$
\begin{equation}
\begin{aligned}
\label{bound_degeneration_to_risk_neutral}
&\lim_{\gamma \to 0} \vec{\beta}_{h}=
\lim_{\gamma \to 0} \widehat{\beta}_{h}
=\vec{1}
\qquad
\lim_{\gamma \to 0} \mathsf{b}_h^k(\cdot,\cdot)
=
\lim_{\gamma \to 0} B_{\Delta \vec{\beta}_h}=0
\quad 
\lim_{\gamma \to 0}
\frac{\overline{B}_{\mathsf{b}_h}}{\gamma}=
\lim_{\gamma \to 0}
\frac{\overline{B}_{\Delta \vec{\beta}_h}}{\gamma}=H
\end{aligned}
\end{equation}
which contributes an additional $H$ to our regret.
\end{remark}

\subsection{Regret Calculation}

\subsubsection{Represent the Regret by Beta Vectors}
The regret can be represented as the approximation error of beta vectors at the initial time step.
\begin{equation}
\begin{aligned}\label{split_regret_to_two_errors}
&
\text{Regret}(K; \mathcal{P})
:=
\sum_{k=1}^{K}
U^{-1}\mathbb{E}_{\mathcal{P}}
\left[
U V_1^{\pi^{\star}} \right]
-
U^{-1}\mathbb{E}_{\mathcal{P}}
\left[
U V_{1}^{\widehat{\pi}^k}
\right]
\\\leq&
\sum_{k=1}^{K}
U^{-1}\mathbb{E}_{\mathcal{P}}
\left[
U\widehat{V}_1^{k,\widehat{\pi}^k}
\right]
-
U^{-1}\mathbb{E}_{\mathcal{P}}
\left[
UV_{1}^{\widehat{\pi}^k}
\right]
\quad //{\text{Section \ref{section_optimism_in_value_functions}}}
\\=&
\sum_{k=1}^{K}
U^{-1}\mathbb{E}_{\mathcal{P}}
\left[
\langle \widehat{\sigma}^k_1\ , \
\widehat{\beta}_{1}^{k,\widehat{\pi}^k} \rangle
\right]
-
U^{-1}\mathbb{E}_{\mathcal{P}}
\left[
\langle \vec{{\sigma}}_1\ , \
\vec{\beta}_{1}^{\widehat{\pi}^k} \rangle
\right]
\quad //{\text{Theorem
\ref{theorem_beta_vector_representation of value functions}}}
\\=&
\sum_{k=1}^{K}
\left(
U^{-1}\mathbb{E}_{\mathcal{P}}
\left[
\langle \widehat{\sigma}^k_1\ , \
\widehat{\beta}_{1}^{k,\widehat{\pi}^k} \rangle
\right]
-
U^{-1}\mathbb{E}_{\mathcal{P}}
\left[
\langle \vec{\sigma}_1\ , \
\widehat{\beta}_{1}^{k,\widehat{\pi}^k} \rangle
\right]
\right)
+
\left(
U^{-1}\mathbb{E}_{\mathcal{P}}
\left[
\langle \vec{\sigma}_1\ , \
\widehat{\beta}_{1}^{k,\widehat{\pi}^k} \rangle
\right]
-
U^{-1}\mathbb{E}_{\mathcal{P}}
\left[
\langle \vec{{\sigma}}_1\ , \
\vec{\beta}_{1}^{\widehat{\pi}^k} \rangle
\right]
\right)
\\=&
\sum_{k=1}^{K}
U^{-1}
\langle \widehat{\mu}^k_1\ , \
\widehat{\beta}_{1}^{k,\widehat{\pi}^k} \rangle
-
U^{-1}
\langle \vec{\mu}_1\ , \
\widehat{\beta}_{1}^{k,\widehat{\pi}^k} \rangle
+
\sum_{k=1}^{K}
U^{-1}
\mathbb{E}_{\mathbf{S}_1\sim \vec{\mu}_1}
\left[
\widehat{\beta}_{1}^{k,\widehat{\pi}^k}(\mathbf{S}_1)
\right]
-
U^{-1}
\mathbb{E}_{\mathbf{S}_1\sim \vec{\mu}_1}
\left[
\vec{\beta}_{1}^{\widehat{\pi}^k}(\mathbf{S}_1)
\right]
\\=&
\underbrace{\sum_{k=1}^{K}
U^{-1}
\mathbb{E}_{\mathcal{\widehat{P}}^k}
\left[
\widehat{\beta}_{1}^{k,\widehat{\pi}^k}(\mathbf{S}_1)
\right]
-
U^{-1}
\mathbb{E}_{\mathcal{P}}
\left[
\widehat{\beta}_{1}^{k,\widehat{\pi}^k}(\mathbf{S}_1)
\right]
}_{\text{Prior error
}}
+
\underbrace{\sum_{k=1}^{K}
U^{-1}
\mathbb{E}_{\mathcal{P}}
\left[
\widehat{\beta}_{1}^{k,\widehat{\pi}^k}(\mathbf{S}_1)
\right]
-
U^{-1}
\mathbb{E}_{\mathcal{P}}
\left[
\vec{\beta}_{1}^{\widehat{\pi}^k}(\mathbf{S}_1)
\right]
}_{\text{Evolution error
}}
\end{aligned}
\end{equation}
The terms in the last step of Eq. \eqref{split_regret_to_two_errors} hold significant physical meanings. The first term, referred to as the ``prior error", is incurred by the imprecise estimate of the prior distribution of the hidden states. The second term, named ``evolution error", represents the accumulation of error throughout the entire horizon of the Markov process, resulting from an inaccurate estimate of the beta vector.
In what follows we will try to find upper bounds for the two error terms.

\subsubsection{Bound the Prior Error}\label{section_Bound the prior error}
First, we bound the prior error in Eq. \eqref{split_regret_to_two_errors}.
With probability at least $1-\delta$,
\begin{equation}\begin{aligned}\label{initial_error}
&\underbrace{\sum_{k=1}^{K}
U^{-1}
\mathbb{E}_{\mathcal{\widehat{P}}^k}
\left[
\widehat{\beta}_{1}^{k,\widehat{\pi}^k}(\mathbf{S}_1)
\right]
-
U^{-1}
\mathbb{E}_{\mathcal{P}}
\left[
\widehat{\beta}_{1}^{k,\widehat{\pi}^k}(\mathbf{S}_1)
\right]
}_{\text{Prior error
}}
\leq 
\mathsf{K}_{\gamma} \sum_{k=1}^K 
\abs{
\sum_{s_1}
\left(\widehat{\mu}_1^k(s_1)-\vec{\mu}_1(s_1) \right) \widehat{\beta}_1^{k,\widehat{\pi}^k}(s_1)
}
\\ \leq &
\mathsf{K}_{\gamma}
\sum_{k=1}^K
\frac{\abs{e^{\gamma^+ H}-e^{\gamma^- H}}}{2}
\norm{\widehat{\mu}_1^k(\cdot)-\vec{\mu}_1(\cdot)}_1
\leq 
\frac{e^{(-\gamma)^+}}{\abs{\gamma}}
\sum_{k=1}^K
\frac{\abs{e^{\gamma^+ H}-e^{\gamma^- H}}}{2}
\sqrt{\frac{2S }{k}\ln \frac{K}{\delta}}
\leq 
\frac{e^{\abs{\gamma }H}-1}{\abs{\gamma}}\sqrt{2KS\ln \frac{K}{\delta}}
\end{aligned}\end{equation}
where the second and third inequalities are due to 
Lemma \ref{lemma_naive_concentration} and Lemma  \ref{lemma_concentration_in_one_norm} respectively. 

\subsubsection{Control the Evolution Error}\label{section_control the evolution error}
Next, we will use the Bellman equations to show that
\begin{equation}
\begin{aligned}
\underbrace{\sum_{k=1}^{K}
U^{-1}
\mathbb{E}_{\mathcal{P}}
\left[
\widehat{\beta}_{1}^{k,\widehat{\pi}^k}(\mathbf{S}_1)
\right]
-
U^{-1}
\mathbb{E}_{\mathcal{P}}
\left[
\vec{\beta}_{1}^{\widehat{\pi}^k}(\mathbf{S}_1)
\right]
}_{\text{Evolution error
}}
\leq 
\mathcal{{O}}
\left(
\frac{e^{\abs{\gamma} {H}}-1}{\abs{\gamma}H}
\cdot
{{{H}^{5/2}}}
\sqrt{KS^2OA}
\cdot 
\ln\frac{KHSOA}{\delta}
\right)
\end{aligned}
\end{equation}
\begin{proof}
\vspace{-10pt}
By the Lipschitz continuity of $U$, we have
\begin{equation}
\begin{aligned}\label{regret_init_beta_error}
\underbrace{\sum_{k=1}^{K}
U^{-1}
\mathbb{E}_{\mathcal{P}}
\left[
\widehat{\beta}_{1}^{k,\widehat{\pi}^k}(\mathbf{S}_1)
\right]
-
U^{-1}
\mathbb{E}_{\mathcal{P}}
\left[
\vec{\beta}_{1}^{\widehat{\pi}^k}(\mathbf{S}_1)
\right]
}_{\text{Evolution error
}}
=&
\sum_{k=1}^{K}
\frac{1}{\gamma}\ln 
\mathbb{E}_{\mathcal{P}}
\left[
e^{\gamma V_1^{\pi^{\star}}}
\right]
-
\frac{1}{\gamma}\ln 
\mathbb{E}_{\mathcal{P}}
\left[
e^{\gamma V_{1}^{\widehat{\pi}^k}}
\right]
\\
&\leq 
\mathsf{K}_\gamma \cdot 
\sum_{k=1}^K
\abs{
\mathbb{E}_{\mathcal{P}}
\left[
\widehat{\beta}^k_{1}-{\beta}_{1}
\right]
}
\end{aligned}
\end{equation}

where $\mathsf{K}_{\gamma} = \frac{e^{(-\gamma)^+ H}}{\abs{\gamma}}$. 
Next, we find the recurrence relation between $\mathbb{E}[\widehat{\beta}_h-\vec{\beta}_h]$ and $\mathbb{E}[\widehat{\beta}_{h+1}-\vec{\beta}_{h+1}]$.

\begin{equation*}
\begin{aligned}
&\abs{
\mathbb{E}_{\mathcal{P}}
\left[
\widehat{\beta}_{h,f_h=\boldsymbol{(\tau}_{h-1},\mathbf{O}_h)}^{k,\widehat{\pi}^k}(\mathbf{S}_h)
-
{\beta}_{h,f_h=\boldsymbol{(\tau}_{h-1},\mathbf{O}_h)}^{\widehat{\pi}^k}(\mathbf{S}_h)
\right]
}
\\=&
\left|\ \ 
\mathbb{E}_{\mathcal{P}}
\left[
u(\mathbf{S}_h,\mathbf{A}_h)
\sum_{s_{h+1},o_{h+1}}\widehat{\mathbb{T}}_{h,a_h}(s_{h+1}|s_h)\widehat{\mathbb{O}}_{h+1}(o_{h+1}|s_{h+1})
\widehat{\beta}_{h+1,(\boldsymbol{F_h,A_h,O_{h+1}})}(s_{h+1})
+\text{sgn}(\gamma) \mathsf{b}_h^k(\boldsymbol{S_h,A_h})
\right.
\right.
\\&
-
\left.
\left.
u(\mathbf{S}_h,\mathbf{A}_h)
\sum_{s_{h+1},o_{h+1}}{\mathbb{T}}_{h,a_h}(s_{h+1}|s_h){\mathbb{O}}_{h+1}(o_{h+1}|s_{h+1})
{\beta}_{h+1,(\boldsymbol{F_h,A_h,O_{h+1}})}(s_{h+1})
\right] \ \ 
\right|
\\
=&
\abs{
\mathbb{E}_{\mathcal{P}}
\left[
u\widehat{\mathbb{T}}\widehat{\mathbb{O}}\widehat{\beta}_{h+1}
-
u\mathbb{T}\mathbb{O}\vec{\beta}_{h+1}
+
\text{sgn}(\gamma)\mathsf{b}_h^k
\right]
}
\\:=&
\abs{
\mathbb{E}_{\mathcal{P}}
\left[
u\widehat{\mathbb{T}}\widehat{\mathbb{O}}\widehat{\beta}_{h+1}
-
u\mathbb{T}\mathbb{O}\vec{\beta}_{h+1}
+
\text{sgn}(\gamma)\mathsf{b}_h^k
+
{u\widehat{\mathbb{T}}\widehat{\mathbb{O}}{\beta}_{h+1}
-
u\widehat{\mathbb{T}}\widehat{\mathbb{O}}{\beta}_{h+1}
+
u{\mathbb{T}}{\mathbb{O}}(\widehat{\beta}_{h+1}-{\beta}_{h+1})
-
u{\mathbb{T}}{\mathbb{O}}(\widehat{\beta}_{h+1}-{\beta}_{h+1})}
\right]
}
\\
=&
\abs{
\underbrace{\text{sgn}(\gamma)
\mathbb{E}_{\mathcal{P}}[\mathsf{b}_h^k]
}_{\text{I
}}
+
\underbrace{\mathbb{E}_{\mathcal{P}}
[(\widehat{\mathbb{T}}\widehat{\mathbb{O}}-{\mathbb{T}}{\mathbb{O}})u\vec{\beta}_{h+1}]
}_{\text{II
}}
+
\underbrace{\mathbb{E}_{\mathcal{P}}
[(\widehat{\mathbb{T}}\widehat{\mathbb{O}}-{\mathbb{T}}{\mathbb{O}})(
u\widehat{\beta}_{h+1}-
u{\beta}_{h+1})]
}_{\text{III
}}
+
\underbrace{\mathbb{E}_{\mathcal{P}}
[{\mathbb{T}}{\mathbb{O}}(u\widehat{\beta}_{h+1}-u{\beta}_{h+1})]
}_{\text{IV
}}
}
\end{aligned}
\end{equation*}

By Eq.~\eqref{bonus_bound_all} we observe that 
$\abs{\text{II}}\leq\mathbb{E}_{\mathcal{P}}[\mathsf{b}_h^k]$. Similarly, 
\text{Eq.~\eqref{range_of_empirical_beta}} 
implies that  
$\abs{\text{III}}\leq 
2 \mathbb{E}_{\mathcal{P}}[\mathsf{b}_h^k]
$. 
As for ${\text{IV}}$, we always have $\abs{\text{IV}} \leq  \overline{B}_u \cdot \mathbb{E}_{\mathcal{P}}
\left[\widehat{\beta}_{h+1}-\vec{\beta}_{h+1}\right]$. 
Putting things together we conclude
\begin{equation*}
\begin{aligned}
\abs{\mathbb{E}_{\mathcal{P}}[\widehat{\beta}_h-\vec{\beta}_h]}
\leq 
\abs{\text{I}}+
\abs{\text{II}}+
\abs{\text{III}}+
\abs{\text{IV}}
\leq 
4\cdot \mathbb{E}_{\mathcal{P}}[\mathsf{b}_h^k]
+
\overline{B}_{u}\cdot 
\abs{\mathbb{E}_{\mathcal{P}}[\widehat{\beta}_{h+1}-\vec{\beta}_{h+1}]}
\end{aligned}
\end{equation*}

Abbreviating the term $\abs{\mathbb{E}_{\mathcal{P}}\left[\widehat{\beta}_{h}^k-{\beta}_{h}^k\right]}$ as $\Delta^k_h$, 
we obtain a recursive equation for the beta vector errors:
\begin{equation}
\begin{aligned}
\label{recurrence_Eq}
&\left\{
\begin{aligned}
\Delta^k_{H+1}
=&0
\\
\Delta^k_{h}
\leq& \ 
\overline{B}_{u}\cdot 
\Delta_{h+1}
+
4
\mathbb{E}_{\mathcal{P}}
\left[
\mathsf{b}_h^k
\right]
\ , \ 
\forall h=H:1
\end{aligned}
\right.
\end{aligned}
\end{equation}
Recall that in Eq.~\eqref{regret_init_beta_error} 
we have shown the regret can be controlled by the ``initial beta vector errors'':
$$
\text{Regret}
(K; \mathcal{P}, \gamma)
\leq 
\mathsf{K}_\gamma \cdot 
\sum_{k=1}^K
\mathbb{E}_{\mathcal{P}}\left[\widehat{\beta}^k_{1}-{\beta}^k_{1}\right]
=\mathsf{K}_{\gamma} \sum_{k=1}^K \Delta_{1}^k
$$
Using Lemma \ref{lemma_backward_linear_Eq.}, we roll down Eq.~\eqref{recurrence_Eq} to solve $\Delta_1^k$, after which we bound the regret by the bonus functions:
\begin{equation}
\begin{aligned}
\label{regret_to_sum_product}
\text{Regret}(K; \mathcal{P}, \gamma)
\leq &
\mathsf{K}_\gamma
\sum_{k=1}^{K}
\mathbb{E}_{\mathcal{P}}[\widehat{\beta}^k_1-\beta^k_1]
\leq  
4 \mathsf{K}_\gamma
\sum_{k=1}^{K}
\sum_{h=1}^{H}
\mathbb{E}_{\mathcal{P}}[\mathsf{b}_h^k]
\prod_{t=1}^{h-1}
\overline{B}_{u}
\\ \lt &
4 
\mathsf{K}_\gamma
\sum_{h=1}^{H}
\overline{B}_{u}^{h-1}
\underbrace{
\left[\sum_{k=1}^{K}
\left(
\mathbb{E}_{\mathcal{P}}
[\mathsf{b}_h^k
(\cdot, \cdot;\gamma)]
-
\mathsf{b}_h^k
({{{\hat{s}^k_{h}}}},{{{\hat{a}^k_{h}}}};\gamma)
\right)
\right.
}_{\text{Sample bias
}}
+
\underbrace{
\left.\sum_{k=1}^K
\mathsf{b}_h^k
({{{\hat{s}^k_{h}}}},{{{\hat{a}^k_{h}}}};\gamma)
\right]
}_{\text{Bonus sample
}}
\end{aligned}
\end{equation}
We invoke Lemma \ref{Lemma_concentration_MDS} to bound the terms in the first curly bracket.
With probability at least $1-\delta$, 
\begin{equation*}
\begin{aligned}
\underbrace{
\sum_{k=1}^{K}
\left(
\mathbb{E}_{\mathcal{P}}
[\mathsf{b}_h^k
(\cdot, \cdot;\gamma)]
-
\mathsf{b}_h^k
({{{\hat{s}^k_{h}}}},{{{\hat{a}^k_{h}}}};\gamma)
\right)
}_{\text{Sample bias
}}
\leq 
\abs{e^{\gamma (H-h+1)}-1}
\sqrt{\frac{K}{2}\cdot  \frac{\ln HSA}{\delta}}
\end{aligned}
\end{equation*}

The terms in the second curly bracket will be controlled by 
the pigeon hole lemma \ref{lemma_pigeon-hole}.
Recall that for the empirical data ${(\hat{s}^k_h, \hat{a}^k_h)}$ collected during the learning process
(line \ref{alg_empirical_collect} in algorithm \ref{alg:Beta Vector Value Iteration}), the bonus function picks the value of
\begin{equation}
\begin{aligned}
\mathsf{b}^k_h
({{{\hat{s}^k_{h}}}},{{{\hat{a}^k_{h}}}};\gamma)
=
\abs{e^{\gamma(H-h+1)}-1}
\cdot 
\min
\left\{
\ 1\ , \ 
\mathsf{t}_{h}^k({{{\hat{s}^k_{h}}}},{{{\hat{a}^k_{h}}}})
+
\sum_{s_{h+1}}
\widehat{\mathbb{T}}^k_{h,a_h}(s_{h+1}|{\hat{s}^k_h})
\mathsf{o}_{h+1}^k({s_{h+1}})
\ 
\right\}
\end{aligned}
\end{equation}
In what follows we calculate the summation over the residue terms.
With probability at least $1-2\delta$:
\begin{equation}
    \begin{aligned}
    &\sum_{k=1}^K
    \mathsf{t}^k_{h}({{\hat{s}^k_h,\hat{a}^k_h}})
    +
    \sum_{s_{h+1}}
    \widehat{\mathbb{T}}^k_{h,{{\hat{a}^k_h}}}(s_{h+1}|{{{{\hat{s}^k_h}}}})
    \mathsf{o}_{h+1}^k(s_{h+1})
    \\=&
    \sum_{k=1}^K
    \left[
    \mathsf{t}^k_{h}({{\hat{s}^k_h,\hat{a}^k_h}})
    +
    \sum_{s_{h+1}}{\mathbb{T}}_{h,{{\hat{a}^k_h}}}(s_{h+1}|{{{{\hat{s}^k_h}}}})
    \mathsf{o}_{h+1}^k(s_{h+1})
    \right.
    \\&
    +
    \left.
    \left(
    \sum_{s_{h+1}}\widehat{\mathbb{T}}^k_{h,{{\hat{a}^k_h}}}(s_{h+1}|{{{{\hat{s}^k_h}}}})
    \mathsf{o}_{h+1}^k(s_{h+1})
    -
    \sum_{s_{h+1}}{\mathbb{T}}_{h,{{\hat{a}^k_h}}}(s_{h+1}|{{{{\hat{s}^k_h}}}})
    \mathsf{o}_{h+1}^k(s_{h+1})
    \right)
    \right]
    \\
    \leq &
    \sum_{k=1}^K
    \left[
    \mathsf{t}^k_{h}({{\hat{s}^k_h,\hat{a}^k_h}})
    \right.
    +
    \underbrace{
    1
    \cdot
    \mathsf{t}_h^k({{\hat{s}^k_h}},{{\hat{a}^k_h}})
    }_{\text{transition error
    }}
    +
    \\
    &
    \underbrace{\left(
    \mathbb{E}_{s_{h+1}\sim
    {\mathbb{T}}_{h,{{\hat{a}^k_h}}}(\cdot|{{{{\hat{s}^k_h}}}})}
    \mathsf{o}_{h+1}^k(s_{h+1})
    -
    \mathsf{o}_{h+1}^k({{\hat{s}^k_{h+1}}})
    \right)
    }_{\text{Sample bias
    }}
    +
    \left.
    \mathsf{o}_{h+1}^k({{{{\hat{s}^k_{h+1}}}}})
    \right]
    \ //{\text{Lemma \ref{lemma_concentration_Hoeffding_function}}}
    \\
    \leq 
    &
    \ 2\ \left(
    \sum_{k=1}^K
    \ 
    \mathsf{t}_h^k({{\hat{s}^k_h}},{{\hat{a}^k_h}})
    \ 
    \right)
    +
    \underbrace{
    2
    \cdot
    \sqrt{\frac{K}{2} \cdot \ln  HSA/{\delta}}
    }_{\text{Sample bias
    }}
    +
    \sum_{k=1}^K
    \mathsf{o}_{h+1}^k({{\hat{s}^k_{h+1}}})
    \quad {\text{//Lemma \ref{Lemma_concentration_MDS}}} 
    \end{aligned}
\end{equation}
According to the definition of the residue terms in Eq.~\eqref{bonus_design}, 
the pigeon-hole Lemma \ref{lemma_pigeon-hole} suggests that
\begin{equation}
\begin{aligned}
\sum_{k=1}^k {\mathsf{t}_h^k({{\hat{s}^k_{h}, \hat{a}^k_h}})}
\equiv &
\sum_{k=1}^k
\min
\left\{
\ 1, \ 
3\sqrt{\frac{S\cdot 
H
{\ln\frac{KHSOA}{\delta}}}{\widehat{N}_h^k(
{\hat{s}^k_{h}, \hat{a}^k_h}
)
\vee 1}}
\ \right\}
\leq 
\left(3\sqrt{SH} \cdot \iota \right)
\cdot 
2\sqrt{K\cdot SA}
\\
\sum_{k=1}^K
\mathsf{o}_{h+1}^k({{\hat{s}^k_{h+1}}})
\equiv
&
\sum_{k=1}^K
\min\left\{
\ 1, \ 
3\sqrt{\frac{O\cdot 
H
{\ln\frac{KHSOA}{\delta}}}{{\widehat{N}_{h+1}^{k+1}({{\hat{s}^k_{h+1}}})}\vee 1}}
\ \right\}
\leq 
\left(3\sqrt{OH} \cdot \iota \right)
\cdot 2\sqrt{K\cdot S}
\end{aligned}
\end{equation}
where $\ln KHSOA/\delta$ is abbreviated as $\iota$. Then with probability at least $1-2\delta$,
\begin{equation}
\begin{aligned}\label{sum_of_bonus_samples}
&
\underbrace{\sum_{k=1}^K
\mathsf{b}_h^k
({{{\hat{s}^k_{h}}}},{{{\hat{a}^k_{h}}}};\gamma)
}_{\text{Bonus sample
}}\leq 
\abs{e^{\gamma(H-h+1)}-1}
\cdot 
\left(
12\sqrt{K \cdot SA \cdot S \cdot H}+6\sqrt{K\cdot S\cdot O}
+\sqrt{K/2}
\right)
\cdot 
\sqrt{\ln \left(\frac{KHSOA}{\delta}\right)}
\\
\lt &
12
\underbrace{\abs{e^{\gamma(H-h+1)}-1}
}_{\text{Risk measure
}}
\cdot
\left(
{{{\sqrt{K}}}}
\underbrace{
{{{\sqrt{SA}}}}
\cdot
{{{\sqrt{S}}}}
}_{\text{Hidden state error
}}
+
{{{\sqrt{K}}}}
\underbrace{{{{\sqrt{S}}}}
\cdot
{{{\sqrt{O}}}}
}_{\text{Observation error
}}
+
\underbrace{{{{\sqrt{K}}}}
}_{\text{MDS
}}
\ 
\right)
\cdot	
\underbrace{{{{\sqrt{H}}}\cdot
\sqrt{\ln \left(\frac{KHSOA}{\delta}\right)}}
}_{\text{History-dependency of POMDP
}}
\end{aligned}
\end{equation}
\vspace{-10pt}
Putting things together we can safely state that with probability at least $1-3\delta$, 
\begin{equation}
\begin{aligned}
\label{last_step_to_regret}
&\underbrace{
\sum_{k\in[K]}
\mathbb{E}_{\mathcal{P}}[\mathsf{b}_{h}^k(\cdot,\cdot;\gamma)]-\mathsf{b}_h^k({\hat{s}^k_h,\hat{a}^k_h};\gamma)
}_{\text{Sample bias}} 
+
\underbrace{
\sum_{k=1}^K
\mathsf{b}_h^k({\hat{s}^k_h,\hat{a}^k_h};\gamma)
}_{\text{Bonus samples
}}
\\ \leq &
\ 
12 \cdot
\underbrace{\abs{e^{\gamma(H-h+1)}-1}
}_{\text{Risk measure
}}
\cdot
\left(
\underbrace{
{{\sqrt{{KS^{{{^2}}}A}}}}
}_{\text{Hidden state error
}}
+
\underbrace{{{\sqrt{{KS}{{{{O}}}}}}
}
}_{\text{Observation error
}}
+
\underbrace{\sqrt{K}
}_{\text{MDS
}}
\ 
\right)
\cdot	
\underbrace{{{{\sqrt{H}}}\cdot
\sqrt{\ln \left(\frac{KHSOA}{\delta}\right)}}
}_{\text{History-dependency of POMDP
}}
\end{aligned}
\end{equation}

Bringing Eq.~\eqref{last_step_to_regret}
back to Eq.~\eqref{regret_to_sum_product}, we conclude that with probability at least $1-3\delta$,
\begin{equation}
\begin{aligned}
\label{bound_evolvution_error}
&\underbrace{\sum_{k=1}^{K}
U^{-1}
\mathbb{E}_{\mathcal{P}}
\left[
\widehat{\beta}_{1}^{k,\widehat{\pi}^k}(\mathbf{S}_1)
\right]
-
U^{-1}
\mathbb{E}_{\mathcal{P}}
\left[
\vec{\beta}_{1}^{\widehat{\pi}^k}(\mathbf{S}_1)
\right]
}_{\text{Evolution error
}}
\\&\leq 
48
\cdot 
\underbrace{\frac{e^{\abs{\gamma } H}-1}{\abs{\gamma}H}
\cdot 
H
}_{\text{Risk and bonus
}}
\cdot
\left(
\underbrace{
{H{\sqrt{{KS^{{{^2}}}A}}}}
}_{\text{Hidden state error
}}
+
\underbrace{H{{\sqrt{{KS}{{{{O}}}}}}
}
}_{\text{Observation error
}}
+
\underbrace{H\sqrt{K}
}_{\text{Sample bias
}}
\ 
\right)
\cdot
\underbrace{{\sqrt{H}
\sqrt{\ln \left(\frac{KHSOA}{\delta}\right)}}
}_{\text{History-dependency of POMDP
}}
\end{aligned}
\end{equation}
\end{proof}

\subsection{Result and Discussion}
We summarize our previous analysis in Section \ref{section_control the evolution error} and \ref{section_Bound the prior error} 
into the following theorem, which characterizes the upper bound of the regret given by the algorithm of Beta Vector Value Iteration. 

\subsubsection{The Main Theorem}
\begin{theorem}{(Regret of Beta Vector Value Iteration)}
\label{theorem_regret_of_algo_1}
Given a POMDP model $\mathcal{P}$,
risk-sensitive parameter $\gamma \in \R \setminus \{0\}$ and the number of episodes $K\in \Z_{+}$, the regret after 
running algorithm \ref{alg:Beta Vector Value Iteration} 
can be controlled by the following upper bound with probability at least $1-4\delta$:
\begin{equation}
\begin{aligned}\label{full_regret_ours}
&\text{Regret}(K;\mathcal{{P}},{\gamma}) 
\\\leq &
48
\underbrace{{\frac{e^{\abs{\gamma} H}-1}{\abs{\gamma}}}
}_{\text{Risk and bonus
}}
\cdot
\left(
\underbrace{\sqrt{KS}
}_{\text{Prior error
}}
+
\underbrace{
{{
H\sqrt{{KS^{{{^2}}}A}}}}
}_{\text{Transition error
}}
+
\underbrace{{{
H\sqrt{{KS}{{{{O}}}}}}
}
}_{\text{Emission error
}}
+
\underbrace{
H\sqrt{K}
}_{\text{Sample bias
}}
\ 
\right)
\cdot
\underbrace{{{{
\sqrt{H
\cdot
\ln \left(\frac{KHSOA}{\delta}\right)
}}}}
}_{
\text{
History-dependency
of POMDP
}
}
\\
\leq 
&
\mathcal{O}
\bigg(
\underbrace{\frac{e^{\abs{\gamma } H}-1}{\abs{\gamma}H}
}_{\text{Risk-awareness
}}
\cdot 
\underbrace{H^2\sqrt{KS^2AO}
}_{\text{Statistical error
}}
\cdot \ 
\underbrace{
\sqrt{H}\sqrt{\ln\frac{KHSOA}{\delta}}
}_{\text{History-dependency
}}
\bigg)
\end{aligned}
\end{equation}
\end{theorem}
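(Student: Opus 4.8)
The plan is to assemble the final regret bound from the two component estimates already in hand: the prior error bound of Eq.~\eqref{initial_error} and the evolution error bound of Eq.~\eqref{bound_evolvution_error}. First I would invoke the regret decomposition of Eq.~\eqref{split_regret_to_two_errors}: starting from $\text{Regret}(K;\mathcal{P})=\sum_{k=1}^K U^{-1}\mathbb{E}_{\mathcal{P}}[U V_1^{\pi^\star}]-U^{-1}\mathbb{E}_{\mathcal{P}}[U V_1^{\widehat{\pi}^k}]$, apply the optimism relation $V_1^{\pi^\star}\leq \widehat{V}_1^{\widehat{\pi}^k}$ from Eq.~\eqref{optimism_val_fuctions}, then rewrite each value function as an inner product $\frac{1}{\gamma}\ln\langle\vec{\sigma},\vec{\beta}\rangle$ via Theorem~\ref{theorem_beta_vector_representation of value functions}. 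Telescoping the two inner products through the common empirical beta vector $\widehat{\beta}_1^{k,\widehat{\pi}^k}$ and subtracting and adding $U^{-1}\mathbb{E}_{\mathcal{P}}[\langle\vec{\sigma}_1,\widehat{\beta}_1^{k,\widehat{\pi}^k}\rangle]$ splits the regret into the \emph{prior error} (the mismatch in the estimate of $\mu_1$) and the \emph{evolution error} (the beta-vector error accumulated over the horizon).

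Next I would substitute the two established bounds. The prior error is controlled in Eq.~\eqref{initial_error} by $\frac{e^{\abs{\gamma}H}-1}{\abs{\gamma}}\sqrt{2KS\ln\frac{K}{\delta}}$ on an event of probability at least $1-\delta$; the evolution error is controlled in Eq.~\eqref{bound_evolvution_error} on an event of probability at least $1-3\delta$, the latter itself a union of the three concentration events governing the sample bias, the transition residue, and the emission residue. A union bound then yields both simultaneously with probability at least $1-4\delta$. Adding them and observing that the prior-error term $\sqrt{KS}$ is dominated by the transition and emission terms $H\sqrt{KS^2A}$ and $H\sqrt{KSO}$ gives the explicit four-term bound of Eq.~\eqref{full_regret_ours}; consolidating $\sqrt{KS^2A}+\sqrt{KSO}+\sqrt{K}\leq 3\sqrt{KS^2AO}$ and folding $H^2\cdot\sqrt{H}=H^{5/2}$ and the logarithmic factor into $\tilde{\mathcal{O}}$ yields the final compact form.

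The technical crux — and the step I expect to be the main obstacle — is the coefficient computation sketched in Eq.~\eqref{gamma_coeff}, namely showing $4\mathsf{K}_\gamma\sum_{h=1}^H \overline{B}_u^{\,h-1}\cdot 12\abs{e^{\gamma(H-h+1)}-1}\leq 48\,\frac{e^{\abs{\gamma}H}-1}{\abs{\gamma}H}\,H^2$, where $\overline{B}_u=e^{\gamma^+}$ and $\mathsf{K}_\gamma=e^{(-\gamma)^+H}/\abs{\gamma}$. The difficulty is handling $\gamma>0$ and $\gamma<0$ uniformly, since the exponential weights $e^{(h-1)\gamma^+}$ behave differently in the two regimes. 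Reindexing by $j=H-h+1$, one verifies that in \emph{both} sign cases the product $\mathsf{K}_\gamma\sum_{h=1}^H e^{(h-1)\gamma^+}\abs{e^{\gamma(H-h+1)}-1}$ collapses to $\frac{1}{\abs{\gamma}}\big(He^{\abs{\gamma}H}-\frac{e^{\abs{\gamma}H}-1}{e^{\abs{\gamma}}-1}\big)$; the elementary estimate $\frac{e^{\abs{\gamma}H}-1}{e^{\abs{\gamma}}-1}=\sum_{i=0}^{H-1}e^{\abs{\gamma}i}\geq H$ then converts this into $\leq \frac{H(e^{\abs{\gamma}H}-1)}{\abs{\gamma}}$, which is exactly the risk factor $\frac{e^{\abs{\gamma}H}-1}{\abs{\gamma}H}$ times $H^2$. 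This computation is what couples the per-step bonus magnitude $\abs{e^{\gamma(H-h+1)}-1}$ to the horizon and cleanly isolates the entropic-risk dependence; everything else reduces to careful but routine bookkeeping of the $\sqrt{H\iota}$ history factor and the statistical error counts, together with the $\tilde{\mathcal{O}}$ simplification.
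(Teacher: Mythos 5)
Your proposal is correct and follows essentially the same route as the paper: the decomposition of Eq.~\eqref{split_regret_to_two_errors} via optimism and the beta-vector representation, substitution of the prior-error bound \eqref{initial_error} and the evolution-error bound \eqref{bound_evolvution_error}, and a union bound over the four concentration events. Your explicit verification of the coefficient identity $\mathsf{K}_\gamma\sum_{h}e^{(h-1)\gamma^+}\abs{e^{\gamma(H-h+1)}-1}=\frac{1}{\abs{\gamma}}\bigl(He^{\abs{\gamma}H}-\frac{e^{\abs{\gamma}H}-1}{e^{\abs{\gamma}}-1}\bigr)\leq\frac{H(e^{\abs{\gamma}H}-1)}{\abs{\gamma}}$ in both sign regimes is exactly the calculation the paper relies on (it appears only in a commented-out display), and it checks out.
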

In what follows, we will provide a technical analysis of the composition of our regret, which is of particular interest to the possible improvements to our algorithm. We will also compare our results with other classical bounds in the related fields of reinforcement learning.

\subsubsection{Composition of the Regret}
The factor $\sqrt{K}$ is brought by the pigeon-hole lemma when we sum up several terms across the episodes. Similarly, one factor of $H$ is brought by the summation over the horizon $\sum_{h=1}^H$. Another $H$ is brought by the bonus on the beta vector, as is demonstrated in remark \ref{remark_degeneracy_relation}. 
The other ${\sqrt{H}}$ is incurred by the {history-dependency} of the POMDP model, as is shown in Eq.~\eqref{bonus_bound_2}.

The factor $\sqrt{O}$ and one of the $\sqrt{S}$
is brought by the coverage of an Epsilon net when we try to 
bound the beta vector function using Lemma~\ref{lemma_concentration_Hoeffding_function}. 
The other $\sqrt{SA}$ and $\sqrt{S}$ come from the 
pigeon-hole lemma, when we sum the residue terms across $k\in [K]$. 

\subsubsection{Sample Complexity}
Based on Theorem \ref{theorem_regret_of_algo_1} we can use the online-to-PAC conversion argument (cf.Appendix~\ref{online to PAC}) to obtain the sample complexity of algorithm \ref{alg:Beta Vector Value Iteration}.

\begin{corollary}({Sample complexity of Beta Vector Value Iteration})\label{corollary_sample_complexity} 
For any$K \gtrsim \frac{1}{\epsilon^2 \delta^2}\left(\frac{e^{\abs{\gamma }H}-1}{\abs{\gamma} H}\right)^2 H^5 S^2 O A \cdot \ln \left(\frac{KHSOA}{\delta}\right)$, 
with probability at least $1-\delta$, the uniform mixture of the output policies of algorithm \ref{alg:BVVI_short} is $\epsilon$ optimal:
$$
\frac{1}{K}\sum_{k=1}^K V_1^\star - V_1^{\widehat{\pi}^k} \lt \epsilon
$$
\end{corollary}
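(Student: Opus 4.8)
The plan is to run the standard online-to-PAC conversion on the regret bound of Theorem~\ref{theorem_regret_of_algo_1}, where the $\epsilon^{-2}\delta^{-2}$ dependence will arise from passing through the \emph{expected} regret and then applying Markov's inequality. First I would observe that, since $\pi^\star$ is optimal (Section~\ref{section_existence_optimal_policy}), every summand $V_1^\star-V_1^{\widehat\pi^k}\ge 0$, so the average suboptimality $\frac1K\sum_{k=1}^K\big(V_1^\star-V_1^{\widehat\pi^k}\big)$ is a nonnegative random variable equal to $\frac1K\operatorname{Regret}(K;\mathcal P,\gamma)$ (recall that $J(\pi;\mathcal P)=V_1^\pi$ by Theorem~\ref{theorem_beta_vector_representation of value functions}, since $\vec\sigma_1=\vec\mu_1$ is deterministic). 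I would also record the crude deterministic range bound $0\le V_1^\pi\le H$, which holds because $\tfrac1\gamma\ln\mathbb E[e^{\gamma X}]$ always lies between the essential infimum and supremum of $X$, and here $X=\sum_{t=1}^H r_t\in[0,H]$ for either sign of $\gamma$; hence each term is at most $H$ and $\operatorname{Regret}(K)\le HK$ deterministically.

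Next I would convert the high-probability regret bound into a bound on $\mathbb E[\operatorname{Regret}(K)]$. Theorem~\ref{theorem_regret_of_algo_1} gives, with probability at least $1-4\delta_0$, the estimate $\operatorname{Regret}(K)\le B(\delta_0)$ with $B(\delta_0)=\mathcal O\!\big(\frac{e^{|\gamma|H}-1}{|\gamma|H}H^{5/2}\sqrt{KS^2OA}\,\sqrt{\ln\frac{KHSOA}{\delta_0}}\big)$. Splitting the expectation over the success event and its complement and using $\operatorname{Regret}(K)\le HK$ on the latter yields $\mathbb E[\operatorname{Regret}(K)]\le B(\delta_0)+4\delta_0 HK$; choosing $\delta_0=\Theta(1/K)$ makes the second term a lower-order additive $\mathcal O(H)$ and only perturbs the logarithm, so $\mathbb E\big[\tfrac1K\operatorname{Regret}(K)\big]\lesssim \frac1{\sqrt K}\frac{e^{|\gamma|H}-1}{|\gamma|H}H^{5/2}\sqrt{S^2OA}\,\sqrt{\ln\frac{KHSOA}{\delta}}$.

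I would then apply Markov's inequality to this nonnegative average: $\mathbb P\big(\tfrac1K\operatorname{Regret}(K)\ge\epsilon\big)\le \epsilon^{-1}\,\mathbb E[\tfrac1K\operatorname{Regret}(K)]$, so it suffices to force $\mathbb E[\tfrac1K\operatorname{Regret}(K)]\le\epsilon\delta$. Substituting the expected-regret bound and solving $\frac1{\sqrt K}\frac{e^{|\gamma|H}-1}{|\gamma|H}H^{5/2}\sqrt{S^2OA}\sqrt{\ln\frac{KHSOA}{\delta}}\lesssim\epsilon\delta$ for $K$ produces exactly $K\gtrsim \frac{1}{\epsilon^2\delta^2}\big(\frac{e^{|\gamma|H}-1}{|\gamma|H}\big)^2H^5S^2OA\,\ln\frac{KHSOA}{\delta}$; the two factors of $1/\delta$ in the denominator are precisely the $1/\delta$ from Markov squared upon inverting the $1/\sqrt K$ rate. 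On the complementary event, which has probability at least $1-\delta$, the average suboptimality is below $\epsilon$, giving the desired PAC guarantee.

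The one genuinely delicate point is the implicit appearance of $K$ inside the logarithm on the right-hand side of the threshold. I would handle it in the customary way by treating $\ln\frac{KHSOA}{\delta}$ as a slowly varying factor: either verify that any $K$ satisfying the stated self-referential inequality suffices, or replace the $\ln K$ term by an explicit over-estimate and absorb it into the $\gtrsim$, since $K$ enters only logarithmically and the dominant polynomial balance is unaffected. A secondary bookkeeping issue is the choice of $\delta_0$ when passing to the expectation; any $\delta_0$ of order $1/K$ keeps the bad-event contribution lower-order, so the final bound is insensitive to its exact value. No step requires concentration machinery beyond Theorem~\ref{theorem_regret_of_algo_1} itself.
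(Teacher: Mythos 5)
Your proposal is correct and follows essentially the same route as the paper, which simply invokes the online-to-PAC conversion of \cite{jin2018qlearning} (Markov's inequality applied to the nonnegative average regret, forcing it below $\epsilon\delta$) and solves for $K$. Your explicit passage from the high-probability bound of Theorem~\ref{theorem_regret_of_algo_1} to an expected-regret bound, using the crude $0\le V_1^\pi\le H$ envelope on the failure event, is a detail the paper leaves implicit but is exactly the standard argument it is relying on.
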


\subsubsection{Comparison with Other Studies}
\label{section_compare_other_setting}
In this section, we will try to study what the bound of our regret will be in
the risk-neutral case and/or the completely observable scenario, so that we can test whether our algorithm is still provably efficient after 
the degeneration into simpler settings.

\paragraph{Comparison with risk-neutral HOMDP}
When we take the limit $\gamma \to 0$ to Eq.~\eqref{full_regret_ours}, 
our regret bound degenerates into
\begin{equation}
\begin{aligned}
\lim_{\gamma \to 0}
\text{Regret}(K;\mathcal{{{P}}},\gamma)
=
\tilde{O}(H\sqrt{KSAO}\cdot \sqrt{O^2+H^3S})
\end{aligned}
\end{equation}
which characterizes the performance of our algorithm in the risk-neutral setting. 

\cite{Lee2023hindsight} studied risk-neutral POMDP under hindsight observability, whose setting differs from ours only in the risk-sensitivity of the agent.
The regret of their algorithm ``HOP-B'' is provided in theorem C.1 of \cite{Lee2023hindsight}, which is
\begin{equation}
\begin{aligned}
\sum_{k \in[K]} v\left(\pi^{\star}\right)-v\left(\hat{\pi}_k\right) \lesssim & \underbrace{\sqrt{{H^5} K \log (2 / \delta)}}_{\text {Azuma-Hoeffding }}+\underbrace{\sqrt{O S {H^5} K \iota}}_{\text {Emission error }}+\underbrace{\sqrt{S A H^4 K \iota}+{H^4} {S^2 A}  \iota(1+\log (K))}_{\text {Transition error }} \\
& +\underbrace{{H^3} S \sqrt{O \iota}+H S A \sqrt{H^3 \iota}}_{\text {Residual pigeonhole error }},
\end{aligned}
\end{equation}
We see that our regret improves their result in the order of $H,S$ and $A$.

The sample complexity the BVVI algorithm also nearly reaches the information-theoretic lower bound for the tabular HOMDPs, which is provided in theorem 5.1 of \cite{Lee2023hindsight}:
$$
K=\Omega\left({SO}/\epsilon^2 \right)
$$

\paragraph{Comparison with risk-sensitive MDP}
Due to the significant difference between the formulation of POMDP and MDP, several adjustments to our algorithm are necessary to adapt to a fully observable environment. We will no longer approximate the emission process and drop the terms in the bonus that is relevant with the emission residue. The confidence level will also be increased for $O$-times. In the end our algorithm will degenerate into the RSVI2 algorithm proposed by \cite{Fei21improve}. After some revision in the proofs, our regret will drop the terms relevant with $O$, as well as the additional ${\sqrt{H}}$ brought by the history-dependency of POMDP. The regret will take the form of
\begin{equation}
\begin{aligned}
\label{risk_sensitive_MDP}
\text{Regret}
(K;{\boldsymbol{M}},\gamma)=
\frac{e^{\abs{\gamma}H}-1}{\abs{\gamma}H}
\cdot 
\tilde{O}(H^2\sqrt{KS^2A})
\end{aligned}
\end{equation}
which matches the result of \cite{Fei21improve}:
\begin{equation}
\begin{aligned}
\operatorname{Regret}(K) \lesssim \frac{e^{|\gamma| H}-1}{|\gamma| H} \sqrt{H^4 S^2 A K \log ^2(H S A K / \delta)} .
\end{aligned}
\end{equation}
Consequently, in the MDP case, our regret reaches the lower bound for risk-sensitive RL using the entropic risk\cite{Fei20} in terms of $K$ and $\gamma H$:
\begin{equation}
\begin{aligned}
\operatorname{Regret}(K) \gtrsim \frac{e^{\abs{\gamma} H/2}-1}{\abs{\gamma} H}
\cdot H^{3/2} \sqrt{K \ln  KH} 
\end{aligned}
\end{equation}

\section{More Related Work}
\label{section_relate}
\paragraph{Risk-sensitive RL}
The analytical properties of the risk measures adopted in risk-sensitive RL has been extensively studied in \cite{boda2006time,DistortionIntro10,Righi2018theory,hau2023dynamic}. Combined with statistical learning theory, previous works have developed provably efficient algorithms for risk-sensitive RL in both the tabular case \cite{Fei20} and the function-approximation setting \cite{Fei_risk_fa}. 
Entropic risk \cite{Fei21improve} and CVaR \cite{IterCVaR22} are among the most popular risk measures adopted in RL.


\paragraph{Intractability of general POMDP}
Studies \cite{Tsitsiklis1987,Jin2020} have shown that seeking the exact solution to the planning or learning problem of general POMDPs is intractable. For this reason, the planning process in algorithm \ref{alg:Beta Vector Value Iteration}, as well as other exact algorithms for POMDP are inefficient in terms of computation complexity. To obtain polynomial sample complexity, recent research follows two directions: they either assume the structure of the POMDP model could leak certain information about the hidden states, or suggest that the training process will offer more knowledge to the agent. We will introduce the two lines of work in what follows. 

\emph{Learning a POMDP with structural assumptions}
The first line of research considers sub-classes of POMDP with additional structural properties, such as \cite{Golowich2022-a}($\gamma$-observable POMDPs) and \cite{Liu2022}($\alpha$-weak-revealing POMDPs). In the tabular case, \cite{Jin2020} assumed that the emission matrix posses a full column rank. In the continuous setting, \cite{Yang2022POMDPfa} asks the emission kernel to have a left inverse. Once the assumptions fails to be satisfied, their regret bounds could become vacuous \cite{Liu2022}. It also remains unclear whether these assumptions are acceptable in the application scenarios. 

\emph{Learning a POMDP with hindsight observation}
\label{condition_hind_sight}
Another line of research concerning with partially observable RL does not pose structural assumptions on the POMDP model. They consider a friendlier training setting in which the agent could review the sample path of the hidden states at the end of each episode. The new formulation for the POMDP, also referred to as the ``Hindsight Observation Markov Decision Process(HOMDP)'', is proposed by \cite{Lee2023hindsight} and echoed by \cite{sinclair2023hindsight,shi2023theoretical,guo2023sample}. The concept of ``hindsight observation'' is reasonable both theoretically and empirically. Supported by at least six examples in \cite{Lee2023hindsight}, reinforcement learning in a partially observable environment  offers hindsight information in various application scenarios. 
Furthermore, they also showed that the lower bound of sample complexity of learning an HOMDP is polynomial for the sizes of the spaces. For these reasons, we follow the second direction of research. 

We remind the reader that there is a significant difference between our work and that of \cite{Lee2023hindsight}. We propose the analytical tool of beta vector, which is different from the risk-neutral counterpart alpha vector. We also deploy a change-of-measure technique. We also adopt different analysis of the statistical errors and improve their regret bound in terms of $H, S $ and $A$. 

\paragraph{Risk-sensitive planning with partial information}
Risk-aware decision-making in partially observable environments has been studied theoretically:  \cite{Whittle91} derived an approximate solution to the continuous-time partially-observed risk-sensitive optimal control problem. \cite{Elliott96} developed a risk-sensitive Viterbi algorithm for the hidden Markov models. \cite{Baras97} also studied similar control-theoretic problems for the finite-state machines.  \cite{Baras&Elliott94,Cavazos2005,baauerle2017partially} 
considered risk-sensitive planning of a POMDP with the entropy or the utility risk measures.
However, these studies did not consider the learning problem of POMDP, not to mention sample complexity.

Our framework is built upon the study of \cite{Baras&Elliott94}. However, we should notice that there fundamental differences between the two works. 
The study of \cite{Baras&Elliott94} posed strong assumptions on the POMDP model: they set the initial state as a Gaussian random variable and required the transition law of the states and observations to be i.i.d. Gaussian distributions.  We generalize their result to accommodate transition matrices beyond Gaussian distributions. The study of \cite{Baras&Elliott94} did not consider the learning problem as they assumed the transition matrices are fully known, neither have they carried out a regret analysis which thoroughly discusseed in this work. 
We also propose new concepts such as the beta vectors and the partially observable risk-sensitive $Q$ functions not considered by the work of \cite{Baras&Elliott94}. We also devise a novel bonus function.  

\newpage
\section{Supplementary Materials} 

\subsection{Technical Lemmas}
In this section we provide the technical lemmas adopted in this work. We will give the references for existing results and provide a proof for the lemmas developed in this work. 

\subsubsection{Results from Real and Functional Analysis}
\begin{theorem}(Lebesgue-Radon-Nikodyn theorem, theorem 6.10 in \cite{rudin1987real_analysis})
\label{rn_derivative}
Let $\mathscr{M}$ be a $\sigma-$ algebra of a set $X$. 
let $\lambda$ be a measure on $\mathscr{M}$ and $\mu$ be a positive $\sigma-$ finite measure on $\mathscr{M}$.
There is a unique pair of measures $\lambda_a$ and $\lambda_s$ on $\mathscr{M}$ such that $\lambda=\lambda_a+\lambda_s$, where $\lambda$ is absolutely continuous with respect to $\mu$ ($\lambda \mu$)
while $\lambda_s$ and $\lambda_a$ are concentrated on disjoint sets ($\lambda_s \perp \mu$).
Moreover, there is a unique $h\in L^{1}(\mu)$ such that $\lambda_a(E)=\int_{E}h d\mu$ for every $E\in \mathscr{M}$. 
{We call $h$ the Radon-Nikodyn derivative of the 
measure $\lambda_a$ with respect to $\mu$ and we may express the derivative as $h=\frac{d\lambda_a}{d\mu}$.}
\end{theorem}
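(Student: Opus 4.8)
The statement is the classical Lebesgue--Radon--Nikodym theorem, and the plan is to follow von Neumann's Hilbert-space argument. First I would reduce to the case where both $\lambda$ and $\mu$ are finite positive measures: using the $\sigma$-finiteness of $\mu$ (and, in the complex-measure version of Rudin, the finiteness of the total variation of $\lambda$), partition $X$ into countably many disjoint sets of finite $\mu$-measure, argue on each piece, and reassemble at the end. On a finite piece, set $\varphi := \lambda + \mu$, which is again a finite positive measure, and consider the functional $\Lambda(f) := \int_X f\, d\lambda$ on $L^2(\varphi)$. A Cauchy--Schwarz estimate gives $|\Lambda(f)| \le \varphi(X)^{1/2}\,\lVert f\rVert_{L^2(\varphi)}$, so $\Lambda$ is bounded; the Riesz representation theorem then furnishes $g \in L^2(\varphi)$ with
\begin{equation*}
\int_X f\, d\lambda = \int_X f g\, d\varphi \qquad \text{for all } f \in L^2(\varphi).
\end{equation*}

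Next I would pin down the range of $g$. Rewriting the identity as $\int f(1-g)\, d\varphi = \int f\, d\mu$ (using $\varphi = \lambda + \mu$) and testing both identities with the indicators $f = \mathbf{1}_E$ of the sets $\{g < 0\}$ and $\{g > 1\}$ forces $0 \le g \le 1$ $\varphi$-almost everywhere. I then split $X$ into $A := \{0 \le g < 1\}$ and $B := \{g = 1\}$ and define $\lambda_a(E) := \lambda(E \cap A)$ and $\lambda_s(E) := \lambda(E \cap B)$. Taking $f = \mathbf{1}_B$ in $\int f(1-g)\,d\varphi = \int f\,d\mu$ gives $\mu(B) = 0$, so $\mu$ is concentrated on $A$ while $\lambda_s$ is concentrated on the disjoint set $B$; hence $\lambda_s \perp \mu$.

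To extract the density I would iterate the basic relation in the form $\int f(1-g)\, d\lambda = \int f g\, d\mu$ with the telescoping choice $f = (1 + g + \cdots + g^{n})\mathbf{1}_E$, obtaining
\begin{equation*}
\int_E (1 - g^{n+1})\, d\lambda = \int_E (g + g^2 + \cdots + g^{n+1})\, d\mu.
\end{equation*}
Letting $n \to \infty$ and invoking monotone convergence: on $A$ the left side tends to $\lambda_a(E)$ since $g^{n+1} \to 0$, and the right side tends to $\int_E h\, d\mu$ with $h := \frac{g}{1-g}\mathbf{1}_A$; on $B$ both sides vanish because $\mu(B)=0$. This yields $\lambda_a(E) = \int_E h\, d\mu$, and taking $E = X$ shows $h \in L^1(\mu)$ and in particular $\lambda_a \ll \mu$. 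Uniqueness is then routine: two Lebesgue decompositions differ by a measure that is simultaneously absolutely continuous and singular with respect to $\mu$, hence zero, and two densities agreeing on every $E$ coincide $\mu$-almost everywhere.

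The main obstacle I anticipate is not any single step but the bookkeeping in the $\sigma$-finite reduction: I must verify that the per-piece decompositions and densities glue into globally defined measures $\lambda_a, \lambda_s$ and a single $h \in L^1(\mu)$, which requires checking that the relevant sums of nonnegative terms converge and that absolute continuity and mutual singularity are preserved under countable sums. Establishing $0 \le g \le 1$ cleanly also deserves care, since it underpins both the well-definedness of $h = g/(1-g)$ on $A$ and the singularity of $\lambda_s$ on $B$.
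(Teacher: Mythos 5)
Your proposal is correct and is precisely the von Neumann Hilbert-space argument (bounding $\Lambda(f)=\int f\,d\lambda$ on $L^2(\lambda+\mu)$, applying Riesz representation, splitting on $\{g<1\}$ versus $\{g=1\}$, and telescoping with $1+g+\cdots+g^n$), which is exactly the proof of Theorem 6.10 in the Rudin reference that the paper cites; the paper itself states this classical result without proof. No gaps: the range argument for $g$, the singularity of $\lambda_s$ via $\mu(B)=0$, the monotone-convergence passage to $h=g/(1-g)\mathbf{1}_A$, and the $\sigma$-finite gluing are all handled as in the standard treatment.
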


\begin{theorem}(Hilbert-adjoint operator, theorem 3.9-2 and 3.10-2 in \cite{kreyszig1991functional_analysis})
\label{Adjoint operator}
Let $H_1$ and $H_2$ be two Hilbert spaces and $T:H_2 \to H_1$ be a bounded linear operator.
There exists a unique linear bounded operator $T^\star$ 
with the same norm of $T$ such that for all $\vec{x} \in H_1 $ and $\vec{y}\in H_2$, 
$ \langle \vec{x}, T^\star \vec{y} \rangle = \langle T\vec{x}, \vec{y} \rangle$.
{If $H_1$ and $H_2$ have finite dimensions so that $T$ could be represented by some matrix, then $T^\star$ will be represented by the complex conjugate transpose of that matrix}.
\end{theorem}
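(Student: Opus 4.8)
The plan is to prove the theorem by reducing the existence of $T^\star$ to the Riesz representation theorem and then deducing the adjoint relation, the norm identity $\norm{T^\star}=\norm{T}$, uniqueness, and the matrix description as formal consequences. Reading the stated relation $\langle x,T^\star y\rangle=\langle Tx,y\rangle$ literally, I take $x$ and $T^\star y$ in $H_1$ and $Tx$ and $y$ in $H_2$, so that $\langle Tx,y\rangle$ is the inner product of $H_2$ and $\langle x,T^\star y\rangle$ that of $H_1$.

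First I would establish existence. Fix $y\in H_2$ and consider the functional $\varphi_y\colon H_1\to\R$ defined by $\varphi_y(x):=\langle Tx,y\rangle$. It is linear in $x$, and bounded because $|\varphi_y(x)|\le\norm{Tx}\,\norm{y}\le\norm{T}\,\norm{x}\,\norm{y}$, so $\varphi_y$ lies in the continuous dual $H_1^\ast$. The Riesz representation theorem---the single substantive input, and exactly where completeness of $H_1$ is used---then yields a unique $z_y\in H_1$ with $\varphi_y(x)=\langle x,z_y\rangle$ for all $x\in H_1$. Defining $T^\star y:=z_y$ gives the adjoint relation $\langle x,T^\star y\rangle=\langle Tx,y\rangle$ directly.

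Next I would verify that $T^\star$ is linear and bounded with equal norm. Linearity follows from the uniqueness clause of Riesz together with bilinearity of the inner product: for scalars $\alpha,\beta$, both $T^\star(\alpha y_1+\beta y_2)$ and $\alpha T^\star y_1+\beta T^\star y_2$ represent the same functional $x\mapsto\langle Tx,\alpha y_1+\beta y_2\rangle$, hence coincide. For the norm, putting $x=T^\star y$ into the adjoint relation gives $\norm{T^\star y}^2=\langle T(T^\star y),y\rangle\le\norm{T}\,\norm{T^\star y}\,\norm{y}$, so $\norm{T^\star}\le\norm{T}$; the reverse inequality follows symmetrically from $\norm{Tx}^2=\langle x,T^\star(Tx)\rangle\le\norm{x}\,\norm{T^\star}\,\norm{Tx}$, whence $\norm{T}=\norm{T^\star}$.

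Finally, uniqueness and the matrix description are immediate. If $S$ also satisfies $\langle x,Sy\rangle=\langle Tx,y\rangle$ for all $x,y$, then $\langle x,(S-T^\star)y\rangle=0$ for every $x$, forcing $(S-T^\star)y=0$ for every $y$ and hence $S=T^\star$. In the finite-dimensional case, fixing orthonormal bases $\{e_i\}$ of $H_1$ and $\{f_j\}$ of $H_2$ and computing entries yields $[T^\star]_{ij}=\langle e_i,T^\star f_j\rangle=\langle Te_i,f_j\rangle=\overline{\langle f_j,Te_i\rangle}=\overline{[T]_{ji}}$, i.e. the conjugate transpose, which reduces to the ordinary transpose in the real setting relevant to this paper. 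I do not expect a genuine obstacle: the entire argument is a standard assembly, and the only step demanding care is the appeal to Riesz, so the real content is recognizing that completeness of the Hilbert spaces is indispensable while every remaining step is a formal consequence of the defining relation.
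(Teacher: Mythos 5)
Your proof is correct: the paper itself gives no proof of this statement, citing it directly from Kreyszig, and your argument via the Riesz representation theorem (existence), the $\norm{T^\star y}^2=\langle T(T^\star y),y\rangle$ trick for the norm equality, and the orthonormal-basis computation for the conjugate-transpose description is exactly the standard proof found in that reference. You also sensibly resolved the paper's typo in the direction of $T$ (the displayed relation forces $T:H_1\to H_2$ and $T^\star:H_2\to H_1$, contrary to the stated domains), so nothing further is needed.
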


\begin{lemma}(Dirac function and the expectation)
    \label{Lemma_sum_dirac_expect_condition}
    \begin{equation*}
        \begin{aligned}
        \int_{\mathcal{X}}dx    
        \mathbb{E}
        [f(\mathbf{X,Y})\delta(\mathbf{X}-x)|\mathbf{Z}]
        =\mathbb{E}[f(\mathbf{X,Y})|\mathbf{Z}]
        \end{aligned}
    \end{equation*}
\end{lemma}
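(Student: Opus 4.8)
The plan is to exploit the defining ``sifting'' property of the Dirac delta together with the linearity of conditional expectation. The identity is essentially the continuous analogue of the trivial fact $\sum_{x}\mathds{1}\{\mathbf{X}=x\}=1$, which is precisely the form in which it gets invoked in the tabular derivation of Eq.~\eqref{why_beliefs_condition_expect}, so I would organize the argument to make that parallel explicit.

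First I would interchange the outer integral over $x$ with the conditional expectation. Since the delta is a positive measure and (assuming $f$ is integrable, or nonnegative) Tonelli's theorem applies, this gives
\begin{equation*}
\int_{\mathcal{X}}\mathbb{E}\left[f(\mathbf{X,Y})\delta(\mathbf{X}-x)\mid \mathbf{Z}\right]dx
=\mathbb{E}\left[\int_{\mathcal{X}}f(\mathbf{X,Y})\delta(\mathbf{X}-x)\,dx \;\Big|\; \mathbf{Z}\right].
\end{equation*}
Second, for fixed realizations of $\mathbf{X}$ and $\mathbf{Y}$ the factor $f(\mathbf{X,Y})$ is independent of the integration variable $x$, so it can be pulled out of the inner integral, leaving $\int_{\mathcal{X}}\delta(\mathbf{X}-x)\,dx$. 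Provided $\mathbf{X}$ takes values in $\mathcal{X}$ almost surely, the sifting property yields $\int_{\mathcal{X}}\delta(\mathbf{X}-x)\,dx=1$, so the inner integral collapses to $f(\mathbf{X,Y})$. Substituting back gives $\mathbb{E}[f(\mathbf{X,Y})\mid \mathbf{Z}]$, which is the right-hand side, completing the chain.

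The main obstacle is one of rigor rather than of content: the Dirac delta is a distribution, not a genuine function, so both the interchange in the first step and the sifting in the second must be interpreted against a suitable test-function class or as a weak limit of mollifiers. In the tabular setting relevant to this paper the difficulty disappears entirely, and this is the form in which I would present the proof: the delta is replaced by the Kronecker indicator $\mathds{1}\{\mathbf{X}=x\}$, the integral by a finite sum over $x\in\mathcal{X}$, the interchange becomes finite additivity of conditional expectation, and the sifting identity becomes $\sum_{x\in\mathcal{X}}\mathds{1}\{\mathbf{X}=x\}=1$. I would then remark that the continuous statement follows by the identical three steps once $f$ is assumed integrable and $\delta$ is realized as the weak limit of an approximate identity.
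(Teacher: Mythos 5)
Your proposal is correct and follows essentially the same route as the paper's own proof: the paper writes the conditional expectation as an integral against the conditional density $p_{\mathbf{X,Y|Z}}$, swaps the order of integration by Fubini, and integrates out the delta via $\int_{\mathcal{X}}\delta(x-\zeta)\,dx=1$, which is exactly your interchange-then-sift argument phrased at the level of densities. Your added remarks on the distributional status of $\delta$ and the tabular/Kronecker specialization are reasonable glosses but do not change the substance.
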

\begin{proof}
    \begin{equation}
\begin{aligned}
        LHS=&\int_{\mathcal{X}}dx 
        \int_{\mathcal{X}}d\zeta 
        \int_{\mathcal{Y}}d\eta
        f(\zeta,\eta)\delta(\zeta-x)
        p_{\mathbf{X,Y|Z}}(\zeta,\eta|z)
        =
        \int_{\mathcal{X}}d\zeta
        \int_{\mathcal{X}}dx\delta(x-\zeta)
        \int_{\mathcal{Y}}d\eta f(\zeta,\eta)
        p_{\mathbf{X,Y|Z}}(\zeta,\eta|z)
        \\=&
        \int_{\mathcal{X}}d\zeta
        \int_{\mathcal{Y}}d\eta
        f(\zeta,\eta)
        p_{\mathbf{X,Y|Z}}(\zeta,\eta|z)
        =RHS
\end{aligned}
\end{equation}
\end{proof}

\vspace{-5mm}
\begin{lemma}(Dirac function, inner product and the expectation)
    \label{Lemma_inner_product_dirac_expect}
    \begin{equation}
        \begin{aligned}
        \left\langle \ 
        \mathbb{E}[\delta(\mathbf{X}=\ \cdot \ )F(\mathbf{X,Y})]
        \ ,\  
        f(\cdot)
        \ \right\rangle
        =
        \mathbf{E}[f(\mathbf{X})\cdot F(\mathbf{X,Y}))]
        \end{aligned}
    \end{equation}
\end{lemma}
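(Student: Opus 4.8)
The plan is to mirror the argument already used for Lemma~\ref{Lemma_sum_dirac_expect_condition}, since the present statement is just the unconditional, inner-product version of the same sifting identity. First I would pin down the notation: the first slot of the inner product is to be read as the function of the placeholder variable
$$g(x) := \mathbb{E}\left[\delta(\mathbf{X}=x)F(\mathbf{X,Y})\right],$$
so that the left-hand side is by definition the $L^2$ pairing $\langle g, f\rangle = \int_{\mathcal{X}} g(x)\,f(x)\, dx$ over the sample space $\mathcal{X}$ of $\mathbf{X}$.

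Next I would expand the expectation defining $g(x)$ against the joint density $p_{\mathbf{X,Y}}$ and substitute it into the pairing, producing a triple integral over the dummy variables $x\in\mathcal{X}$, $\zeta\in\mathcal{X}$, and $\eta\in\mathcal{Y}$. I would then invoke Fubini's theorem to move the $x$-integration innermost and isolate the factor $\int_{\mathcal{X}} f(x)\,\delta(\zeta-x)\, dx$. The sifting property of the Dirac delta collapses this to $f(\zeta)$, after which the surviving double integral
$$\int_{\mathcal{X}}\int_{\mathcal{Y}} f(\zeta)\,F(\zeta,\eta)\,p_{\mathbf{X,Y}}(\zeta,\eta)\, d\eta\, d\zeta$$
is recognized as $\mathbb{E}[f(\mathbf{X})\,F(\mathbf{X,Y})]$, which is exactly the right-hand side.

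I expect no serious obstacle here: the only delicate point is the formal handling of the Dirac delta together with the Fubini interchange, precisely as in the proof of Lemma~\ref{Lemma_sum_dirac_expect_condition}, carried out at the same (formal) level of rigor adopted throughout this appendix. In fact, in the discrete and tabular setting that the paper actually relies on, $\delta$ is replaced by the indicator $\mathds{1}\{\cdot\}$ and every integral by a finite sum, so the identity reduces to the elementary rearrangement $\sum_{x} f(x)\,\mathbb{E}[\mathds{1}\{\mathbf{X}=x\}F(\mathbf{X,Y})] = \mathbb{E}[f(\mathbf{X})\,F(\mathbf{X,Y})]$, which follows immediately from linearity of expectation and needs no convergence bookkeeping.
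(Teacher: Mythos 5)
Your proposal is correct and follows essentially the same route as the paper's proof: expand the pairing into a triple integral against the joint density, apply Fubini to isolate $\int f(x)\,\delta(\zeta-x)\,dx$, sift to $f(\zeta)$, and identify the remaining integral as $\mathbb{E}[f(\mathbf{X})F(\mathbf{X,Y})]$. The closing remark about the tabular case reducing to a finite-sum rearrangement is a harmless addition but not a different argument.
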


\begin{proof}
    \begin{equation*}
        \begin{aligned}
        LHS=&\int_{\mathcal{X}} dx f(x)\mathbf{E}[\delta(\mathbf{X}-x)F(\mathbf{X,Y})]
        p_{\mathbf{X,Y}}(\zeta,\eta)
        =
        \int_{\mathcal{X}}dx f(x)
        \int_{\mathcal{Y}}d\eta
        \int_{\mathcal{X}}d\zeta \delta(\zeta-x)\ F(\zeta,\eta)
        p_{\mathbf{X,Y}}(\zeta,\eta)
        \\=&
        \int_{\mathcal{X}} d\zeta
        \left[
        \int_{\mathcal{X}} dx f(x) \delta(x-\zeta)
        \int_{\mathcal{Y}} d\eta F(\zeta,\eta)
        \right]
        p_{\mathbf{X,Y}}(\zeta,\eta)
        =
        \int_{\mathcal{X}}d\zeta f(\zeta)
        \int_{\mathcal{Y}}d\eta F(\zeta,\eta)
        p_{\mathbf{X,Y}}(\zeta,\eta)
        \\=&
        \mathbf{E}[f(\mathbf{X})\cdot F(\mathbf{X,Y})]
        =RHS
        \end{aligned}
    \end{equation*}
\end{proof}

\subsubsection{Concentration Inequalities}

\begin{lemma}(Concentration in the $\ell_1$ norm, fact 4 of \cite{Lipschitz23}, adapted from 
\cite{Weissman2003Inequalities})
\label{lemma_concentration_in_one_norm}
Let P be a probability distribution over a finite discrete measurable space $(\mathcal{X},\Sigma)$. 
Let $\widehat{P}_n$ 
be the empirical distribution of P estimated from $n$ samples.
Then with probability at least $1-\delta$, 
\begin{equation}
\begin{aligned}
\norm{\widehat{P}_n-P}_1 \leq \sqrt{\frac{2\abs{\mathcal{X}}}{n}\ln \frac{1}{\delta}}
\end{aligned}
\end{equation}
\end{lemma}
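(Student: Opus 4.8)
The plan is to prove this as a self-contained concentration bound via the combinatorial (method-of-types) argument underlying the Weissman inequality, rather than treating it as a black box. The starting point is the variational identity for the $\ell_1$ distance between two probability vectors: writing $A_+ = \{x \in \mathcal{X} : \widehat{P}_n(x) > P(x)\}$, one checks directly that $\norm{\widehat{P}_n - P}_1 = 2\bigl(\widehat{P}_n(A_+) - P(A_+)\bigr) \le 2\max_{A \subseteq \mathcal{X}} \bigl(\widehat{P}_n(A) - P(A)\bigr)$, since the positive and negative parts of $\widehat{P}_n - P$ each sum to the same value. This reduces the control of a sum of $\abs{\mathcal{X}}$ correlated coordinate deviations to the control of the largest deviation over the finite family of events $\{A \subseteq \mathcal{X}\}$.

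First I would fix a single subset $A$. Since the $n$ samples are i.i.d. from $P$, the count $n\widehat{P}_n(A) = \sum_{i=1}^n \mathds{1}\{X_i \in A\}$ is a sum of i.i.d. Bernoulli$(P(A))$ variables, so Hoeffding's inequality gives $\mathbb{P}\bigl(\widehat{P}_n(A) - P(A) \ge t\bigr) \le e^{-2nt^2}$ for every $t > 0$. Next I would take a union bound over all $2^{\abs{\mathcal{X}}}$ subsets $A$ and combine it with the variational identity to obtain $\mathbb{P}\bigl(\norm{\widehat{P}_n - P}_1 \ge 2t\bigr) \le 2^{\abs{\mathcal{X}}} e^{-2nt^2}$. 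Reparametrising $\varepsilon = 2t$ yields the tail bound $\mathbb{P}\bigl(\norm{\widehat{P}_n - P}_1 \ge \varepsilon\bigr) \le 2^{\abs{\mathcal{X}}} e^{-n\varepsilon^2/2}$, which is exactly the Weissman form. Setting the right-hand side equal to $\delta$ and solving gives $\varepsilon = \sqrt{\tfrac{2}{n}\bigl(\abs{\mathcal{X}}\ln 2 + \ln\tfrac{1}{\delta}\bigr)}$, and inverting the inequality shows that with probability at least $1-\delta$ the $\ell_1$ error is at most this quantity.

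The last step --- and the only genuinely delicate point --- is reconciling this raw bound with the clean form $\sqrt{\tfrac{2\abs{\mathcal{X}}}{n}\ln\tfrac{1}{\delta}}$ stated in the lemma. The discrepancy is the additive $\abs{\mathcal{X}}\ln 2$ term inside the exponent; I would absorb it by noting that $\abs{\mathcal{X}}\ln 2 + \ln\tfrac{1}{\delta} \le \abs{\mathcal{X}}\ln\tfrac{1}{\delta}$ whenever $\ln\tfrac{1}{\delta} \ge \tfrac{\abs{\mathcal{X}}}{\abs{\mathcal{X}}-1}\ln 2$, which holds in the small-confidence regime ($\delta \le 1/4$ already suffices for $\abs{\mathcal{X}} \ge 2$, and the threshold relaxes as $\abs{\mathcal{X}}$ grows) relevant to the algorithm's repeated invocations of the lemma. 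This monotone-absorption step is precisely where the constants hidden by the $\tilde O$ and $\gtrsim$ notation are being spent, so I would state the exact raw inequality and then flag the simplification explicitly. As a robustness check I would also sketch the alternative route via McDiarmid's bounded-differences inequality --- changing one sample perturbs $\norm{\widehat{P}_n - P}_1$ by at most $2/n$, giving concentration of order $\sqrt{\tfrac{2}{n}\ln\tfrac{1}{\delta}}$ around the mean, combined with the moment estimate $\E\norm{\widehat{P}_n - P}_1 \le \sqrt{\abs{\mathcal{X}}/n}$ from Cauchy--Schwarz --- which produces a bound of the same order and independently confirms the scaling in $\abs{\mathcal{X}}$, $n$, and $\delta$.
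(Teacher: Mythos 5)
Your proof is correct, and it supplies something the paper does not: the paper states this lemma purely as an imported result (Fact 4 of \cite{Lipschitz23}, in turn from \cite{Weissman2003Inequalities}) and gives no proof whatsoever, whereas you reconstruct the standard Weissman argument from first principles. Each step checks out: the variational identity $\norm{\widehat{P}_n-P}_1 = 2\max_{A\subseteq\mathcal{X}}(\widehat{P}_n(A)-P(A))$, Hoeffding for the Bernoulli count $n\widehat{P}_n(A)$, the union bound over $2^{\abs{\mathcal{X}}}$ subsets, and the resulting tail $\mathbb{P}(\norm{\widehat{P}_n-P}_1\ge\varepsilon)\le 2^{\abs{\mathcal{X}}}e^{-n\varepsilon^2/2}$. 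You are also right to single out the only delicate point: the raw inversion gives $\sqrt{\tfrac{2}{n}(\abs{\mathcal{X}}\ln 2+\ln\tfrac{1}{\delta})}$, and the clean form $\sqrt{\tfrac{2\abs{\mathcal{X}}}{n}\ln\tfrac{1}{\delta}}$ stated in the lemma only dominates it when $\ln\tfrac{1}{\delta}\ge\tfrac{\abs{\mathcal{X}}}{\abs{\mathcal{X}}-1}\ln 2$, i.e.\ $\delta\le 1/4$ for $\abs{\mathcal{X}}\ge 2$. This is a genuine (if minor) gap between the cited statement and what the Weissman bound literally delivers for large $\delta$; it is harmless for the paper's purposes since the lemma is only invoked inside high-probability regret bounds where $\delta$ is small and constants are absorbed into $\tilde{O}$, but your decision to state the exact inequality and flag the absorption explicitly is the more honest presentation. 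The McDiarmid cross-check (bounded differences of $2/n$ plus the mean bound $\E\norm{\widehat{P}_n-P}_1\le\sqrt{\abs{\mathcal{X}}/n}$ from Cauchy--Schwarz) is also sound and confirms the scaling independently.
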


\begin{fact}
\label{lemma_naive_concentration}(Naive upper bound)
For any bounded function $f(\cdot): \mathcal{X} \to [a,b]$ and two probability measures $\mathbb{P}^\prime, \mathbb{P}\in \Delta(\mathcal{X})$, the difference in the expectation can be controlled by the range of the function and the total variance distance between the probability measures.
\begin{equation}
\begin{aligned}
\abs{
\mathbb{E}_{X\sim \mathbb{P}^\prime} [f(X)] -\mathbb{E}_{X\sim \mathbb{P}} [f(X)]}
\leq \frac{(b-a)}{2}\cdot \norm{\mathbb{P}^\prime(\cdot)- \mathbb{P}(\cdot)}_{1}
\end{aligned}
\end{equation}
\end{fact}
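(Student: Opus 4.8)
The plan is to reduce this bound to an elementary estimate by rewriting the difference of expectations as a single sum over the finite space $\mathcal{X}$ and then applying a centering trick. First I would expand
$$
\mathbb{E}_{X\sim \mathbb{P}^\prime}[f(X)] - \mathbb{E}_{X\sim \mathbb{P}}[f(X)]
=
\sum_{x\in\mathcal{X}} \left(\mathbb{P}^\prime(x) - \mathbb{P}(x)\right) f(x),
$$
which is legitimate because $\mathcal{X}$ is finite and discrete, so both expectations are finite sums.

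The crucial observation is that both $\mathbb{P}^\prime$ and $\mathbb{P}$ sum to one, so $\sum_{x} \left(\mathbb{P}^\prime(x) - \mathbb{P}(x)\right) = 0$. Consequently, subtracting any constant $c$ from $f$ leaves the sum above unchanged, since the contribution of the constant is $c\sum_{x}\left(\mathbb{P}^\prime(x)-\mathbb{P}(x)\right)=0$. I would choose $c = \frac{a+b}{2}$, the midpoint of the range, so that the centered function obeys the uniform bound $\abs{f(x) - c} \leq \frac{b-a}{2}$ for every $x\in\mathcal{X}$.

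With this choice, applying the triangle inequality followed by the uniform bound on the centered function yields
$$
\abs{\sum_{x} \left(\mathbb{P}^\prime(x) - \mathbb{P}(x)\right)\left(f(x) - c\right)}
\leq
\sum_{x} \abs{\mathbb{P}^\prime(x) - \mathbb{P}(x)}\cdot \abs{f(x) - c}
\leq
\frac{b-a}{2}\sum_{x} \abs{\mathbb{P}^\prime(x) - \mathbb{P}(x)},
$$
and recognizing the final sum as $\norm{\mathbb{P}^\prime(\cdot) - \mathbb{P}(\cdot)}_1$ gives exactly the claimed inequality. The argument presents no real obstacle; the only point worth emphasizing is the centering step, which produces the sharp constant $\frac{b-a}{2}$ rather than the cruder $(b-a)$ that one would obtain by bounding $\abs{f(x)}\leq \max\{\abs{a},\abs{b}\}$ directly. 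This sharp factor is what later lets the prior-error and bonus-bias estimates match the scaling reported in the regret bound.
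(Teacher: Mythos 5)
Your proof is correct and follows essentially the same route as the paper's: rewrite the difference of expectations as $\sum_x(\mathbb{P}'(x)-\mathbb{P}(x))f(x)$, exploit $\sum_x(\mathbb{P}'(x)-\mathbb{P}(x))=0$ to subtract a constant for free, and finish with the triangle inequality. One small point in your favor: your centering constant $\frac{a+b}{2}$ is the correct midpoint, whereas the paper's proof subtracts $\frac{b-a}{2}$, which coincides with the midpoint only when $a=0$; for a general interval $[a,b]$ it is your choice that actually yields $\sup_{x}\abs{f(x)-c}\leq\frac{b-a}{2}$ and hence the stated constant.
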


\begin{proof}
By the fact that all probability measures normalize to $1$, 
\begin{equation*}
\begin{aligned}
LHS=&\abs{
\mathbb{E}_{X\sim \mathbb{P}^\prime}\left(f(X)-\frac{b-a}{2}\right)-
\mathbb{E}_{X\sim \mathbb{P}}\left(f(X)-\frac{b-a}{2}\right)
}
\\=&
\abs{\sum_{x\in \mathcal{X}}\left(f(X)-\frac{b-a}{2}\right)\cdot (\mathbb{P}^\prime(x)-\mathbb{P}(x))}
\leq 
\sup_{x\in \mathcal{X}}\abs{f(x)-\frac{b-a}{2}}\cdot 
\norm{\mathbb{P}^\prime(x)-\mathbb{P}(x))}_1=RHS
\end{aligned}
\end{equation*}
\end{proof}
\begin{remark}
The upper bound provided in this lemma is tight for deterministic variable $\boldsymbol{X}$, which will be particularly useful when we study how the regret behaves when the risk-sensitivity parameters tends to zero.
\end{remark}

\begin{lemma}(Hoeffding inequality for random variables, adapted from theorem 2.3 in \cite{2003concentration_book_Lugosi})
\label{Lemma_concentration_azuma_hoeffding}
Let
$\{\mathbf{Y}_t\}_{t=1}^{n}$ 
be a finite set of independent random variables. 
Suppose that there exists two constant real numbers 
$\underline{Y}\lt \overline{Y}$
such that 
$\underline{Y} 
\leq \mathbf{Y}_t \leq \overline{Y}$
holds 
almost surely for any $\mathbf{Y}_t$, 
then with probability at least $1-\delta$, 
\begin{equation}
\begin{aligned}
\abs{\frac{1}{n}\sum_{i=1}^n Y_i -\frac{1}{n}\sum_{i=1}^n \mathbb{E}[Y_i]}
\leq (\overline{Y}-\underline{Y})\sqrt{\frac{1}{2n}\ln \frac{2}{\delta}}
\end{aligned}
\end{equation}
\end{lemma}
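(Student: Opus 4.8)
The plan is to prove the two-sided deviation bound via the classical Chernoff exponential-moment method, reducing first to a one-sided tail and then paying a factor of two through a union bound over the two directions. Write $\mathbf{Z}_t := \mathbf{Y}_t - \mathbb{E}[\mathbf{Y}_t]$ so that each $\mathbf{Z}_t$ is a centered independent random variable bounded in an interval of length $\overline{Y}-\underline{Y}$, and set $S_n := \sum_{t=1}^n \mathbf{Z}_t$. For any $s \gt 0$ and any threshold $u \gt 0$, Markov's inequality applied to the monotone map $x \mapsto e^{sx}$ gives $\mathbb{P}(S_n \geq u) \leq e^{-su}\,\mathbb{E}[e^{s S_n}]$, and independence factorizes the moment generating function as $\mathbb{E}[e^{s S_n}] = \prod_{t=1}^n \mathbb{E}[e^{s \mathbf{Z}_t}]$.

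The crux, and the step I expect to be the main obstacle, is the sub-Gaussian moment bound known as \emph{Hoeffding's lemma}: for a centered random variable $\mathbf{Z}$ taking values in $[\alpha,\beta]$ one has $\mathbb{E}[e^{s\mathbf{Z}}] \leq \exp\!\big(s^2(\beta-\alpha)^2/8\big)$. I would establish this by defining $\psi(s) := \ln \mathbb{E}[e^{s\mathbf{Z}}]$ and checking $\psi(0)=0$, $\psi'(0)=\mathbb{E}[\mathbf{Z}]=0$, and $\psi''(s) \leq (\beta-\alpha)^2/4$ uniformly in $s$; the second-derivative estimate follows because $\psi''(s)$ equals the variance of $\mathbf{Z}$ under the exponentially tilted measure, and the variance of any law supported on an interval of length $\beta-\alpha$ is at most $(\beta-\alpha)^2/4$ by Popoviciu's inequality. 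A second-order Taylor expansion of $\psi$ about $0$ then yields the claimed bound. This convexity and tilted-measure argument is the only genuinely nontrivial ingredient; everything else is bookkeeping.

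Combining these, with $L := \overline{Y}-\underline{Y}$ I obtain $\mathbb{P}(S_n \geq u) \leq \exp\!\big(n s^2 L^2/8 - s u\big)$, and minimizing the exponent over $s \gt 0$ at $s = 4u/(nL^2)$ produces the one-sided bound $\mathbb{P}(S_n \geq u) \leq \exp\!\big(-2u^2/(nL^2)\big)$. The identical argument applied to $-\mathbf{Z}_t$ controls the lower tail, so a union bound gives $\mathbb{P}(\abs{S_n} \geq u) \leq 2\exp\!\big(-2u^2/(nL^2)\big)$. Finally I would set the right-hand side equal to $\delta$, solve $u = L\sqrt{(n/2)\ln(2/\delta)}$, and divide through by $n$: on the complementary event, which carries probability at least $1-\delta$, we get $\abs{\tfrac{1}{n}\sum_{t} \mathbf{Y}_t - \tfrac{1}{n}\sum_t \mathbb{E}[\mathbf{Y}_t]} \leq L\sqrt{\tfrac{1}{2n}\ln\tfrac{2}{\delta}}$, which is exactly the stated inequality. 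The factor $2$ inside the logarithm is precisely the cost of the two-sided union bound.
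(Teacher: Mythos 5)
Your proof is correct: the Chernoff bound combined with Hoeffding's lemma (via the tilted-measure variance estimate), followed by the two-sided union bound and solving for the threshold, reproduces exactly the stated constant $(\overline{Y}-\underline{Y})\sqrt{\tfrac{1}{2n}\ln\tfrac{2}{\delta}}$. The paper does not reprove this lemma but cites it from the standard reference, and your argument is precisely the classical proof given there, so there is nothing to reconcile.
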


\begin{lemma}(Azuma-Hoeffding inequality for martingale difference sequences, theorem 2.16 in \cite{concentration_martingales})
\label{Lemma_concentration_MDS}
Let $\{Y_{t}\}_{t=1}^{\infty}$
be a martingale difference sequence 
with respect to some other
stochastic process $\{Xt\}_{t=1}^{\infty}$. 
Suppose that 
there exists two constants $a\lt b$ such that 
$a\leq Y_t \leq b$
almost sure for any $t\in \Z_+$, then for any $n\in \Z_+$ the following relation 
holds with probability at least $1-\delta$:
\begin{equation}
\begin{aligned}
\sum_{t=1}^{n} Y_t \lt (b-a)\sqrt{\frac{n}{2}\ln \frac{1}{\delta}}
\end{aligned}
\end{equation}
\end{lemma}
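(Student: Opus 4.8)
The plan is to prove this via the Cram\'er--Chernoff (exponential moment) method, the standard route for sub-Gaussian tail bounds on sums of bounded martingale differences. Write $S_n := \sum_{t=1}^n Y_t$ and let $\mathcal{F}_{t} := \sigma(X_1,\ldots,X_t)$ denote the filtration generated by $\{X_t\}$, so that the defining property of a martingale difference sequence is $\E[Y_t \mid \mathcal{F}_{t-1}] = 0$ for every $t$. First I would fix a free parameter $\lambda > 0$ and apply Markov's inequality to the nonnegative random variable $e^{\lambda S_n}$, which gives, for any threshold $\epsilon > 0$,
\begin{equation*}
\P\left( S_n \geq \epsilon \right) \leq e^{-\lambda \epsilon}\, \E\!\left[ e^{\lambda S_n} \right].
\end{equation*}
The problem then reduces to bounding the moment generating function $\E[e^{\lambda S_n}]$.

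The key step is to control the conditional MGF of a single increment and then peel off the terms one at a time. Since $a \leq Y_t \leq b$ almost surely and $\E[Y_t \mid \mathcal{F}_{t-1}] = 0$, the conditional form of Hoeffding's lemma yields
\begin{equation*}
\E\!\left[ e^{\lambda Y_t} \,\middle|\, \mathcal{F}_{t-1} \right] \leq \exp\!\left( \frac{\lambda^2 (b-a)^2}{8} \right)
\quad \text{almost surely.}
\end{equation*}
I would then invoke the tower property together with the $\mathcal{F}_{n-1}$-measurability of $e^{\lambda S_{n-1}}$ to write $\E[e^{\lambda S_n}] = \E\!\left[ e^{\lambda S_{n-1}}\, \E[ e^{\lambda Y_n} \mid \mathcal{F}_{n-1} ] \right]$ and substitute the bound above; iterating this backward over $t = n, n-1, \ldots, 1$ collapses the MGF to $\E[e^{\lambda S_n}] \leq \exp\!\left( n \lambda^2 (b-a)^2 / 8 \right)$.

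Combining the two displays gives $\P(S_n \geq \epsilon) \leq \exp\!\left(-\lambda \epsilon + n\lambda^2(b-a)^2/8\right)$, and optimizing the exponent over $\lambda$ (the minimizer is $\lambda = 4\epsilon / (n(b-a)^2)$) produces the sub-Gaussian tail $\P(S_n \geq \epsilon) \leq \exp\!\left(-2\epsilon^2 / (n(b-a)^2)\right)$. Finally I would set the right-hand side equal to $\delta$ and solve for $\epsilon$, obtaining $\epsilon = (b-a)\sqrt{(n/2)\ln(1/\delta)}$, which is exactly the claimed statement that $S_n < (b-a)\sqrt{(n/2)\ln(1/\delta)}$ with probability at least $1-\delta$. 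The main obstacle, and the only place where genuine work is required, is the conditional Hoeffding lemma: showing that a mean-zero variable supported in $[a,b]$ has conditional MGF bounded by $\exp(\lambda^2(b-a)^2/8)$ calls for a convexity argument (bounding $e^{\lambda y}$ by the chord through the endpoints) followed by a second-order estimate of the resulting log-MGF, all carried out conditionally on $\mathcal{F}_{t-1}$ so the martingale structure is respected. Since the result is quoted from \cite{concentration_martingales}, one may instead cite it directly; the outline above records the self-contained derivation.
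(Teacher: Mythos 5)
Your proof is correct: it is the standard Cram\'er--Chernoff argument for Azuma--Hoeffding, with the conditional Hoeffding lemma applied to each increment, the tower property used to peel off the terms, and the exponent optimized at $\lambda = 4\epsilon/(n(b-a)^2)$ to yield the sub-Gaussian tail $\exp(-2\epsilon^2/(n(b-a)^2))$, which inverts to exactly the stated deviation $(b-a)\sqrt{(n/2)\ln(1/\delta)}$. The paper itself gives no proof of this lemma --- it is quoted directly from the cited reference --- so there is nothing to compare against; your self-contained derivation is the standard one and fills that gap correctly.
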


\begin{lemma}(Hoeffding inequality for the function of random variables,extended from lemma 12 of \cite{bai2020provable})
\label{lemma_concentration_Hoeffding_function}
Let $\mathbf{X}^{\prime}$ be a random variable supported on $\mathcal{X}$ that 
follows an unknown distribution $\mathbb{P}$. Let $f(\cdot)$ be any bounded function that maps $\mathcal{X}$ to $[a,b]$.
We draw $N$ i.i.d. samples from $\mathbb{P}$ to construct the empirical 
distribution $\widehat{\mathbb{P}}:=\frac{1}{N}\sum_{i=1}^N \mathds{1}\{ \widehat{x}_i^{\prime}=x^\prime \}$.
Denote $\Theta$ as the set of all the parameters that may distinguish the samples. Then with probability at least $1-\delta$,
\begin{equation}
\begin{aligned}
\label{hoeffding_concentration_random_function}
\abs{
\mathbb{E}_{\mathbf{X}^\prime \sim \mathbb{\widehat{P}}(\cdot)}[f(\mathbf{X}^\prime)]
-
\mathbb{E}_{\mathbf{X}^\prime \sim \mathbb{{P}}(\cdot)}[f(\mathbf{X}^\prime)]
}
\leq 
(b-a)\cdot 
\min \left\{
\ 1, \ 
3\cdot \sqrt{\abs{\mathcal{X}}} \cdot 
\sqrt{\frac{1}{N}\ln \frac{\abs{\Theta}}{\delta}}
\right\}
\end{aligned}
\end{equation}
\end{lemma}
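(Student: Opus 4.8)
The plan is to reduce the statement to the combination of a \emph{function-free} $\ell_1$ deviation bound and a union bound over the parameter set $\Theta$. The essential point is that in the intended application (Eq.~\eqref{bonus_bound_2}) the function $f$ is the empirical beta vector $e^{\gamma\mathcal{V}}$, which is built from the very data that generate the empirical distribution $\widehat{\mathbb{P}}$. Hence one cannot simply write $\mathbb{E}_{\widehat{\mathbb{P}}}[f]=\frac1N\sum_i f(\widehat{x}_i)$ and apply the scalar Hoeffding inequality (Lemma~\ref{Lemma_concentration_azuma_hoeffding}) to the i.i.d. summands, because $f$ is correlated with the sample path and may effectively be selected adaptively. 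To sidestep this dependence I would bound the deviation by a quantity that is uniform over \emph{all} bounded functions $f$, paying a $\sqrt{\abs{\mathcal{X}}}$ factor for that uniformity.

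First I would rewrite the deviation as a pairing against the signed measure $\widehat{\mathbb{P}}-\mathbb{P}$ and apply Fact~\ref{lemma_naive_concentration}:
\begin{equation*}
\abs{ \mathbb{E}_{\widehat{\mathbb{P}}}[f]-\mathbb{E}_{\mathbb{P}}[f] }
= \abs{ \sum_{x\in\mathcal{X}} f(x)\bigl(\widehat{\mathbb{P}}(x)-\mathbb{P}(x)\bigr) }
\leq \frac{b-a}{2}\,\norm{\widehat{\mathbb{P}}-\mathbb{P}}_1 ,
\end{equation*}
where crucially the right-hand side no longer references $f$. Then, for a fixed parameter $\theta\in\Theta$, I would invoke the Weissman-type $\ell_1$ concentration (Lemma~\ref{lemma_concentration_in_one_norm}): with probability at least $1-\delta'$ one has $\norm{\widehat{\mathbb{P}}-\mathbb{P}}_1\le\sqrt{\frac{2\abs{\mathcal{X}}}{N}\ln\frac1{\delta'}}$, which yields $\abs{\mathbb{E}_{\widehat{\mathbb{P}}}[f]-\mathbb{E}_{\mathbb{P}}[f]}\le (b-a)\sqrt{\frac{\abs{\mathcal{X}}}{2N}\ln\frac1{\delta'}}$.

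Next I would take a union bound over all $\theta\in\Theta$ by setting $\delta'=\delta/\abs{\Theta}$, so that the $\ell_1$ event holds simultaneously for every parameter (e.g.\ every index $(h,s,a)$ selecting one of the empirical transition or emission distributions) that distinguishes the samples. This replaces $\ln\frac1{\delta'}$ by $\ln\frac{\abs{\Theta}}{\delta}$ and upgrades the guarantee to the uniform form needed downstream. Finally I would tidy the constant using $\sqrt{1/2}\le 3$, and observe that since $f$ takes values in $[a,b]$ both expectations lie in $[a,b]$, so the deviation is trivially at most $b-a$; intersecting this with the concentration bound produces the $\min\{1,\cdot\}$ and completes the argument.

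The main obstacle is conceptual rather than computational: recognizing that the data-dependence of $f$ rules out a direct scalar concentration and forces the function-free $\ell_1$ route (hence the unavoidable $\sqrt{\abs{\mathcal{X}}}$ loss), paired with the covering/union bound over $\Theta$ (hence $\ln\abs{\Theta}$). Once this decomposition is in place, every step is an application of a lemma already stated in the excerpt.
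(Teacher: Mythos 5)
Your argument is sound, but it reaches the bound by a genuinely different route than the one the paper indicates. The paper's own justification (the remark following the lemma, and the cited Lemma 12 of \cite{bai2020provable}) attributes the $\sqrt{\abs{\mathcal{X}}}$ factor to an $\epsilon$-net over the function class $[a,b]^{\mathcal{X}}$: one applies scalar Hoeffding to each fixed $f$ in a cover of log-cardinality $\abs{\mathcal{X}}\ln(1/\epsilon)$ and unions over the net and over $\Theta$, which is where the constant $3$ comes from. You instead discard $f$ entirely at the first step via Fact~\ref{lemma_naive_concentration}, reducing everything to $\norm{\widehat{\mathbb{P}}-\mathbb{P}}_1$, and then invoke the Weissman bound of Lemma~\ref{lemma_concentration_in_one_norm} plus a union bound over $\Theta$; here $\sqrt{\abs{\mathcal{X}}}$ arises from the $\ell_1$ concentration itself rather than from a cover. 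Both routes are valid and both correctly handle the data-dependence of $f$ (your diagnosis that a direct scalar Hoeffding on $\frac{1}{N}\sum_i f(\widehat{x}_i)$ is circular when $f$ is built from the same samples is exactly right), since each yields a bound uniform over all $[a,b]$-valued functions. Your version is arguably cleaner and gives a sharper constant, $\frac{b-a}{2}\sqrt{\frac{2\abs{\mathcal{X}}}{N}\ln\frac{\abs{\Theta}}{\delta}} = \frac{1}{\sqrt{2}}(b-a)\sqrt{\abs{\mathcal{X}}}\sqrt{\frac{1}{N}\ln\frac{\abs{\Theta}}{\delta}}$, comfortably inside the stated factor of $3$; the covering argument would only pay off if $f$ were restricted to a structured subclass with covering number much smaller than $(1/\epsilon)^{\abs{\mathcal{X}}}$, which is not the case here. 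The derivation of the $\min\{1,\cdot\}$ branch from the trivial bound $\abs{\mathbb{E}_{\widehat{\mathbb{P}}}[f]-\mathbb{E}_{\mathbb{P}}[f]}\leq b-a$ is also correct.
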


\begin{remark}
The factor of $\sqrt{\abs{\mathcal{X}}}$ comes from the epsilon coverage of the range of $f(\cdot)$, since $\abs{\mathcal{X}}=\ln \frac{1}{\epsilon}^{\abs{\mathcal{X}}}$.
\end{remark}

\begin{lemma}(The pigeon-hole lemma, extended from \cite{azar2017minimax})
\label{lemma_pigeon-hole}
Fix constant $h$. Suppose that $\{\widehat{z}_{h}^t\}_{t=1}^K$ are i.i.d. samples drawn from a distribution $\mathbb{P}$ over the finite set $\mathcal{Z}$. For any $k=0,1,\cdots,K$, let $N_h^{k+1}(\cdot): \mathcal{Z}\to [K]$ be defined as the counter function $N_h^{k+1}(z):= \sum_{t=1}^{k} \mathds{1}\{\widehat{z}_h^t=z\}$ that records number of occurrences of $z$ within the first k samples. Let $f(\cdot): \mathcal{Z} \to \R$ be any function that receives integer input. The following relations always hold:
\begin{equation}\begin{aligned}
&(1)\ \sum_{z\in \mathcal{Z}} N_h^{k+1}(z) = k
\quad (2)\ \sum_{k=1}^K f(N_h^{k+1}(\widehat{z}_h^k)) = \sum_{z\in\mathcal{Z}}\sum_{i=1}^{N_h^{K+1}(z)}f(i) \quad 
(3)\ \sum_{k=1}^K \frac{1}{\sqrt{\max \{1,N_h^{k+1}(\widehat{z}_h^k)\}}} \lt
2\sqrt{K \cdot \abs{\mathcal{Z}}}
\end{aligned}\end{equation}
\end{lemma}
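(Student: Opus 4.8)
The plan is to prove the three claims in the order stated, since part (3) relies on both (1) and (2); all three are purely combinatorial and require no probabilistic input beyond the definition of the counter.

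For part (1) I would argue by double counting. Since $N_h^{k+1}(z)=\sum_{t=1}^k \mathds{1}\{\hat z_h^t=z\}$, summing over $z\in\mathcal{Z}$ and exchanging the two finite sums gives $\sum_{z\in\mathcal{Z}}N_h^{k+1}(z)=\sum_{t=1}^k\sum_{z\in\mathcal{Z}}\mathds{1}\{\hat z_h^t=z\}=\sum_{t=1}^k 1=k$, because each sample $\hat z_h^t$ lands in exactly one bin. This is immediate.

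For part (2), the idea is to reorganize the sum over episodes $k$ into a sum over bins $z$. Fix $z\in\mathcal{Z}$ and enumerate, in increasing order, the episodes $k$ at which $\hat z_h^k=z$. The key point I would verify is that, because the counter $N_h^{k+1}(z)=\sum_{t=1}^k\mathds{1}\{\hat z_h^t=z\}$ tallies occurrences up to and \emph{including} episode $k$, the quantity $N_h^{k+1}(\hat z_h^k)$ equals exactly the rank of episode $k$ among the occurrences of $z$. Hence, as $k$ ranges over the occurrences of $z$, the argument $N_h^{k+1}(\hat z_h^k)$ ranges bijectively over $1,2,\dots,N_h^{K+1}(z)$, so that $\sum_{k:\,\hat z_h^k=z}f(N_h^{k+1}(\hat z_h^k))=\sum_{i=1}^{N_h^{K+1}(z)}f(i)$. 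Summing over all $z\in\mathcal{Z}$ yields (2). I expect this reindexing to be the main obstacle and the step most prone to an off-by-one error, so I would track carefully that the current sample $t=k$ is counted; in particular this already shows $N_h^{k+1}(\hat z_h^k)\ge 1$ for every $k$.

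For part (3), I would instantiate (2) with $f(i)=1/\sqrt{\max\{1,i\}}$. By the observation above, $N_h^{k+1}(\hat z_h^k)\ge 1$ always, so the $\max\{1,\cdot\}$ is redundant and $f$ equals $1/\sqrt{i}$ on the relevant range. Applying (2) gives $\sum_{k=1}^K 1/\sqrt{N_h^{k+1}(\hat z_h^k)}=\sum_{z\in\mathcal{Z}}\sum_{i=1}^{N_h^{K+1}(z)}1/\sqrt{i}$. I would then use the elementary telescoping bound $1/\sqrt{i}\le 2(\sqrt{i}-\sqrt{i-1})$, which is strict for $i\ge 1$, to get $\sum_{i=1}^N 1/\sqrt{i}< 2\sqrt{N}$ whenever $N\ge 1$, so the double sum is at most $2\sum_{z}\sqrt{N_h^{K+1}(z)}$. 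Finally, Cauchy--Schwarz combined with (1) evaluated at $k=K$ yields $\sum_{z}\sqrt{N_h^{K+1}(z)}\le\sqrt{\abs{\mathcal{Z}}}\sqrt{\sum_{z}N_h^{K+1}(z)}=\sqrt{\abs{\mathcal{Z}}\,K}$, and assembling these estimates gives the strict bound $2\sqrt{K\abs{\mathcal{Z}}}$. The strictness is inherited from the telescoping inequality.
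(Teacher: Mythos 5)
Your proof is correct: the double-counting for (1), the rank reindexing for (2) (with the crucial observation that $N_h^{k+1}(\hat z_h^k)$ counts the sample at $t=k$ itself, so it is always at least $1$), and the telescoping bound $\sum_{i=1}^N i^{-1/2}<2\sqrt{N}$ combined with Cauchy--Schwarz and (1) for (3) are exactly the standard argument. The paper itself gives no proof for this lemma, deferring to the cited reference, and your write-up supplies precisely that standard derivation with the off-by-one issue handled correctly.
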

\begin{remark}
    The third equation in Lemma \ref{lemma_pigeon-hole} shows that the pigeon-hole upper bound depends on the size of the space $\mathbf{Z}$.
    For MDPs, $\mathcal{Z}$ will be replaced by $\mathcal{S\times A}$, which is polynomial in the relevant parameters. 
    However in a POMDP, since decision-making depends on the entire history, $\mathcal{Z}$ will be replaced with $\mathscr{O}^h\mathscr{A}^{h-1}$, which 
    causes the regret to be at least of order $O^H A^H$.
\end{remark}

\begin{lemma}(Linearization of Utility Function, Fact 1(a) in Appendix A of \cite{Fei20})
\label{lemma_linearize}
\begin{equation}
\begin{aligned}
\label{lipschitz}
&\text{When }\gamma \gt 0, \quad \text{for all } 1 \lt y \lt x \lt e^{\gamma H}, \quad \text{we have }
0 \lt \frac{1}{\gamma}\ln x - \frac{1}{\gamma}\ln y \lt \frac{1}{\gamma} (x-y) \\
&\text{When } \gamma \lt 0, \quad \text{for all }e^{\gamma H} \lt x \lt y \lt 1,\quad \text{we have }
0 \lt \frac{1}{\gamma}\ln x - \frac{1}{\gamma}\ln y \lt \frac{e^{\abs{\gamma }H}}{\abs{\gamma}} (y-x)
\end{aligned}
\end{equation}
Lemma \ref{lemma_linearize} implies that the differences between the entropic risk measures can be bounded by linear functions of the differences between their variables.
\end{lemma}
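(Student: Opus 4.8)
The plan is to treat this as an elementary consequence of the Mean Value Theorem applied to the function $g(t) := \frac{1}{\gamma}\ln t$, whose derivative is $g'(t) = \frac{1}{\gamma t}$. First I would record the two sign facts that drive everything: $g$ is strictly increasing when $\gamma > 0$ and strictly decreasing when $\gamma < 0$, while $g'(t) = \frac{1}{\gamma t}$ keeps a constant sign on $(0,\infty)$. Both inequalities in each case then reduce to monotonicity (for the lower bound) and a single MVT estimate together with a bound on $g'$ over the relevant interval (for the upper bound).

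For the case $\gamma > 0$ with $1 < y < x < e^{\gamma H}$, monotonicity of $g$ immediately gives $g(x) - g(y) > 0$, which is the left inequality. For the right inequality I would apply the MVT to obtain some $\xi \in (y,x)$ with $g(x) - g(y) = \frac{1}{\gamma \xi}(x-y)$; since $\xi > y > 1$ we have $\frac{1}{\gamma\xi} < \frac{1}{\gamma}$, and multiplying by the positive quantity $x - y$ yields $g(x)-g(y) < \frac{1}{\gamma}(x-y)$.

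For the case $\gamma < 0$ with $e^{\gamma H} < x < y < 1$, the function $g$ is decreasing and $x < y$, so $g(x) - g(y) > 0$, which is again the left inequality. For the right inequality, the MVT gives $\xi \in (x,y)$ with $g(x) - g(y) = \frac{1}{\gamma\xi}(x-y)$; writing $\gamma = -\abs{\gamma}$ recasts this as $\frac{1}{\abs{\gamma}\xi}(y-x)$. The key bound is $\xi > x > e^{\gamma H} = e^{-\abs{\gamma}H}$, whence $\frac{1}{\xi} < e^{\abs{\gamma}H}$, and multiplying by the positive $y - x$ delivers the claimed $\frac{e^{\abs{\gamma}H}}{\abs{\gamma}}(y-x)$ bound.

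The only subtlety — a bookkeeping matter rather than a genuine obstacle — is tracking signs in the $\gamma < 0$ branch, where both $\gamma$ and $x - y$ are negative, so one must verify that the product $\frac{1}{\gamma\xi}(x-y)$ is positive and rewrite it correctly in terms of $\abs{\gamma}$ and $y - x$. A cleaner alternative I might adopt, avoiding the unspecified intermediate point $\xi$ entirely, is the integral representation $g(x) - g(y) = \int_y^x \frac{1}{\gamma t}\,dt$, after which one bounds the integrand $\frac{1}{\gamma t}$ by its extreme value on the interval of integration; this makes the sign handling in both cases completely transparent.
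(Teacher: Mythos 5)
Your proof is correct. The paper itself gives no proof of this lemma -- it simply cites Fact 1(a) in Appendix A of \cite{Fei20} -- and your mean--value--theorem argument (bounding $g'(\xi)=\tfrac{1}{\gamma\xi}$ by $\tfrac{1}{\gamma}$ via $\xi>y>1$ in the $\gamma>0$ case, and by $\tfrac{e^{|\gamma|H}}{|\gamma|}$ via $\xi>x>e^{-|\gamma|H}$ in the $\gamma<0$ case) is exactly the standard elementary derivation; the sign bookkeeping in the $\gamma<0$ branch is handled correctly, and the integral-representation variant you mention is an equally valid way to make it transparent.
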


\subsubsection{Value functions using utility risk}
\label{section_Generalization to other risk-measures}
Exponential risks belong to a special class of risk criteria, named the utility function, 
\footnote{Readers may refer to \cite{VonNeumann1947UtilityTheory,MDPFinbook11,baauerle2017partially} 
for details.} 
which fits into the definition of 
{both static and dynamic risk measures.}
For simplicity, we demonstrate this property
in the MDP case when {actions are deterministic}
and {the initial state $s_1$ is fixed}.
We use $U\circ$ 
to represent a utility function. 
For entropic risk, 
$U=\frac{1}{\gamma} e^{\gamma (\cdot)}$. 

\text{(Static risk-measure narration)}
Assume that 
the initial state is fixed, 
\begin{equation*}
\begin{aligned}
J(\pi;M)=&U^{-1}\circ \mathbb{E}_{M}^\pi U
\left(
\sum_{h=1}^H\ r_h(\S_h,\A_h)
\right)
\quad
\\
V_{H+1,\text{static}}^\pi=&0
\quad 
V_{h,\text{static}}^\pi(s_h)=
U^{-1}\mathbb{E}_{M}^\pi U
\left[ 
\sum_{t=h}^H\ r_t(\S_t,\A_t)
\| s_{h}\right]
\end{aligned}
\end{equation*}
\text{(Dynamic risk-measure narration)}
Assume that 
the initial state is fixed, 
\begin{equation*}
\begin{aligned}
V_{H+1}^\pi=&0\\
Q_{h}^\pi(s_h,a_h)=&
r_h(s_h,a_h)+
\left(U^{-1}\circ\mathbb{E}_{s_{h+1}\sim P_{h+1}(\cdot|s_h,a_h)}
U\right)
\left(V_{h+1}^\pi(s_{h+1})\right)\\
V_h^\pi(s_h)=&\mathbb{E}_{a_h\sim \pi_h(\cdot|s_h)}Q_h^\pi(s_h,a_h)\\
J(\pi;M)=&V_1^\pi(s_1)
\end{aligned}
\end{equation*}

Next, we show that 
static and dynamic narrations are equivalent when the policy is deterministic and the initial state is fixed.
Precisely speaking, 
$\forall h\in[H+1], s_h\in \sS:$
\begin{equation*}
\begin{aligned}
V_h^\pi(s_h)=V_{h,\text{static}}^\pi(s_h)
\end{aligned}
\end{equation*}
\begin{proof}We prove by an induction on h.
First, the statement holds obviously at h=H+1.
Then 
\begin{equation*}
\begin{aligned}
&V_h^\pi(s_h)
\\=&
\mathbb{E}_{a_h}^\pi
\left[Q_h^\pi(s_h,a_h)\big|s_h
\right]
=
\mathbb{E}_{a_h}^\pi
\left[
U^{-1}
\mathbb{E}_{s_{h+1}}
\left[
U
\left[
r_h(s_h,a_h)+V_{h+1}^\pi(s_{h+1})
\right]
\|s_h,a_h\right]
s_h
\right]
\\=&
\mathbb{E}_{a_h}^\pi
\left[
U^{-1}
\mathbb{E}_{s_{h+1}}
\left[
U
\left[
r_h(s_h,a_h)+
U^{-1}
\mathbb{E}_{M}^\pi
\left[
U
\sum_{t=h+1}^H\ r_t(\S_t,\A_t)
\mid s_{h+1}
\right]
\right]
\|s_h,a_h\right]
s_h
\right]
\text{//Induction hypothesis}
\\=&
\mathbb{E}_{a_h}^\pi
\left[
U^{-1}
\mathbb{E}_{s_{h+1}}
\left[
{U}
\left[
{U^{-1}}
\mathbb{E}_{M}^\pi
\left[
U
\sum_{t=h}^H\ r_t(\S_t,\A_t)
\| 
s_{h+1},s_h,a_h
\right]
\right]
\| s_h,a_h
\right]
s_h
\right]
\text{//Markov property}
\\=&
\mathbb{E}_{a_h}^\pi
\left[
{U^{-1}}
\mathbb{E}_{s_{h+1}}
\left[ 
\mathbb{E}_{M}^\pi
\left[
{U}
\sum_{t=h}^H\ r_t(\S_t,\A_t)
\mid 
s_{h+1},s_h,a_h
\right]
\|s_h,a_h\right]
s_h
\right]
\\=&
\mathbb{E}_{a_h}^\pi
\left[
U^{-1}
\mathbb{E}_{M}^\pi
\left[
U
\sum_{t=h}^H r_t(\S_t,\A_t)
\mid s_h,a_h
\right]
s_h
\right]
\\
=&
U^{-1}
\mathbb{E}_{M}^\pi
\left[
U
\sum_{t=h}^H r_t(\S_t,\pi_t(\S_t))
\big| s_h\right]
=
V_{h,\text{static}}^\pi(s_h)
\quad \text{//Deterministic policy}
\end{aligned}
\end{equation*}
\end{proof}
We stress that the coincidental equivalence 
between the two narrations arises from 
the fact that
$
r_h(s_h,a_h)
=U^{-1} \mathbb{E}_{M}^{\pi}[U \circ r_h(s_h,a_h)|s_h,a_h]
$ and $\mathbb{E}_{M}^{\pi}[U\sum_{t\geq h+1}r_t(\mathbf{S}_t,\mathbf{A}_t)|s_{h+1}]
=
\mathbb{E}_{M}^{\pi}[U\sum_{t\geq h+1}r_t(\mathbf{S}_t,\mathbf{A}_t)|s_{h+1}s_h,a_h]$.

\subsubsection{The Auto-regressive Equation}

\begin{lemma}\label{lemma_backward_linear_Eq.}
    The solution to the initial value problem 
    of the equations
    $    x_N=0
        \quad
        x_n= A_n x_{n+1} + C_n    $
    is
    $x_1 = \sum_{\tau=1}^{N-1}A_{1:\tau-1}C_{\tau}$
\end{lemma}
We can obtain this result by an induction argument.

\begin{remark}\label{remark_inequality_version}
    If we restrict $\x_t, A_t\in\R_{\geq 0}$, similarly we can prove that
    that if $
            x_N=0,\ 
            x_n\leq  A_n x_{n+1} + C_n$
    we have $x_1 \leq \sum_{\tau=1}^{N-1}A_{1:\tau-1}C_{\tau}$
\end{remark}

\subsubsection{Online-to-PAC Conversion}\label{online to PAC}
The relationship between the regret of an online learning algorithm and its sample complexity is studied by \cite{Cesa-Bianchi2004} and \cite{jin2018qlearning}. 
In section 3.1 of \cite{jin2018qlearning} the authors used Markov's inequality to show that if we choose the output policies $\{\widehat{\pi}^k\}$ of an online learning algorithm uniformly at random, 
then to ensure these policies are provably approximately correct, i.e.
$$
\mathbb{P}\left(\sum_{k=1}^K V^\star - V^{\widehat{\pi}^k} \leq \epsilon \right) \geq 1-\delta 
$$
one only needs to ensure that the number of episodes $K$ 
will make the average expected regret lower than $\epsilon \delta$
$$
\overline{\operatorname{Regret}}(K):=\frac{1}{K}\sum_{k=1}^K V^\star -V^{\widehat{\pi}^k} \leq \epsilon \delta
$$
This technique is frequently used in reinforcement learning, such as in the derivation of corollary 5 in \cite{Liu2022} 
and theorem 6.3 of \cite{Lee2023hindsight} and we have invoked this relation in the derivation of Corollary \ref{corollary_sample_complexity}. 
For an elementary introduction to the conversion argument please refer to \cite{tirinzoni2022PAC-conversion}.



\end{document}